\documentclass[11pt]{article}

\usepackage{amsmath,amssymb,amsthm,hyperref}
\usepackage[numbers]{natbib}
\usepackage{algorithm, algorithmic}
\usepackage{enumerate}
\usepackage{fullpage}
\usepackage{bandits}
\usepackage{color}
\usepackage{graphicx}

\newcommand{\kl}[2]{\mathop{\rm KL}\left({#1} |\!| {#2}\right)}

\newcommand{\comment}[1]{}

\definecolor{brick}{rgb}{.5,0,.1}

\begin{document}

\begin{center}
{\LARGE{{\bf{Oracle inequalities for computationally adaptive model
        selection}}}}

\vspace*{.2in}

\begin{tabular}{ccc}
  Alekh Agarwal$^\dagger$
&
  Peter L.\ Bartlett$^{\star, \dagger, \ddag}$
&
  John C.\ Duchi$^\dagger$ \\
  \texttt{alekh@eecs.berkeley.edu} & 
  \texttt{bartlett@stat.berkeley.edu} &
  \texttt{jduchi@eecs.berkeley.edu}
\end{tabular}

\vspace*{.2in}

\begin{tabular}{cc}
  Department of Statistics$^\star$, and & Department of Mathematical Sciences$^\ddag$\\
  Department of EECS$^\dagger$, & Queensland University of Technology\\
  University of California, Berkeley, CA USA &   Brisbane, Australia 
\end{tabular}

\vspace*{.2in}

\today

\vspace*{.2in}

\begin{abstract}
  We analyze general model selection procedures using penalized
  empirical loss minimization under computational constraints. While
  classical model selection approaches do not consider computational
  aspects of performing model selection, we argue that any practical
  model selection procedure must not only trade off estimation and
  approximation error, but also the computational
  effort required to compute empirical minimizers for different
  function classes. We provide a framework for analyzing such
  problems, and we give algorithms for model selection under a
  computational budget. These algorithms satisfy oracle inequalities
  that show that the risk of the selected model is not much worse than
  if we had devoted all of our computational budget to the optimal
  function class.
\end{abstract}

\end{center}

\section{Introduction}

In decision-theoretic statistical settings, one receives samples
$\{\samp_1, \ldots, \samp_n\} \subseteq \samplespace$ drawn
i.i.d.\ from some unknown distribution $P$ over a sample space
$\samplespace$, and given a loss function $\loss$, seeks a function
$f$ to minimize the risk
\begin{equation}
  \risk(f) \defeq \E[\loss(\samp, f)].
  \label{eqn:risk-def}
\end{equation}
Since $\risk(f)$ is unknown, the typical approach is to compute
estimates based on the empirical risk, $\emprisk[n](f) \defeq \ninv
\sum_{i=1}^n \loss(\samp_i, f)$, over a function class $\F$. Through
this, one seeks a function $f_n$ with a risk close to the Bayes risk,
the minimal risk over all measurable functions, which is $\risklb
\defeq \inf_f \risk(f)$. There is a natural tradeoff based on the
class $\F$ one chooses, since
\begin{equation*}
  \risk(f_n) - \risklb = \left(\risk(f_n) - \inf_{f \in \F} \risk(f)\right)
  + \left(\inf_{f \in \F} \risk(f) - \risklb\right),
\end{equation*}
which decomposes the excess risk of $f_n$ into estimation error (left) and
approximation error (right).


A common approach to addressing this tradeoff is to express $\F$ as a
union of classes

\begin{equation}
\F = \bigcup_{j \geq 1} \F_j.
\label{eqn:funclassdecomp}
\end{equation}
The {\em model selection problem} is to choose a class $\F_i$ and a
function $f\in\F_i$ that give the best tradeoff between estimation
error and approximation error. A standard approach to the model
selection problem is the now classical idea of \emph{complexity
  regularization}, which arose out of early works by \citet{Mallows73}
and \citet{Akaike74}. The complexity regularization approach balances
two competing objectives: the minimum empirical risk of a model class
$\F_i$ (approximation error) and a complexity penalty (to control
estimation error) for the class. Different choices of the complexity
penalty give rise to different model selection criteria and algorithms
(for example, see the lecture notes by \citet{Massart03} and the
references therein).  The complexity regularization approach uses
penalties $\pen_i : \N \rightarrow \R_+$ associated with each class
$\F_i$ to perform model selection, where $\pen_i(n)$ is a complexity
penalty for class $i$ when $n$ samples are available; usually the
functions $\pen_i$ decrease to zero in $n$ and increase in the index
$i$.  The actual algorithm is as follows: for each $i$, choose
\begin{equation}
  \femp{i} \in \argmin_{f \in \F_i} \emprisk[n](f) ~~~
  \mbox{and~select} ~~~ \widetilde{f}_n = \argmin_{i = 1, 2, \ldots}
  \left\{\emprisk[n](\femp{i}) + \pen_i(n) \right\}
  \label{eqn:model-selection}
\end{equation}
as the output of the model selection procedure, where
$\emprisk[n]$ denotes the $n$-sample empirical risk.
Results of several
authors~\cite{BartlettBoLu02, LugosiWe04, Massart03} show that with
appropriate penalties $\pen_i$ and given a dataset
of size $n$, the output $\widetilde{f}_n$ of the procedure roughly satisfies
\begin{equation}
  \E \risk(\widetilde{f}_n) - \risklb
  \le \min_i \left[ \inf_{f \in \F_i} R(f) - \risklb + \pen_i(n)
    \right] + \order\left(\frac{1}{\sqrt{n}}\right).
  \label{eqn:complexity-regularization-guarantee}
\end{equation}
Several approaches to complexity regularization are possible, and an
incomplete bibliography includes the papers~\cite{VapnikCh74, GemanHw82,
  Rissanen83, Barron91, BartlettBoLu02,LugosiWe04}.

Oracle inequalities of the
form~\eqref{eqn:complexity-regularization-guarantee} show that, for a
given sample size, complexity regularization procedures trade off the
approximation and estimation errors, often optimally~\cite{Massart03}.
A drawback of the above approaches is that in order to provide
guarantees on the result of the model selection procedure, one needs
to be able to optimize over each model in the hierarchy (that is,
compute the estimates $\femp{i}$ for each $i$). This is reasonable
when the sample size $n$ is the key limitation, and it is
computationally feasible when $n$ is small and the samples $\samp$ are
low-dimensional. However, the cost of fitting a large number of model
classes on a large, high-dimensional dataset can be
prohibitive; such data is common in modern statistical settings.  In
such cases, it is the computational resources---rather than the sample
size---that form the key inferential bottleneck.  In this paper, we
consider model selection from this computational perspective, viewing
the amount of computation, rather than the sample size, as the
quantity whose effects on estimation we must understand. Specifically,
we study model selection methods that work within a given
computational budget.

An interesting and difficult aspect of the problem that we must
address is the interaction between model class complexity and
computation time. It is natural to assume that for a fixed sample
size, it is more expensive to estimate a model from a complex class
than a simple class. Put inversely, given a computational bound, a
simple model class can fit a model to a much larger sample size than a
rich model class. So any strategy for model selection under a
computational constraint should trade off two criteria: (i) the
relative training cost of different model classes, which allows
simpler classes to receive far more data (thus making them resilient
to overfitting), and (ii) lower approximation error in the more
complex model classes.

In addressing these computational and statistical issues, this paper
makes two main contributions. First, we propose a novel computational
perspective on the model selection problem, which we believe should be
a natural consideration in statistical learning problems. Secondly,
within this framework, we provide algorithms for model selection in
many different scenarios, and provide oracle inequalities on their
estimates under different assumptions. Our first two results address
the case where we have a model hierarchy that is ordered by inclusion,
that is, $\F_1 \subseteq \F_2 \subseteq \F_3 \subseteq \ldots$. The
first result provides an inequality that is competitive with an oracle
knowing the optimal class, incurring at most an additional logarithmic
penalty in the computational budget. The second result extends our
approach to obtaining faster rates for model selection under
conditions that guarantee sharper concentration results for empirical
risk minimization procedures; oracle inequalities under these
conditions, but without computational constraints, have been obtained,
for example, by~\citet{Bartlett08} and \citet{Koltchinskii06a}.
Both of our results refine existing
complexity-regularized risk minimization techniques by a careful
consideration of the structure of the problem. Our third result
applies to model classes that do not necessarily share any common
structure. Here we present a novel algorithm---exploiting techniques
for multi-armed bandit problems---that uses confidence bounds based on
concentration inequalities to select a good model under a given
computational budget. We also prove a minimax optimal oracle
inequality on the performance of the selected model. All of our
algorithms are computationally simple and efficient.

The remainder of this paper is organized as follows. We begin in
Section~\ref{sec:nesting} by formalizing our setting for a nested
hierarchy of models, providing an estimator and oracle inequalities
for the model selection problem. In Section~\ref{sec:nesting-fast}, we
refine our estimator and its analysis to obtain fast rates for model
selection under some additional reasonable (standard) conditions.  We
study the setting of unstructured model collections in
Section~\ref{sec:bandits}. Detailed technical arguments and various
auxilliary results needed to establish our main theorems and
corollaries can be found in the appendices.


\section{Model selection over nested hierarchies}
\label{sec:nesting} 

In many practical scenarios, the family of models with which one works
has some structure. One of the most common model selection settings
has the model classes $\F_i$ ordered by inclusion with increasing
complexity (e.g.~\cite{BartlettBoLu02}). In this section, we study
such model selection problems; we begin by formally stating our
assumptions and giving a few natural examples, proceeding thereafter
to oracle inequalities for a computationally efficient model selection
procedure.

\subsection{Assumptions}

Our first main assumption is a natural inclusion assumption, which is
perhaps the most common assumption in prior work on model selection
(e.g.~\cite{BartlettBoLu02,LugosiWe04}):
\begin{assumption}
  \label{assumption:inclusion}
  The function classes $\F_i$ are ordered by inclusion:
  \begin{equation}
    \label{eqn:inclusion}
    \F_1 \subseteq \F_2 \subseteq \F_3 \subseteq \ldots 
  \end{equation}
\end{assumption}
\noindent
We provide two examples of such problems in the next section.
In addition to the inclusion assumption, we make a few assumptions on
the computational aspects of the problem.  Most algorithms used in the
framework of complexity regularization rely on the computation of
estimators of the form
\begin{equation}
  \what{f}_i = \argmin_{f \in \F_i} \emprisk[n](f),
  \label{eqn:nobudget}
\end{equation}
either exactly or approximately, for each class $i$. Since the model
classes are ordered by inclusion, it is natural to assume that the
computational cost of computing an empirical risk minimizer from
$\F_i$ is higher than that for a class $\F_j$ when $i > j$. Said
differently, given a fixed computational budget $T$, it may be
impossible to use as many samples to compute an estimator from $\F_i$
as it is to compute an estimator from $\F_j$ (again, when $i > j$).
We formalize this in the next assumption, which is stated in terms of
an (arbitrary) algorithm $\alg$ that selects functions $f \in \F_i$
for each index $i$ based on a set of $n_i$ samples.
\begin{assumption}
  Given a computational budget $T$, there is a sequence
  $\{n_i(T)\}_i \subset \N$ such that
  \begin{enumerate}[(a)]
  \item $n_i(T) > n_j(T)$ for $i < j$.
  \item The complexity penalties $\pen_i$ satisfy $\pen_i(n_i(T)) <
    \pen_j(n_j(T))$ for $i < j$.
  \item
    For each class $\F_i$, the computational cost of using the
    algorithm $\alg$ with $n_i(T)$ samples is $T$. That is, estimation
    within class $\F_i$ using $n_i(T)$ samples has the same
    computational complexity for each $i$.
  \item For all $i$, the output $\falg{i}{T}$ of the algorithm $\alg$, given a
    computational budget $T$, satisfies
    \begin{equation*}
      \emprisk[n_i(T)](\falg{i}{T}) - \inf_{f \in \F_i}
      \emprisk[n_i(T)](f) \leq \pen_i(n_i(T)). 
    \end{equation*}
  \item As $i \uparrow \infty$, $\pen_i(n) \rightarrow \infty$ for any
    fixed $n$.
  \end{enumerate}
  \label{assumption:budget}
\end{assumption}

The first two assumptions formalize a natural notion of computational
budget in the context of our model selection problem: given equal
computation time, a simpler model can be fit using a larger number of
samples than a complex model. Assumption~\ref{assumption:budget}(c)
says that the number of samples $n_i(T)$ is chosen to roughly equate
the computational complexity of estimation within each class.
Assumption~\ref{assumption:budget}(d) simply states that we compute
approximate empirical minimizers for each class $\F_i$.  Our choice of
the accuracy of computation to be $\pen_i$ in part (d) is done mainly
for notational convenience in the statements of our results; one could
use an alternate constant or function and achieve similar results.
Finally part (e) rules out degenerate cases where the penalty function
asymptotes to a finite upper bound, and this assumption is required
for our estimator to be well-defined for infinite model
hierarchies. In the sequel, we use the shorthand $\pen_i(T)$ to denote
$\pen_i(n_i(T))$ when the number of samples $n_i(T)$ is clear from
context.

Certainly many choices are possible for the penalty functions
$\pen_i$, and work studying appropriate penalties is classical (see
e.g.~\cite{Akaike74,Mallows73}).  Our focus in this paper is on
complexity estimates derived from concentration inequalities, which
have been extensively studied by a number of
researchers~\cite{BartlettBoLu02, Massart03,BarronBiMa99,Bartlett08,
  Koltchinskii06a}. Such complexity estimates are convenient since
they ensure that the penalized empirical risk bounds the true risk
with high probability. Formally, we have
\begin{assumption}
  \label{assumption:uniformity}
  For all $\epsilon > 0$ and for each $i$, there are constants
  $\const_1, \const_2 > 0$ such that for any budget $T$ the output
  $\falg{i}{T} \in \F_i$ satisfies,
  \begin{equation}
    \P\left(|\emprisk[n_i(T)](\falg{i}{T}) - \risk(\falg{i}{T})| >
    \pen_i(T) + \const_2 \eps\right) \leq \const_1 \exp(-4 n_i(T) \eps^2).
    \label{eqn:pendef}
  \end{equation}
  In addition, for any fixed function $f \in \F_i$, $\P(|\emprisk[n_i(T)](f) -
  \risk(f)| > \const_2 \epsilon) \le \const_1 \exp(-4 n_i(T) \epsilon^2)$.
\end{assumption}

\subsection{Some illustrative examples}
\label{sec:nestingexamples}
\newcommand{\rademacher}{\mathfrak{R}}

We now provide two concrete examples to illustrate
Assumptions~\ref{assumption:inclusion}--\ref{assumption:uniformity}.

\begin{example}[Linear classification with nested balls]
  \label{example:classification-balls}
  In a classification problem, each sample $\samp_i$ consists of a
  covariate vector $x \in \R^d$ and label $y \in \{-1,+1\}$. In
  margin-based linear classification, the predictions are the sign of
  the linear function $f_\theta(x) = \<\theta,x\>$, where $\theta \in
  \R^d$.  A natural sequence of model classes is sets $\{f_\theta\}$
  indexed via norm-balls of increasing radii: $\F_i = \{f_\theta :
  \theta \in \R^d, \ltwo{\theta} \le r_i\}$, where $0 \le r_1 < r_2 <
  \ldots$. By inspection, $\F_i \subset \F_{i+1}$ so that this
  sequence satisfies Assumption~\ref{assumption:inclusion}.
  
  The empirical and expected risks of a function $f_\theta$ are often
  measured using the sample average and expectation, respectively, of
  a convex upper bound on the 0-1 loss $\indic{yf_\theta(x) \le
    0}$. Examples of such losses include the hinge loss,
  $\ell(yf_\theta(x)) = \max(0,1-yf_\theta(x))$, or the logistic loss,
  \mbox{$\ell(yf_\theta(x)) = \log(1+\exp(-yf_\theta(x)))$}. Assume
  that $\E[\ltwo{x}^2] \le \xbound^2$ and let $\sigma_i$ be
  independent uniform $\{\pm 1\}$-valued random variables. Then we may
  use a penalty function $\pen_i$ based on Rademacher complexity
  $\rademacher_n(\F_i)$ of the class $i$,
  \begin{equation*}
    \rademacher_n(\F_i) \defeq \bigg\{ \frac{1}{n} \E\bigg[\sup_{f \in
        \F_i} \bigg|\sum_{i=1}^n \sigma_i f(X_i)\bigg| \bigg]\bigg\}
    \le \frac{2r_i \xbound}{\sqrt{n}}.
  \end{equation*}
  Setting $\pen_i$ to be the Rademacher complexity
  $\rademacher_n(\F_i)$ satisfies the conditions of
  Assumption~\ref{assumption:uniformity}~\cite{BartlettMe02} for both
  the logistic and the hinge losses which are 1-Lipschitz. Hence,
  using the standard Lipschitz contraction bound~\cite[Theorem
    12]{BartlettMe02}, we may take $\pen_i(T) =
  \frac{2r_i\xbound}{\sqrt{n_i(T)}}$.

  To illustrate Assumption~\ref{assumption:budget}, we take stochastic
  gradient descent~\cite{RobbinsMo51} as an example. Assuming that the
  computation time to process a sample $\samp$ is equal to the dimension $d$,
  then Nemirovski et al.~\cite{NemirovskiJuLaSh09} show that the computation
  time required by this algorithm to output a function $f = \falg{i}{T}$
  satisfying Assumption~\ref{assumption:budget}(d) (that is, a
  $\pen_i$-optimal empirical minimizer) is at most
  \begin{equation*}
    \frac{4r_i^2 \xbound^2}{\pen^2_i(T)} \cdot d.
  \end{equation*}
  Substituting the bound on $\pen_i(T)$
  above, we see that the computational time for class $i$ is at most
  $dn_i(T)$. In other words, given a computational time $T$, we can
  satisfy the Assumption~\ref{assumption:budget} by setting $n_i(T) \propto
  T/d$ for each class $i$---the number of samples remains constant
  across the hierarchy in this example.
\end{example}

\begin{example}[Linear classification in increasing dimensions]
  \label{example:classification-dim}
  Staying within the linear classification domain, we
  index the complexity of the model classes $\F_i$ by an increasing sequence of
  dimensions $\{d_i\} \subset \N$. Formally, we set
  \begin{equation*}
  \F_i = \{f_\theta : \theta_j = 0\mbox{~for~}j >
  d_i,~~\|\theta\|_2 \leq r_i\},
  \end{equation*}
  where $0 < r_1 \le r_2 \le \ldots$.  This structure captures a variable
  selection problem where we have a prior ordering on the covariates.

  In special scenarios, such as when the design matrix $X = [x_1 ~ x_2 ~
    \cdots ~ x_n]$ satisfies certain incoherence or irrepresentability
  assumptions~\cite{BuhlmannVa11}, variable selection can be performed using
  $\ell_1$-regularization or related methods. However, in general
  an oracle inequality for variable selection requires some form of
  exhaustive search over subsets. In the sequel, we show that in this simpler
  setting of variable selection over nested subsets, we can provide oracle
  inequalities without computing an estimator for each subset and without
  any assumptions on the design matrix $X$.

  For this function hierarchy, we consider complexity penalties arising
  from VC-dimension arguments~\cite{VapnikCh71,BartlettMe02}, in
  which case we may set
  \begin{equation*}
    \pen_i(T) = \sqrt{\frac{d_i}{n_i(T)}}
  \end{equation*}
  which satisfies Assumption~\ref{assumption:uniformity}.  Using
  arguments similar to those for
  Example~\ref{example:classification-balls}, we may conclude that the
  computational assumption~\ref{assumption:budget} can be satisfied
  for this hierarchy, where the algorithm $\alg$ requires time
  $d_in_i(T)$ to select $f \in \F_i$. Thus, given a computational
  budget $T$, we set the number of samples $n_i(T)$ for class $i$ to
  be proportional to $T/d_i$.
\end{example}

We provide only classification examples above since they demonstrate
the essential aspects of our formulation.  Similar quantities can also
be obtained for a variety of other problems, such as parametric and
non-parametric regression, and for a number of model hierarchies
including polynomial or Fourier expansions, wavelets, or Sobolev
classes, among others (for more instances, see,
e.g.~\cite{Massart03,BarronBiMa99,BartlettBoLu02}).

\subsection{The computationally-aware model selection algorithm}

Having specified our assumptions and given examples satisfying them,
we turn to describing our first computationally-aware model selection
algorithm. Let us begin with the simpler scenario where we have only
$K$ model classes (we extend this to infinite classes
below). Perhaps the most obvious computationally budgeted model
selection procedure is the following: allocate a budget of $T/K$ to
each model class $i$. As a result, class $i$'s estimator $\femp{i} =
\falg{i}{T/K}$ is computed using $n_i(T/K)$ samples.  Let $\wt{f}_n$
denote the output of the basic model selection
algorithm~\eqref{eqn:model-selection} with the choices $n = n_i(T/ K)$,
using $n_i(T/K)$ samples to evaluate the empirical risk for class
$i$, and modifying the penalty $\pen_i$ to be $\penbar_i(n) = \pen_i(n) +
\sqrt{\log i / n}$.
Then very slight modifications of standard arguments~\cite{Massart03,
  BartlettBoLu02} yield the oracle inequality
\begin{equation*}
  \risk(\wt{f}_n) \leq \min_{i=1,\dots,K}\left(\risk[i]^* +
  c\pen_i\left(\frac{T}{K}\right) + \sqrt{\frac{\log
      i}{n_i(T/K)}}\right)
\end{equation*}
with high probability, where $c$ is a universal constant. This approach
can be quite poor. For instance, in
Example~\ref{example:classification-dim}, we have $n_i(T/K) =
T/(Kd_i)$, and the above inequality incurs a penalty that grows as
$\sqrt{K}$. This is much worse than the logarithmic scaling in $K$
that is typically possible in computationally unconstrained
settings~\cite{BartlettBoLu02}. It is thus natural to ask whether we
can use the nested structure of our model hierarchy to allocate
computational budget more efficiently.

To answer this question, we introduce the notion of \emph{coarse-grid
  sets}, which use the growth structure of the complexity penalties
$\pen_i$, to construct a scheme for allocating the budget across the
hierarchy. Recall the constant $\const_2$ from
Assumption~\ref{assumption:uniformity} and let $m > 0$ be an arbitrary
constant (we will see that $m$ controls the probability of error in
our results). Given $\gridsizeplain \in \N$ ($\gridsizeplain \ge 1$),
we define
\begin{equation}
  \penbar_i(T, \gridsizeplain) \defeq
  2\pen_i\left(\frac{T}{\gridsizeplain}\right) + \const_2
  \sqrt{\frac{2(m + \log\gridsizeplain)}{n_i(T/\gridsizeplain)}}.
  \label{eqn:penbar}
\end{equation}
Notice that, to simplify the notation,
we hide the dependence of $\penbar_i$ on $m$.
With the definition~\eqref{eqn:penbar}, we now give a definition
characterizing the growth characteristics of the penalties and sample
sizes.
\begin{definition}
  \label{defn:gridsetdefn}
  Given a budget $T$, for a set $\gridset \subseteq \N$, we say that
  $\gridset$ satisfies the \emph{coarse grid condition} with
  parameters $\penscalefac$, $m$, and $\gridsizeplain$ if $|\gridset|
  = \gridsizeplain$ and for each $i$ there is an index $j \in
  \gridset$ such that
  \begin{equation}
     \penbar_i(T, \gridsizeplain) \le
     \penbar_j(T, \gridsizeplain) \le
     (1 + \penscalefac) \penbar_i(T, \gridsizeplain).
    \label{eqn:gridsetdefn}
  \end{equation}
\end{definition}
\noindent
Figure~\ref{fig:coarsegrid} gives an illustration of the coarse-grid set.
For simplicity in presentation, we set $\penscalefac = 1$ in the
statements of our results in the sequel.

\begin{figure}
  \centering
  \includegraphics[width=0.7\textwidth]{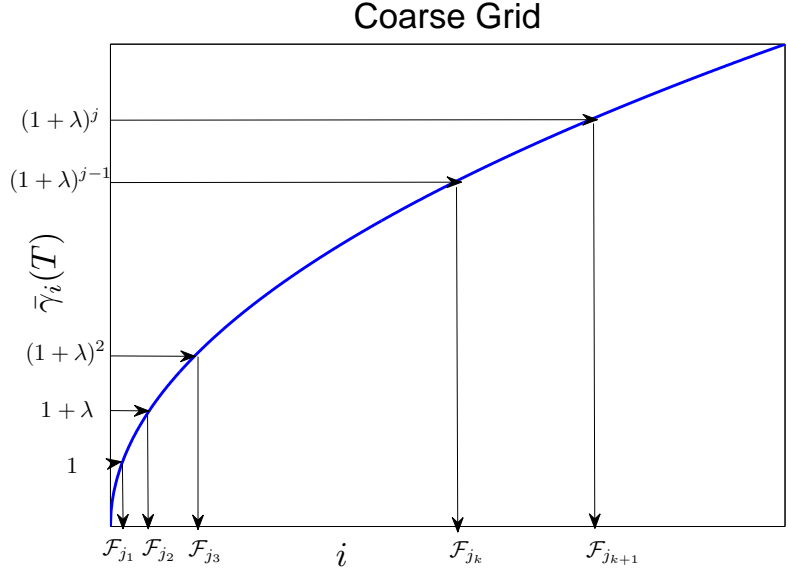}
  \caption{Construction of the coarse-grid set
    $\gridset_\penscalefac$. The $X$-axis is the class index $i$, and
    the $Y$-axis represents the corresponding complexity
    $\penbar_i(T)$. When the penalty function grows steeply early on,
    we include a large number of models. The number of complex models
    included in $\gridset_\penscalefac$ can be significantly smaller
    as the growth of penalty function tapers out.}
  \label{fig:coarsegrid}
\end{figure}

If the coarse-grid set is finite and,
say, $|\gridset| = \gridsizeplain$, then the set $\gridset$ presents a
natural collection of indices over which to perform model
selection. We simply split the budget uniformly amongst the
coarse-grid set $\gridset$, giving budget $T/\gridsizeplain$ to each
class in the set. Indeed, the main theorem of this section shows that
for a large class of problems, it always suffices to restrict our
attention to a finite grid set $\gridset$, allowing us to present both
a computationally tractable estimator and a good oracle inequality for
the estimator. In some cases, there may be no finite coarse grid
set. Thus we look for way to restrict our selection to finite sets,
which we can do with the following assumption (the assumption is
unnecessary if the hierarchy is finite).
\begin{assumption}
  \label{ass:boundedrisk}
  \begin{enumerate}[(a)]
  \item
  There is a constant $\bound < \infty$ such that $\risk[1]^*\le\bound$.
  \item For all $n \in \N$ the penalty function
  $\pen_1(n)\ge 1/n$.
  \end{enumerate}
\end{assumption}
\noindent
Assumption~\ref{ass:boundedrisk}(a) is satisfied, for example, if 
the loss function is bounded, or even if there is a function $f \in \F_1$
with finite risk.
Assumption~\ref{ass:boundedrisk}(b) also is mild;
unless the class $\F_1$ is trivial, in general classes satisfying
Assumption~\ref{assumption:uniformity}
have $\gamma_1(n)=\Omega(1/\sqrt n)$.

Under these assumptions,
we provide our computationally budgeted model selection
procedure in Algorithm~\ref{alg:coarsegrid}.
We will see in the proof of Theorem~\ref{theorem:nesting} below that
the assumptions ensure that we can build a coarse grid of size
\begin{equation*}
  \gridsizeplain = 
  \ceil{\log_2\left(1 + \bound n_1(T)\right)} + 2.
\end{equation*}
In particular, Assumption~\ref{assumption:budget}(d) ensures that
the complexity penalties continue to increase with the class index
$i$. Hence, there is a class $\maxgridplain$ such that the complexity
penalty $\pen_{\maxgridplain}$ is larger than the penalized risk
of the smallest class $\F_1$, at which point no class larger than
$\maxgridplain$ can be a minimizer in the oracle inequality. The
above choice of $\gridsizeplain$ ensures that there is at least one
class $j \in \gridset$ so that $j \geq \maxgridplain$, allowing us to
restrict our attention only to the function classes \mbox{$\{\F_i\mid
i\in \gridset\}$}.

\begin{algorithm}[t]
  \begin{algorithmic}
    \REQUIRE Model hierarchy $\{\F_i\}$ with corresponding penalty
    functions $\pen_i$, computational budget $T$, upper bound $\bound$ on
    the minimum risk of class 1, and confidence parameter $m$.

    \COMMENT{\emph{Construction of the coarse-grid set
      $\gridset$:}}
    \STATE Set $\gridsizeplain= \ceil{\log_2\left(1 + \bound n_1(T)\right)} + 2$.

    \FOR{$k = 0$ to $\gridsizeplain - 1$}
    \STATE Set $j_{k+1}$ to be the largest class for which
    $\penbar_j(T/\gridsizeplain) \leq 2^k\penbar_1(T/\gridsizeplain)$. 
    \ENDFOR

    \STATE Set $\gridset = \{j_k~:~k = 1,\ldots,\gridsizeplain\}$.

    \COMMENT{\emph{Model selection estimate:}}

    \STATE Set $\femp{i} = \falg{i}{T / \gridsizeplain}$ for $i \in
    \gridset$.

    \STATE Select a class
    $\what{i}$ that satisfies
    \begin{equation}
      \what{i} \in \argmin_{i \in
        \gridset}\,\left\{\,\emprisk[n_i(T/\gridsizeplain)](\femp{i}) +
      \pen_i\left(T/\gridsizeplain\right) +
      \frac{\const_2}{2}\sqrt{\frac{m}{n_i(T/\gridsizeplain)}}
      + \frac{\const_2}{2} \sqrt{\frac{\log \gridsizeplain}{n_i(T /
      \gridsizeplain)}}
      \,\right\}.
      \label{eqn:coarseestimate}
    \end{equation}

    \STATE Output the function
    $f = \femp{\iopt} = \falg{\iopt}{T/\gridsizeplain}$. 
  \end{algorithmic}
  \caption{Computationally budgeted model
    selection over nested hierarchies}
  \label{alg:coarsegrid}
\end{algorithm}

\subsection{Main result and some consequences}
\label{sec:nestingslowresults}

With the above definitions in place, we can now provide an oracle
inequality on the performance of the model selected by
Algorithm~\ref{alg:coarsegrid}. We start with our main theorem, and
then provide corollaries to help explain various aspects of it.

\begin{theorem}
  \label{theorem:nesting}
  Let $f =\alg(\what{i},\timesteps/\gridsizeplain)$ be the output of the
  algorithm $\alg$ for the class $\what{i}$ specified by the
  procedure~\eqref{eqn:coarseestimate}.  Let
  Assumptions~\ref{assumption:inclusion}--\ref{ass:boundedrisk} be
  satisfied. With probability at least $1 - 2\const_1\exp(-m)$
  \begin{equation}
    \risk(f) \leq \min_{i = 1,2,3,\ldots} \left\{ \risk[i]^* +
    4\pen_i\left(\frac{\timesteps}{\gridsizeplain}\right) +
    \const_2\sqrt{\frac{8(m +
        \log\gridsizeplain)}{n_i(T/\gridsizeplain)}}\right\} .
    \label{eqn:nestingbound}
  \end{equation}
  Furthermore, if $n_1(T) = \order(T)$ then $\gridsizeplain=\order(\log
  T)$.
\end{theorem}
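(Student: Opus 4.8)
The plan is to prove the oracle inequality~\eqref{eqn:nestingbound} in two stages: first a high-probability ``good event'' that controls all the empirical risks appearing in the selection rule~\eqref{eqn:coarseestimate}, then a deterministic argument on that event comparing $\risk(f)$ to the oracle's choice. For the good event, fix an arbitrary index $i^\star$ achieving (or nearly achieving) the minimum on the right-hand side of~\eqref{eqn:nestingbound}. By the construction of the coarse grid in Algorithm~\ref{alg:coarsegrid} and the choice $\gridsizeplain = \ceil{\log_2(1+\bound n_1(T))}+2$, there is a grid index $j \in \gridset$ with $j \ge i^\star$ and $\penbar_j(T,\gridsizeplain) \le 2\,\penbar_{i^\star}(T,\gridsizeplain)$ (this uses Assumption~\ref{ass:boundedrisk} to guarantee that the grid reaches past the class $\maxgridplain$ beyond which no class can be the oracle minimizer, exactly as sketched in the paragraph preceding Algorithm~\ref{alg:coarsegrid}). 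I would then apply Assumption~\ref{assumption:uniformity} at each of the $\gridsizeplain$ grid indices with $\eps = \sqrt{(m+\log\gridsizeplain)/(2 n_i(T/\gridsizeplain))}$ — once in the form~\eqref{eqn:pendef} for the algorithm's output $\femp{i}=\falg{i}{T/\gridsizeplain}$, and once in the fixed-function form for a near-minimizer of $\risk$ in $\F_{i^\star}$ — and a union bound over the $\gridsizeplain$ classes. Because $\exp(-4n_i \eps^2) = \exp(-2(m+\log\gridsizeplain)) \le \gridsizeplain^{-2}\exp(-2m)$, summing over the grid gives a total failure probability at most $2\const_1 \gridsizeplain^{-1} \exp(-2m) \le 2\const_1\exp(-m)$, matching the claimed confidence.

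On the complement good event, the argument is deterministic. By optimality of $\iopt$ in~\eqref{eqn:coarseestimate}, the penalized empirical objective at $\iopt$ is at most that at the grid index $j$ found above; on the good event each true risk is sandwiched by its empirical risk plus the penalty terms, and the empirical risk of $\femp{j}$ is in turn controlled by $\inf_{f\in\F_j}\emprisk(f) + \pen_j$ via Assumption~\ref{assumption:budget}(d). Since $j \ge i^\star$, inclusion (Assumption~\ref{assumption:inclusion}) gives $\inf_{f\in\F_j}\risk(f) \le \risk[i^\star]^*$, and the fixed-function concentration bound transfers this to the empirical risk of a near-minimizer in $\F_{i^\star}$ evaluated on the $n_j(T/\gridsizeplain)$ samples. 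Chaining these inequalities, and absorbing everything into $\penbar_j(T,\gridsizeplain) \le 2\penbar_{i^\star}(T,\gridsizeplain)$ (here the monotonicity in Assumption~\ref{assumption:budget}(a)--(b), which makes $n$ decrease and $\penbar$ increase in the index, is used to replace $j$-indexed quantities by $i^\star$-indexed ones), yields $\risk(f) \le \risk[i^\star]^* + 4\pen_{i^\star}(T/\gridsizeplain) + \const_2\sqrt{8(m+\log\gridsizeplain)/n_{i^\star}(T/\gridsizeplain)}$. Since $i^\star$ was an arbitrary index, taking the minimum over $i$ gives~\eqref{eqn:nestingbound}. The factor $4$ and the constant $8$ under the square root come out of the $(1+\penscalefac)=2$ slack in the coarse-grid condition together with the doubled penalty $2\pen_i$ in the definition~\eqref{eqn:penbar} of $\penbar_i$.

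The last sentence of the theorem is immediate from the definition of $\gridsizeplain$: if $n_1(T) = \order(T)$ then $\log_2(1+\bound n_1(T)) = \order(\log T)$, so $\gridsizeplain = \ceil{\log_2(1+\bound n_1(T))}+2 = \order(\log T)$.

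I expect the main obstacle to be bookkeeping rather than any single hard idea: the delicate point is tracking which index ($i^\star$, $j$, or the generic grid element) each penalty and sample size is attached to, and verifying that every step in the chain respects the monotonicity directions in Assumption~\ref{assumption:budget}, so that one can legitimately upper-bound all $j$-indexed quantities by $i^\star$-indexed ones while only ever paying the single factor-of-two slack from the grid condition~\eqref{eqn:gridsetdefn}. A secondary subtlety is making sure the coarse-grid construction in the algorithm (the largest $j$ with $\penbar_j \le 2^k\penbar_1$) genuinely produces a set satisfying Definition~\ref{defn:gridsetdefn}, which is where Assumption~\ref{assumption:budget}(e) and Assumption~\ref{ass:boundedrisk}(b) enter to rule out gaps and degenerate plateaus in the penalty sequence.
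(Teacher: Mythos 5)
Your proposal is correct and follows essentially the same route as the paper: restrict to the coarse grid (using Assumption~\ref{ass:boundedrisk} to cap the relevant indices, as in Lemma~\ref{lemma:finite}), union-bound the two concentration events over the grid, chain through the selection rule~\eqref{eqn:coarseestimate} using Assumption~\ref{assumption:budget}(d), and transfer from the grid to all classes via the factor-of-two grid property and inclusion (the paper's Proposition~\ref{prop:nesting-grid} plus Lemmas~\ref{lemma:selection-is-fine} and~\ref{lemma:finite}). The only cosmetic differences are that you apply the fixed-function concentration once at the oracle class's risk minimizer on class $j$'s samples rather than at $f_i^*$ for every grid class, and your choice of $\eps$ differs slightly from the additive terms in~\eqref{eqn:coarseestimate}, which would require the minor constant bookkeeping you already anticipate.
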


The assumption that $n_1(T)$ is linear is mild:
unless $\F_1$ is trivial, any algorithm for $\F_1$ must at least
observe the data, and hence must use computation at least linear
in the sample size.

\paragraph{Remarks:}
To better understand the result of Theorem~\ref{theorem:nesting}, we turn to
a few brief remarks.
\begin{enumerate}[(a)]
\item We may ask what an omniscient oracle with
  access to the same computational algorithm $\alg$ could do. Such an
  oracle would know the optimal class $i^*$ and allocate the entire
  budget $T$ to compute $\falg{i^*}{T}$. By
  Assumption~\ref{assumption:uniformity}, the output $f$ of this
  oracle satisfies, with probability at least $1 - \const_1\exp(-m)$,
  \begin{equation}
    \risk(f) \leq \risk[i^*]^* + \pen_{i^*}(T) +
    \const_2\sqrt{\frac{m}{n_{i^*}(T)}}
    = \min_{i=1,2,3,\ldots}\left\{
    \risk[i]^* + \pen_{i}(T) + \const_2\sqrt{\frac{m}{n_{i}(T)}}
    \right\}. 
    \label{eqn:oraclebound}
  \end{equation}
  Comparing this to the right hand side of the inequality of
  Theorem~\ref{theorem:nesting}, we observe that not knowing the
  optimal class incurs a penalty in the computational budget of
  roughly a factor of $\gridsizeplain$.  This penalty is
  only logarithmic in the computational budget in most settings of
  interest.  
\item
  Algorithm~\ref{alg:coarsegrid} and Theorem~\ref{theorem:nesting}, as
  stated, require a priori knowledge of the computational budget $T$.
  We can address this using a standard doubling argument (see
  e.g.~\cite[Sec. 2.3]{CesaBianchiLu06}). Initially we assume $T = 1$
  and run Algorithm~\ref{alg:coarsegrid} accordingly. If we do not
  exhaust the budget, we assume $T = 2$, and rerun
  Algorithm~\ref{alg:coarsegrid} for another round. If there is more
  computational time at our disposal, we update our guess to $T = 4$
  and so on. Suppose the real budget is $T_0$ with \mbox{$2^k - 1 <
    T_0 \leq 2^{k+1}-1$}. After $i$ rounds of this doubling strategy,
  we have exhausted a budget of $2^{i-1}$, with the last round getting
  a budget of $2^{i-2}$ for $i \geq 2$. In particular, the last round
  with a net budget of $T_0$ is of length at least $T_0/4$. Since
  Theorem~\ref{theorem:nesting} applies to each individual round, we
  obtain an oracle inequality where we replace $T_0$ with $T_0/4$; we
  can be agnostic to the prior knowledge of the budget at the expense
  of slightly worse constants.
\item For ease of presentation, Algorithm~\ref{alg:coarsegrid} and
  Theorem~\ref{theorem:nesting} use a specific setting of the
  coarse-grid size, which corresponds to setting $\penscalefac = 1$ in
  Definition~\ref{defn:gridsetdefn}. In our proofs, we establish the
  theorem for arbitrary $\penscalefac > 0$. As a consequence, to obtain
  slightly sharper bounds, we may optimize this choice of $\penscalefac$; we
  do not pursue this here.
\end{enumerate}

Now let us turn to a specialization of Theorem~\ref{theorem:nesting}
to the settings outlined in
Examples~\ref{example:classification-balls}
and~\ref{example:classification-dim}. The following corollary shows
oracle inequalities under the computational restrictions that are only
logarithmically worse than those possible in the computationally
unconstrained model selection procedure~\eqref{eqn:model-selection}.
\begin{corollary}
  \label{corollary:classification}
  Let $m \ge 0$ be a specified constant.
  \begin{enumerate}[(a)]
  \item In the setting of Example~\ref{example:classification-balls},
    define $n$ so that
    $n T / d$ is the number of samples that can be processed
    by the inference algorithm $\alg$ using $T$ units of computation.
  Assume that $T$ is large enough that $nT\ge d/\bound$ and
  $nT\ge d/(4r_1^2\xbound^2)$.
  With probability at least $1 - 2 \const_1 \exp(-m)$, the output $f$ of
  Algorithm~\ref{alg:coarsegrid} satisfies
    \begin{equation*}
      \risk(f) \le \inf_{i = 1, 2, \ldots} \left\{\risk[i]^* +
      \sqrt{\frac{d \log_2(16\bound nT/d)}{nT}} \left(8 r_i \xbound
      + \sqrt{8} \const_2 \sqrt{m + \log\log_2(16\bound n T/d)}
      \right)\right\}.
    \end{equation*}
  \item In the setting of Example~\ref{example:classification-dim},
    define $n$ so that $nT / d_i$ is the number of samples that
    can be processed by the inference algorithm $\alg$ using
    $T$ units of computation.
  Assume that $T$ is large enough that $n T \ge 1$ and $n T \ge
  d_1/\bound$.  With probability at least $1 - 2 \const_1 \exp(-m)$,
  the output $f$ of Algorithm~\ref{alg:coarsegrid} satisfies
    \begin{equation*}
      \risk(f) \le \inf_{i = 1, 2, \ldots} \left\{\risk[i]^* +
      \sqrt{\frac{d_i \log_2(16\bound nT/d_1)}{nT}}
      \left( 4\sqrt{d_i} + \sqrt{8} \const_2
      \sqrt{m + \log\log_2(16\bound n T/d_1)}
      \right)\right\}.
    \end{equation*}
  \end{enumerate}
\end{corollary}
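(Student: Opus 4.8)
The plan is to obtain both parts as direct specializations of Theorem~\ref{theorem:nesting}: plug the explicit penalty functions and sample-size schedules computed in Section~\ref{sec:nestingexamples} into the oracle inequality~\eqref{eqn:nestingbound}, and then replace the coarse-grid size $\gridsizeplain$ by an explicit logarithmic upper bound derived from the stated lower bounds on $T$. Before invoking the theorem one checks its hypotheses: Assumptions~\ref{assumption:inclusion}--\ref{assumption:uniformity} and~\ref{assumption:budget} were already verified for both examples in Section~\ref{sec:nestingexamples}, and the extra lower bounds on $T$ in the statement are exactly what is needed to put Assumption~\ref{ass:boundedrisk} in force, with $\bound$ the assumed upper bound on $\risk[1]^*$.

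For part~(a), the computational accounting of Example~\ref{example:classification-balls} gives $n_i(T) = nT/d$ for every class $i$ together with $\pen_i(T) = 2 r_i \xbound / \sqrt{n_i(T)}$, so that $n_i(T/\gridsizeplain) = nT/(d\gridsizeplain)$ and $\pen_i(T/\gridsizeplain) = 2 r_i \xbound \sqrt{d\gridsizeplain/(nT)}$. Substituting these into~\eqref{eqn:nestingbound} and factoring $\sqrt{d\gridsizeplain/(nT)}$ out of both the $4\pen_i(T/\gridsizeplain)$ term and the term $\const_2\sqrt{8(m+\log\gridsizeplain)/n_i(T/\gridsizeplain)}$ leaves the parenthesized factor $8 r_i\xbound + \sqrt 8\,\const_2\sqrt{m+\log\gridsizeplain}$. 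For part~(b), Example~\ref{example:classification-dim} gives $n_i(T) = nT/d_i$ and $\pen_i(T) = \sqrt{d_i/n_i(T)}$, hence $n_i(T/\gridsizeplain) = nT/(d_i\gridsizeplain)$ and $\pen_i(T/\gridsizeplain) = d_i\sqrt{\gridsizeplain/(nT)} = \sqrt{d_i}\,\sqrt{d_i\gridsizeplain/(nT)}$; here one factors $\sqrt{d_i\gridsizeplain/(nT)}$ out of~\eqref{eqn:nestingbound}, leaving $4\sqrt{d_i} + \sqrt 8\,\const_2\sqrt{m+\log\gridsizeplain}$.

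It then remains to control $\gridsizeplain = \ceil{\log_2(1 + \bound\, n_1(T))} + 2$. In part~(a), $n_1(T) = nT/d$, so the hypothesis $nT \ge d/\bound$ gives $\bound n_1(T) \ge 1$, whence $1 + \bound n_1(T) \le 2\bound nT/d$ and $\gridsizeplain \le \log_2(2\bound nT/d) + 3 = \log_2(16\bound nT/d)$; the identical computation with $n_1(T) = nT/d_1$ and $nT \ge d_1/\bound$ gives $\gridsizeplain \le \log_2(16\bound nT/d_1)$ in part~(b). Since, for each fixed $i$, both $4\pen_i(T/\gridsizeplain)$ and $\const_2\sqrt{8(m+\log\gridsizeplain)/n_i(T/\gridsizeplain)}$ are nondecreasing in $\gridsizeplain$ --- the penalty grows as its sample argument $n_i(T/\gridsizeplain)$ shrinks, and the second term grows as well --- and a minimum over $i$ of nondecreasing functions is nondecreasing, I can replace $\gridsizeplain$ by these upper bounds throughout, obtaining exactly the two displayed inequalities of the corollary. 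I expect the only fiddly step to be the bookkeeping around $\gridsizeplain$: tracking the constant $16$ through the ceiling and the additive $2$, and checking that the remaining lower bounds --- $nT \ge d/(4r_1^2\xbound^2)$ in part~(a) and $nT \ge 1$ in part~(b) --- really do guarantee $\pen_1(n) \ge 1/n$ at the sample sizes that arise inside Algorithm~\ref{alg:coarsegrid}, so that Assumption~\ref{ass:boundedrisk}(b) holds. There is no substantive obstacle beyond Theorem~\ref{theorem:nesting} itself.
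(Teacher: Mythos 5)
Your proposal is correct and follows essentially the same route as the paper: verify Assumptions~\ref{assumption:inclusion}--\ref{ass:boundedrisk} with $n_i(T)=nT/d$ (resp.\ $nT/d_i$) and $\pen_i(T)=2r_i\xbound\sqrt{d/(nT)}$ (resp.\ $d_i/\sqrt{nT}$), use the stated lower bounds on $T$ for Assumption~\ref{ass:boundedrisk}(b) and to get $\gridsizeplain\le\log_2(16\bound nT/d)$ (resp.\ with $d_1$), and substitute into Theorem~\ref{theorem:nesting}. Your explicit factoring of $\sqrt{d\gridsizeplain/(nT)}$ and the monotonicity-in-$\gridsizeplain$ remark are just slightly more detailed versions of what the paper leaves implicit.
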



\subsection{Proofs}
\label{sec:nestingproofs}

As remarked after Theorem~\ref{theorem:nesting}, we will present our
proofs for general settings of $\lambda > 0$. For the proofs of
Theorem~\ref{theorem:nesting} and
Corollary~\ref{corollary:classification} in this slight
generalization, we define $\gridset_\penscalefac$ as a set satisfying
the coarse grid condition with parameters $\lambda$, $m$ and
$\gridsize$, with $\gridsize$ satisfying
\begin{equation}
  \gridsize \ge
  \ceil{\frac{\log\left(1 + \frac{\bound}{\penbar_1(T,
\gridsize)}\right)}{
      \log(1+\penscalefac)}}
  + 2.
  \label{eqn:gridsize}
\end{equation}

First, we show that this inequality is ensured
by the choice given in Algorithm~\ref{alg:coarsegrid}.
To see this, notice that
  \begin{align*}
    \ceil{\frac{\log\left(1 + \frac{\bound}{\penbar_1(T, \gridsize)}\right)}{
      \log(1+\penscalefac)}} + 2
    & \le \ceil{\frac{\log\left(1 + \frac{\bound}{\pen_1(T, 1)}\right)}{
      \log(1+\penscalefac)}} + 2 \\
    & = \ceil{\frac{\log\left(1 + \bound n_1(T)\right)}{
      \log(1+\penscalefac)}} + 2.
  \end{align*}
Thus, for $\penscalefac=1$, choosing
$\gridsize = \ceil{\log_2\left(1 + \bound n_1(T)\right) } + 2$ suffices.

We require the additional notation
\begin{equation}
  \maxgrid \defeq \max\{j~:~j \in \gridset_\penscalefac\},
  \label{eqn:maxgriddefn}
\end{equation}
where
\begin{equation}
  \label{eqn:Slambda}
  \gridset_\penscalefac=\{j_1,\ldots,j_{\gridsize}\}
\end{equation}
is the natural generalization of the set
$\gridset$ defined in Algorithm~\ref{alg:coarsegrid}: $j_{k+1}$ is
chosen as the largest index for which
$\penbar_j(T/\gridsize) \leq (1+\lambda)^k\penbar_1(T/\gridsize)$. 
We begin the proof of Theorem~\ref{theorem:nesting} by showing that
any $\gridsize$ satisfying~\eqref{eqn:gridsize} ensures that any class
$j > \maxgrid$ must have penalty too large to be optimal, so we can
focus on classes $j \le \maxgrid$.  We then show that the output $f$
of Algorithm~\ref{alg:coarsegrid} satisfies an oracle inequality for
each class in $\gridset_\penscalefac$, which is possible by an
adaptation of arguments in prior work~\cite{BartlettBoLu02}.  Using
the definition of our coarse grid set
(Definition~\ref{defn:gridsetdefn}), we can then infer an oracle
inequality that applies to each class $j \leq \maxgrid$, and our
earlier reduction to a finite model hierarchy completes the argument.

\subsubsection{Proof of Theorem~\ref{theorem:nesting}}
\label{sec:nestingtheorem}

First we show that the selection of the set $\gridset_\penscalefac$
satisfies Definition~\ref{defn:gridsetdefn}.
\begin{lemma}
  \label{lemma:selection-is-fine}
  Let $\{\pen_i\}$ be a sequence of increasing positive numbers and
  for each $k \in \{0, \ldots, \gridsizeplain-1\}$ set $j_{k+1}$ to be
  the largest index $j$ such that $\pen_j \le (1 + \penscalefac)^k
  \pen_1$. Then for each $i \in \N$ such that $i \le j_k$, there
  exists a $j \in \{j_1, \ldots, j_k\}$ such that $\pen_i \le \pen_j
  \le (1 + \penscalefac) \pen_i$.
\end{lemma}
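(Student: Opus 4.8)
The statement is purely combinatorial, so the plan is elementary: for a given $i \le j_k$, identify the \emph{first} coarse-grid index that dominates $i$, and verify it is a $(1+\penscalefac)$-approximation to $\pen_i$. First I would record two monotonicity facts. Since the $\pen_i$ are increasing and the thresholds $(1+\penscalefac)^k\pen_1$ are nondecreasing in $k$, the defining indices satisfy $1 \le j_1 \le j_2 \le \cdots$ (in particular $j_1 \ge 1$, because $\pen_1 \le (1+\penscalefac)^0\pen_1$). Second, by definition $j_p$ is the \emph{largest} index $j$ with $\pen_j \le (1+\penscalefac)^{p-1}\pen_1$, so every index strictly larger than $j_p$ has penalty strictly exceeding $(1+\penscalefac)^{p-1}\pen_1$.

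Given $i$ with $i \le j_k$, set $\ell = \min\{p \ge 1 : i \le j_p\}$; this minimum is well defined and $\ell \le k$ by hypothesis. I claim $j := j_\ell \in \{j_1,\dots,j_k\}$ has the required property. The left inequality $\pen_i \le \pen_{j_\ell}$ is immediate from $i \le j_\ell$ and monotonicity of $\pen$. For the right inequality I split into two cases. If $\ell = 1$, then $\pen_{j_1} \le (1+\penscalefac)^0\pen_1 = \pen_1 \le \pen_i$ (the last step since $i \ge 1$), so in fact $\pen_{j_1} = \pen_i \le (1+\penscalefac)\pen_i$ as $\penscalefac > 0$. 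If $\ell \ge 2$, minimality of $\ell$ gives $i > j_{\ell-1}$, hence by the second monotonicity fact $\pen_i > (1+\penscalefac)^{\ell-2}\pen_1$; on the other hand $i \le j_\ell$ gives $\pen_{j_\ell} \le (1+\penscalefac)^{\ell-1}\pen_1 = (1+\penscalefac)(1+\penscalefac)^{\ell-2}\pen_1 < (1+\penscalefac)\pen_i$. In both cases $\pen_i \le \pen_{j_\ell} \le (1+\penscalefac)\pen_i$, which completes the argument.

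There is no real obstacle here beyond careful index bookkeeping. The one step worth stating precisely is the translation of the minimality of $\ell$ into the strict lower bound $\pen_i > (1+\penscalefac)^{\ell-2}\pen_1$, which relies on $j_{\ell-1}$ being, by definition, the largest index whose penalty does not exceed that threshold (so that exceeding $j_{\ell-1}$ forces exceeding the threshold). No probabilistic or analytic ingredients enter; in particular the lemma uses none of Assumptions~\ref{assumption:inclusion}--\ref{ass:boundedrisk}, only that the $\pen_i$ are increasing and positive and that $\penscalefac>0$.
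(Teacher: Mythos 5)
Your proof is correct and follows essentially the same route as the paper's: you pick the first coarse-grid index $j_\ell$ with $i \le j_\ell$ and use the maximality of $j_{\ell-1}$ to force $\pen_i$ above the threshold $(1+\penscalefac)^{\ell-2}\pen_1$, while $\pen_{j_\ell}$ sits below $(1+\penscalefac)^{\ell-1}\pen_1$. The only differences are cosmetic: you argue directly (with an explicit $\ell=1$ base case) where the paper argues by contradiction from the minimality of its chosen witness.
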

\begin{proof}
  Let $i \le j_k$ and choose the smallest $j \in \{j_1, j_2, \ldots,
  j_k\}$ such that $\pen_i \le \pen_j$.  Assume for the sake of
  contradiction that $(1 + \penscalefac) \pen_i < \pen_j$. There
  exists some $k' \in \{0, \ldots, \gridsizeplain-1\}$ such that
  $\pen_j \le (1 + \penscalefac)^{k'} \pen_1$ and $\pen_j \geq (1 +
  \penscalefac)^{k'-1} \pen_1$, and thus we obtain
  \begin{equation}
    \pen_i < \frac{\pen_j}{1 + \penscalefac}
    \le (1 + \penscalefac)^{k'-1} \pen_1.
    \label{eqn:classtoosmall}
  \end{equation}
  Let $j'$ be the largest element smaller than $j$ in the collection
  $\{j_1, j_2, \dots, j_k\}$. Then by our construction, $j'$ is the
  largest index satisfying $\pen_{j'} \leq
  (1+\penscalefac)^{k'-1}\pen_1$. In particular, combining with our
  earlier inequality~\eqref{eqn:classtoosmall} leads to the conclusion
  that $i \leq j'$, which contradicts the fact that $j$ is the
  smallest index in $\{j_1, \ldots, j_k\}$ satisfying $\pen_i \le
  \pen_j$.
\end{proof}
\noindent
Next, we show that, for $\gridsize$
satisfying~\eqref{eqn:gridsize}, once the complexity penalty of a
class becomes too large, it can never be the minimizer of the penalized risk in
the oracle inequality~\eqref{eqn:nestingbound}. See
Appendix~\ref{app:nesting} for the proof.
\begin{lemma}
  Fix $\lambda>0$ and $m>0$, recall the definition~\eqref{eqn:maxgriddefn} of
  $\maxgrid$, and let $i^*$ be a class that
  attains the minimum in the right side of the
  bound~\eqref{eqn:nestingbound}.  We have $i^* \leq \maxgrid$.
  \label{lemma:finite}
\end{lemma}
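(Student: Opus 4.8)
The plan is to rewrite the right-hand side of~\eqref{eqn:nestingbound} as $\min_{i} \bigl\{\risk[i]^* + 2\penbar_i(T,\gridsize)\bigr\}$ --- which is legitimate since $2\penbar_i(T,\gridsize) = 4\pen_i(T/\gridsize) + \const_2\sqrt{8(m+\log\gridsize)/n_i(T/\gridsize)}$ by~\eqref{eqn:penbar} --- and then to show that every index $i > \maxgrid$ has strictly larger objective value than the smallest class $i = 1$. Since there are only finitely many indices $i \le \maxgrid$, the minimum over $\N$ then coincides with the minimum over $\{1,\dots,\maxgrid\}$, so it is attained and any minimizer $i^*$ must satisfy $i^* \le \maxgrid$.

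First I would record that $i \mapsto \penbar_i(T,\gridsize)$ is strictly increasing; this is immediate from Assumptions~\ref{assumption:budget}(a) and~(b), since the $\pen_i(T/\gridsize)$ term increases in $i$ while $n_i(T/\gridsize)$ decreases. This both justifies invoking Lemma~\ref{lemma:selection-is-fine} and lets me identify, via~\eqref{eqn:Slambda}, the index $\maxgrid = j_{\gridsize}$ as the \emph{largest} $j$ with $\penbar_j(T,\gridsize) \le (1+\penscalefac)^{\gridsize-1}\penbar_1(T,\gridsize)$. Hence for any $i > \maxgrid$ we have $\penbar_i(T,\gridsize) > (1+\penscalefac)^{\gridsize-1}\penbar_1(T,\gridsize)$, and therefore
\begin{align*}
  \bigl(\risk[i]^* + 2\penbar_i(T,\gridsize)\bigr) - \bigl(\risk[1]^* + 2\penbar_1(T,\gridsize)\bigr)
  & > \bigl(\risk[i]^* - \risk[1]^*\bigr) + 2\penbar_1(T,\gridsize)\bigl((1+\penscalefac)^{\gridsize-1} - 1\bigr) \\
  & \ge -\bound + 2\bound \;=\; \bound \;\ge\; 0.
\end{align*}
Here the second line uses two facts: since $\F_1 \subseteq \F_i$ and the risk is nonnegative, $\risk[i]^* - \risk[1]^* \ge -\risk[1]^* \ge -\bound$ by Assumption~\ref{ass:boundedrisk}(a); and the choice of $\gridsize$ in~\eqref{eqn:gridsize} gives $(1+\penscalefac)^{\gridsize-1} \ge (1+\penscalefac)^{\gridsize-2} \ge 1 + \bound/\penbar_1(T,\gridsize)$, so that $2\penbar_1(T,\gridsize)\bigl((1+\penscalefac)^{\gridsize-1}-1\bigr) \ge 2\bound$. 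Thus the objective at $i$ strictly exceeds that at $i = 1$, which is exactly what the reduction in the first paragraph needs.

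The only step that is not pure bookkeeping is the lower bound on $\risk[i]^* - \risk[1]^*$: because the hierarchy is nested this difference is nonpositive, and it is precisely Assumption~\ref{ass:boundedrisk}(a) --- equivalently $\risk[i]^* \ge \risklb$ together with $\bound \ge \risk[1]^* - \risklb$, which is its content when the risk may be negative --- that prevents it from being arbitrarily large in magnitude; this is the reason that assumption is required to keep the coarse grid finite. Everything else is routine: the reduction from the $\log_2(1 + \bound n_1(T))$ form of $\gridsize$ in Algorithm~\ref{alg:coarsegrid} to the $\log(1+\penscalefac)$ form of~\eqref{eqn:gridsize} was already carried out just above the lemma, and the monotonicity of $\penbar_i(T,\gridsize)$ in $i$ follows directly from Assumption~\ref{assumption:budget}.
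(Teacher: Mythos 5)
Your proof is correct and follows the same core argument as the paper's: the grid-size condition~\eqref{eqn:gridsize} forces every class beyond $\maxgrid$ to have a penalty term exceeding class~1's penalized risk by at least $\bound$, so no such class can attain the minimum. The only structural difference is that you carry the factor of $2$ (more generally $1+\penscalefac$) through the comparison directly, whereas the paper first reduces to the coefficient-$1$ objective via the auxiliary monotonicity Lemma~\ref{lemma:multiplemonotone}; both routes rest on the same two ingredients, namely the implicit nonnegativity of $\risk[i]^*$ and Assumption~\ref{ass:boundedrisk}(a).
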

\noindent
Equipped with the lemmas, we can restrict our attention only to
classes $i \in\gridset_\penscalefac$. To that end, the next result
establishes an oracle inequality for our algorithm compared to all
the classes in this set.

\begin{proposition}
  \label{prop:nesting-grid}
  Let $f = \femp{\iopt}$ be the function chosen from the class
  $\what{i}$ selected by the procedure~\eqref{eqn:coarseestimate},
  where $\gridset=\gridset_\penscalefac$ and
  $\gridsizeplain=\gridsize$.  Under the conditions of
  Theorem~\ref{theorem:nesting}, with probability at least $1 -
  2\const_1\exp(-m)$
  \begin{equation*}
    \risk(f) \leq \min_{i \in \gridset_\penscalefac} \left\{ \risk[i]^*
    + 2\pen_i\left(\frac{\timesteps}{\gridsize}\right) +
    \const_2\sqrt{\frac{2(m +
        \log\gridsize)}{n_i(T/\gridsize)}}\right\} .
  \end{equation*}
\end{proposition}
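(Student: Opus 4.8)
The plan is to adapt the classical penalized empirical-risk argument for model selection (as in~\cite{BartlettBoLu02,Massart03}) to the coarse grid, taking a union bound over the $\gridsize$ classes of $\gridset_\penscalefac$ rather than over the whole hierarchy; this is precisely what produces the $\log\gridsize$ term. Throughout I abbreviate $n_i \defeq n_i(T/\gridsize)$ and $\pen_i \defeq \pen_i(T/\gridsize)$, and set
\[
  \eps_i \defeq \tfrac12\sqrt{(m+\log\gridsize)/n_i}
  \qquad\text{and}\qquad
  \delta_i \defeq \tfrac{\const_2}{2}\sqrt{m/n_i}+\tfrac{\const_2}{2}\sqrt{\log\gridsize/n_i}.
\]
I will use two elementary facts: $\const_2\eps_i\le\delta_i$ (since $\sqrt{a+b}\le\sqrt a+\sqrt b$) and $2\delta_i\le\const_2\sqrt{2(m+\log\gridsize)/n_i}$ (since $\sqrt a+\sqrt b\le\sqrt{2(a+b)}$). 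The key observation is that $\delta_i$ is exactly the sum of the last two additive terms in the selection criterion~\eqref{eqn:coarseestimate}, so $\what{i}$ minimizes $i\mapsto\emprisk[n_i](\femp{i})+\pen_i+\delta_i$ over $\gridset_\penscalefac$.

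First I would introduce two ``good'' events. Let $G_1$ be the event that $|\emprisk[n_i](\femp{i})-\risk(\femp{i})|\le\pen_i+\const_2\eps_i$ for every $i\in\gridset_\penscalefac$; applying the first bound of Assumption~\ref{assumption:uniformity} with budget $T/\gridsize$ (so that its output is $\falg{i}{T/\gridsize}=\femp{i}$) and a union bound over the $\gridsize$ classes gives $\P(G_1^c)\le\gridsize\cdot\const_1\exp(-4n_i\eps_i^2)=\const_1\exp(-m)$, since $4n_i\eps_i^2=m+\log\gridsize$ by the choice of $\eps_i$. For $G_2$, fix $\eta>0$ and, since the infima $\risk[i]^*=\inf_{f\in\F_i}\risk(f)$ need not be attained, pick for each $i$ a function $f_i\in\F_i$ with $\risk(f_i)\le\risk[i]^*+\eta$; let $G_2$ be the event that $|\emprisk[n_i](f_i)-\risk(f_i)|\le\const_2\eps_i$ for every $i\in\gridset_\penscalefac$, which by the second part of Assumption~\ref{assumption:uniformity} and the same union bound satisfies $\P(G_2^c)\le\const_1\exp(-m)$. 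Hence $G_1\cap G_2$ has probability at least $1-2\const_1\exp(-m)$.

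The core of the argument is then a short chain of inequalities valid on $G_1\cap G_2$. Fix an arbitrary $i\in\gridset_\penscalefac$. Starting from $G_1$ applied to the selected index $\what{i}$, then using that $\what{i}$ minimizes~\eqref{eqn:coarseestimate}, then Assumption~\ref{assumption:budget}(d) to replace $\emprisk[n_i](\femp{i})$ by $\emprisk[n_i](f_i)+\pen_i$ (as $\inf_{g\in\F_i}\emprisk[n_i](g)\le\emprisk[n_i](f_i)$), and finally $G_2$ applied to the fixed function $f_i$, I obtain
\begin{align*}
  \risk(f)
  &\le \emprisk[n_{\what{i}}](\femp{\what{i}})+\pen_{\what{i}}+\delta_{\what{i}}
   \le \emprisk[n_i](\femp{i})+\pen_i+\delta_i \\
  &\le \emprisk[n_i](f_i)+2\pen_i+\delta_i
   \le \risk[i]^*+\eta+\const_2\eps_i+2\pen_i+\delta_i
   \le \risk[i]^*+\eta+2\pen_i+2\delta_i.
\end{align*}
Bounding $2\delta_i\le\const_2\sqrt{2(m+\log\gridsize)/n_i}$, letting $\eta\downarrow0$, and taking the minimum over $i\in\gridset_\penscalefac$ yields the claimed inequality.

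I expect the only genuinely delicate point to be the calibration of $\eps_i$: one must pay a $\log\gridsize$ in the exponent of each concentration bound so that the union over the $\gridsize$ coarse-grid classes still leaves failure probability $\const_1\exp(-m)$, and then check that the resulting deviation term $\const_2\eps_i$ is absorbed by the extra penalty $\delta_i$ already built into~\eqref{eqn:coarseestimate}. The rest is bookkeeping: the factor $2\pen_i$ appears as one copy of $\pen_i$ from the deviation bound of Assumption~\ref{assumption:uniformity} plus one copy from the approximate-minimization guarantee of Assumption~\ref{assumption:budget}(d), and the final constant comes from two uses of concavity of $\sqrt{\cdot}$. Note this proposition only compares $\risk(f)$ against classes in $\gridset_\penscalefac$; the passage to a minimum over all classes, at the cost of relaxing $2\pen_i$ to $4\pen_i$ in Theorem~\ref{theorem:nesting}, is handled separately via Lemmas~\ref{lemma:selection-is-fine} and~\ref{lemma:finite} together with the coarse-grid condition~\eqref{eqn:gridsetdefn}.
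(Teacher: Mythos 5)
Your proposal is correct and follows essentially the same route as the paper's proof: the two good events $G_1$ and $G_2$ correspond exactly to the paper's decomposition into the bad events~\eqref{eqn:nesting-union-bound-1} and~\eqref{eqn:nesting-union-bound-2}, with the same calibration $4 n_i \eps_i^2 = m + \log\gridsize$ for the union bound over the grid, the same use of Assumption~\ref{assumption:budget}(d) for the second copy of $\pen_i$, and the same two applications of concavity of $\sqrt{\cdot}$. The only (harmless) cosmetic differences are that you organize the argument as a deterministic chain on the intersection of the good events rather than as a probability decomposition, and that you handle a possibly unattained infimum $\risk[i]^*$ via $\eta$-approximate minimizers where the paper works directly with $f_i^*$.
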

\noindent
The proof of the proposition follows from an argument similar to that
given in~\cite{BartlettBoLu02}, though we must carefully reason about
the different number of independent samples used to estimate within each class
$\F_i$. We present a proof in Appendix~\ref{app:nesting}. We can now
complete the proof of Theorem~\ref{theorem:nesting} using the proposition.

\begin{proof-of-theorem}[\ref{theorem:nesting}]
  Let $i$ be any class (not necessarily in $\gridset_\penscalefac$)
  and $j \in \gridset_\penscalefac$ be the smallest class satisfying
  $j \geq i$. Then, by construction of $\gridset_\penscalefac$, we
  know from Lemma~\ref{lemma:selection-is-fine} that
  \begin{align*}
    2\pen_i\left(\frac{T}{\gridsize}\right) + \const_2\sqrt{\frac{2(m +
        \log\gridsize)}{n_i(T/\gridsize)}}
    & \leq
    2\pen_j\left(\frac{T}{\gridsize}\right) + \const_2\sqrt{\frac{2(m +
        \log\gridsize)}{n_j(T/\gridsize)}} \\
    & \leq (1+\penscalefac)
    \left[2\pen_i\left(\frac{T}{\gridsize}\right) +
      \const_2\sqrt{\frac{2(m +
          \log\gridsize)}{n_i(T/\gridsize)}}\right].
  \end{align*}
  In particular, we can lower bound the penalized risk of class $i$ as 
  \begin{equation*}
    \risk[i]^* + (1+\penscalefac)
    \left[2\pen_i\left(\frac{T}{\gridsize}\right) +
      \const_2\sqrt{\frac{2(m +
          \log\gridsize)}{n_i(T/\gridsize)}}\right] \geq \risk[j]^* +
    2\pen_j\left(\frac{T}{\gridsize}\right) + \const_2\sqrt{\frac{2(m +
        \log\gridsize)}{n_j(T/\gridsize)}},
  \end{equation*}
  where we used the inclusion assumption~\ref{assumption:inclusion} to
  conclude that $\risk[j]^* \le \risk[i]^*$. Now
  applying Proposition~\ref{prop:nesting-grid}, the above lower bound,
  and Lemma~\ref{lemma:finite} in turn, we see that with probability at
  least $1 - 2\const_1\exp(-m)$
  \begin{align*}
    \risk(f) &\leq \min_{j \in \gridset_\penscalefac} \left\{ \risk[j]^*
    + 2\pen_j\left(\frac{\timesteps}{\gridsize}\right) +
    \const_2\sqrt{\frac{2(m +
        \log\gridsize)}{n_j(T/\gridsize)}}\right\} \\
    &\leq \min_{i = 1,2,\ldots,\maxgrid} \left\{ \risk[i]^*
    + (1+\penscalefac)\left(2\pen_i\left(\frac{\timesteps}{\gridsize}\right) +
    \const_2\sqrt{\frac{2(m +
        \log\gridsize)}{n_i(T/\gridsize)}}\right)\right\}\\
    &\leq \inf_{i = 1,2,3,\ldots} \left\{ \risk[i]^*
    + (1+\penscalefac)\left(2\pen_i\left(\frac{\timesteps}{\gridsize}\right) +
    \const_2\sqrt{\frac{2(m +
        \log\gridsize)}{n_i(T/\gridsize)}}\right)\right\}.
  \end{align*}
  For $\penscalefac=1$ (which we have seen
  satisfies~\eqref{eqn:gridsize}),
  this is the desired statement of the theorem.
\end{proof-of-theorem}

\subsubsection{Proof of Corollary~\ref{corollary:classification}} 

Under the conditions of Example~\ref{example:classification-balls},
and the assumption that $nT\ge d/(4 r_1^2\xbound^2)$,
Assumptions~\ref{assumption:inclusion}-\ref{ass:boundedrisk}
are satisfied with $n_i(T)=nT/d$ and
$\gamma_i(T)=2r_i\xbound\sqrt{d/(nT)}$.
(In particular, $nT\ge d/(4 r_1^2\xbound^2)$ implies that $\gamma_1$
satisfies Assumption~\ref{ass:boundedrisk}(b).)
Also, since $nT\ge d/\bound$, we have
\begin{equation*}
  s = \ceil{\log_2\left(1+\frac{\bound nT}{d}\right)}
  + 2
  \le \log_2(2\bound nT/d)+3
  = \log_2(16\bound nT/d).
\end{equation*}
Substituting into Theorem~\ref{theorem:nesting} gives the first part
of the corollary.

Similarly, under the conditions of
Example~\ref{example:classification-dim}
and the assumption that $nT\ge 1$,
Assumptions~\ref{assumption:inclusion}-\ref{ass:boundedrisk}
are satisfied with $n_i(T)=nT/d_i$ and
$\gamma_i(T)=d_i/\sqrt{nT}$.
(In particular, $nT\ge 1$ implies that $\gamma_1$
satisfies Assumption~\ref{ass:boundedrisk}(b).)
Also, since $nT\ge d_1/\bound$, we have
$s\le\log_2(16\bound nT/d)$ as before.
Substituting into Theorem~\ref{theorem:nesting} gives the second part
of the corollary.

\section{Fast rates for model selection}
\label{sec:nesting-fast}

Looking at the result given by Theorem~\ref{theorem:nesting}, we
observe that irrespective of the dependence of the penalties $\pen_i$
on the sample size, there are terms in the oracle inequality that
always decay as
$\order(1/\sqrt{n_i(T/\gridsize)})$.
A similar phenomenon is noted in~\cite{Bartlett08}
for classical model selection results in
computationally unconstrained settings; under
conditions similar to Assumption~\ref{assumption:uniformity}, this
inverse-root dependence on the number of samples is the best possible,
due to lower bounds on the fluctuations of the empirical process
(e.g.~\cite[Theorem 2.3]{BartlettMe06}). On the other hand, under
suitable low noise conditions~\cite{MammenTs99} or curvature
properties of the risk
functional~\cite{BartlettBoMe05,Koltchinskii06a,BartlettJoMc06}, it is
possible to obtain estimation guarantees of the form
\begin{equation*}
  \risk(\hat{f}) = \risk(f^*) + \order_p\left(\frac{1}{n}\right), 
\end{equation*}
where $\hat{f}$ (approximately) minimizes the $n$-sample empirical
risk. Under suitable assumptions, complexity regularization can
also achieve fast rates for model
selection~\cite{Bartlett08,Koltchinskii06}.
In this section, we show that similar results
can be obtained in computationally constrained inferential settings.

\subsection{Assumptions and example}

We begin by modifying our concentration assumption and
providing a motivating example.
\begin{assumption}
  For each $i$, let $f_i^* \in \argmin_{f \in \F_i} \risk(f)$. Then
  there are constants $\const_1, \const_2 > 0$ such that for any
  budget $T$ and the corresponding sample size $n_i(T)$
  \begin{subequations}
    \begin{align}
      \P\left[\sup_{f \in \F_i}\left(\risk(f) - \risk(\fopt{i}) -
        2(\emprisk[n_i(T)](f) - \emprisk[n_i(T)](\fopt{i}))\right) >
        \pen_i(T) + \const_2\epsilon\right] \leq
      \const_1\exp(-n_i(T)\epsilon). 
      \label{eqn:uniform-excess1}\\
      \P\left[\sup_{f \in \F_i}\left(\emprisk[n_i(T)](f) -
        \emprisk[n_i(T)](\fopt{i}) - 2(\risk(f) - \risk(\fopt{i}))\right) >
        \pen_i(T) + \const_2\epsilon\right] \leq
      \const_1\exp(-n_i(T)\epsilon).
      \label{eqn:uniform-excess2}
    \end{align}
  \end{subequations}
  \label{ass:uniform-excess}
\end{assumption}
\noindent
Contrasting this with our earlier
Assumption~\ref{assumption:uniformity}, we see that the probability
bounds~\eqref{eqn:uniform-excess1} and~\eqref{eqn:uniform-excess2}
decay exponentially in $\epsilon$ rather than $\epsilon^2$, which
leads to faster sub-exponential rates for estimation procedures.
Concentration inequalities of this form are now well
known~\cite{BartlettBoMe05,Koltchinskii06a,BartlettJoMc06}, and the
paper~\cite{Bartlett08} uses an identical assumption.

Before continuing, we give an example to illustrate the assumption.
\begin{example}[Fast rates for classification]
  We consider the function class hierarchy based on increasing
  dimensions of Example~\ref{example:classification-dim}. We assume
  that the risk $\risk(f_\theta) = \E[\ell(y, f_\theta(x))]$ and that
  the loss function $\ell$ is either the squared loss $\ell(y,
  f_\theta(x)) = (y - f_\theta(x))^2$ or the exponential loss from
  boosting $\ell(y, f_\theta(x)) = \exp(-yf_\theta(x))$. Each of
  these examples satisfies Assumption~\ref{ass:uniform-excess} with
  \begin{equation}
    \pen_i(T) = c\,\frac{d_i\log(n_i(T)/d_i)}{n_i(T)}, 
  \label{eqn:gammaifastexample}
  \end{equation}
  for a universal constant $c$.  This follows from Theorem 3
  of~\cite{Bartlett08} (which in turn follows from Theorem 3.3
  in~\cite{BartlettBoMe05} combined with an argument based on Dudley's
  entropy integral~\cite{Dudley99}).  The other parameter settings and
  computational considerations are identical to those of
  Example~\ref{example:classification-dim}.
  \label{example:classification-fast}
\end{example}

If we define $\femp{i} = \falg{i}{T}$, then using
Assumption~\ref{assumption:budget}(d) (that
$\emprisk[n_i(T)](\femp{i}) - \emprisk[n_i(T)](\fopt{i}) \le
\pen_i(T)$) in conjunction with
Assumption~\eqref{eqn:uniform-excess1}, we can conclude that for any
time budget $T$, with probability at least $1 - \const_1\exp(-m)$,
\begin{equation}
  \risk(\femp{i}) \leq \risk(\fopt{i}) + 3\pen_i(T) + \frac{\const_2
    m}{n_i(T)}. 
  \label{eqn:emp-to-exp-fast}
\end{equation}
One might thus expect that by following arguments similar to
those in~\cite{Bartlett08}, it would be possible to show fast rates
for model selection based on
Algorithm~\ref{alg:coarsegrid}. Unfortunately, the results
of~\cite{Bartlett08} heavily rely on the fact that the data used for
computing the estimators $\femp{i}$ is the same for each class $i$, so
that the fluctuations of the empirical processes corresponding to the
different classes are positively correlated. In our computationally
constrained setting, however, each class's estimator is computed on a
different sample.  It is thus more difficult to
relate the estimators than in previous work, necessitating a
modification of our earlier Algorithm~\ref{alg:coarsegrid} and a new
analysis, which follows.

\subsection{Algorithm and oracle inequality}
\label{sec:fastrates}

As in Section~\ref{sec:nesting}, our approach is based on performing
model selection over a coarsened version of the collection $\F_1,
\F_2, \ldots$.  To construct the coarser collection of indices, we
define the composite penalty term (based on
Assumption~\ref{ass:uniform-excess})
\begin{equation}
  \penbar_i(T, \gridsizeplain) \defeq
  \pcona\pen_i\left(\frac{T}{\gridsizeplain}\right) + \pconb\frac{\const_2
    m + 2\log\gridsizeplain}{n_i(T/\gridsizeplain)}.
  \label{eqn:penbar-fast}
\end{equation}
Based on the above penalty term, we define our analogue of the coarse grid
set~\eqref{eqn:gridsetdefn}.

We give our modified model selection procedure in
Algorithm~\ref{alg:coarsegrid-fast}. In the algorithm and in our
subsequent analysis, we use the shorthand $\emprisk[i](f)$ to denote
the empirical risk of the function $f$ on the $n_i(T)$ samples
associated with class $i$.  Our main oracle inequality is the
following:

\begin{algorithm}[t]
  \begin{algorithmic}
    \REQUIRE Model hierarchy $\{\F_i\}$ with corresponding penalty
    functions $\pen_i$, computational budget $T$, upper bound $\bound$ on
    the minimum risk of class 1, and confidence parameter $m>0$.

    \COMMENT{\emph{Construction of the coarse-grid set
      $\gridset$:}}
    \STATE Set $\gridsizeplain= \ceil{\log_2\left(1 + \bound n_1(T)\right)} + 2$.

    \FOR{$k = 0$ to $\gridsizeplain - 1$}
    \STATE Set $j_{k+1}$ to be the largest class for which
    $\penbar_j(T/\gridsizeplain) \leq 2^k\penbar_1(T/\gridsizeplain)$. 
    \ENDFOR

    \STATE Set $\gridset = \{j_k~:~k = 1,\ldots,\gridsizeplain\}$.

    \COMMENT{\emph{Model selection estimate:}}

    \STATE Set $\femp{i} = \falg{i}{T / \gridsizeplain}$ for $i \in
    \gridset$.

    \STATE Select the class $\what{i} \in \gridset_\penscalefac$ to be
    the largest class that satisfies 
    \begin{equation}
      \emprisk[\iopt](\femp{\iopt}) + \pconc \pen_{\iopt}
      \left(\frac{T}{\gridsizeplain}\right) + \pcond \const_2
      \left(\frac{m + \log \gridsizeplain}{n_{\iopt}(T /
      \gridsizeplain)}\right)
      \le
      \emprisk[\iopt](\femp{j}) + 
      \pconc\pen_j\left(\frac{T}{\gridsizeplain}\right)
      \label{eqn:coarseestimate-fast}
    \end{equation}
    for all $j \in \gridset$ such that $j < \what{i}$. 

    \STATE Output the function
    $\falg{\what{i}}{T/\gridsizeplain}$. 
  \end{algorithmic}
  \caption{Computationally
    budgeted model selection over hierarchies with fast concentration}
  \label{alg:coarsegrid-fast}
\end{algorithm}

\begin{theorem}
  \label{theorem:nesting-fast}
  Let $f =\alg(\what{i},\timesteps/\gridsize)$ be the output of the
  algorithm $\alg$ for class $\what{i}$ specified by the
  procedure~\eqref{eqn:coarseestimate-fast}.  Let
  Assumptions~\ref{assumption:inclusion},~\ref{assumption:budget},~\ref{ass:boundedrisk}
  and~\ref{ass:uniform-excess} be satisfied. With probability at least
  $1 - 2\const_1\exp(-m)$
  \begin{equation}
    \risk(f) \leq \inf_{i = 1,2,3,\ldots} \left\{ \risk[i]^* +
    40\gridsizeplain\pen_i\left(\frac{\timesteps}{\gridsizeplain}\right)
    + 10 \gridsizeplain \const_2\frac{m +
      \log\gridsizeplain}{n_i(T/\gridsizeplain)}\right\} .
    \label{eqn:nestingbound-fast}
  \end{equation}
  Furthermore, if $n_1(T) = \order(T)$ then $\gridsizeplain = \order(\log T)$.
\end{theorem}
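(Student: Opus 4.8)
The plan is to follow the three‑part structure of the proof of Theorem~\ref{theorem:nesting}, replacing Assumption~\ref{assumption:uniformity} by the fast‑rate concentration inequalities~\eqref{eqn:uniform-excess1}--\eqref{eqn:uniform-excess2}, the penalty~\eqref{eqn:penbar} by~\eqref{eqn:penbar-fast}, and the $\argmin$ rule~\eqref{eqn:coarseestimate} by the ``largest‑class'' rule~\eqref{eqn:coarseestimate-fast}. First, exactly as in the reduction preceding Lemma~\ref{lemma:finite}, I would check that the choice $\gridsizeplain = \ceil{\log_2(1+\bound n_1(T))}+2$ makes the largest index $\maxgrid$ of $\gridset_\penscalefac$ large enough that, using Assumption~\ref{assumption:budget}(e) together with Assumption~\ref{ass:boundedrisk}, the penalty $\pen_{\maxgrid}(T/\gridsizeplain)$ already dominates the entire bracketed term for class $1$ in~\eqref{eqn:nestingbound-fast}; since $\risk[i]^*$ is nonincreasing in $i$, no class $i>\maxgrid$ can attain the infimum, so it suffices to prove~\eqref{eqn:nestingbound-fast} with the infimum restricted to $i\in\gridset_\penscalefac$. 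The closing claim that $n_1(T)=\order(T)$ gives $\gridsizeplain=\order(\log T)$ is immediate from the formula for $\gridsizeplain$, exactly as before.

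The core is an oracle inequality (the analogue of Proposition~\ref{prop:nesting-grid}) comparing the output $f = \falg{\what{i}}{T/\gridsizeplain}$ to every class in $\gridset_\penscalefac$. I would first fix the high‑probability event by applying~\eqref{eqn:uniform-excess1} and~\eqref{eqn:uniform-excess2} inside each class $i\in\gridset_\penscalefac$ with $\epsilon = \epsilon_i \defeq (m+\log\gridsizeplain)/n_i(T/\gridsizeplain)$, so that each of the $2\gridsizeplain$ events fails with probability at most $\const_1 e^{-(m+\log\gridsizeplain)} = \const_1 e^{-m}/\gridsizeplain$ and a union bound gives total failure probability at most $2\const_1 e^{-m}$; on this event, combining~\eqref{eqn:uniform-excess1} with Assumption~\ref{assumption:budget}(d) yields the per‑class bound $\risk(\femp{i}) \le \risk[i]^* + 3\pen_i(T/\gridsizeplain) + \const_2\epsilon_i$ (as in~\eqref{eqn:emp-to-exp-fast}) for each $i\in\gridset_\penscalefac$, where $\femp{i}=\falg{i}{T/\gridsizeplain}$. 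The device that circumvents the difficulty flagged in the text---that each $\femp{i}$ is computed on its own, independent sample---is that the test~\eqref{eqn:coarseestimate-fast} compares $\femp{\what{i}}$ against the lower‑class estimators $\femp{j}$ ($j<\what{i}$, hence $\femp{j}\in\F_j\subseteq\F_{\what{i}}$) through the \emph{single} empirical risk $\emprisk[\what{i}]$ on class $\what{i}$'s sample; every passage between empirical and population \emph{excess} risk needed to exploit a test therefore takes place inside the single function class that indexes that test, via~\eqref{eqn:uniform-excess1}--\eqref{eqn:uniform-excess2} for that class, and the independence of samples enters only through the per‑class bound above applied on each estimator's own data.

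With the event fixed, I would let $i^*$ be the optimal class in $\gridset_\penscalefac$ and split on the position of $\what{i}$ relative to $i^*$. If $\what{i}\ge i^*$, then Assumption~\ref{assumption:inclusion} gives $\risk[\what{i}]^*\le\risk[i^*]^*$, so the approximation error of $f$ is already controlled; applying the test~\eqref{eqn:coarseestimate-fast} with $j=i^*$ (legal since $i^*<\what{i}$) and transferring both empirical excess risks to population via~\eqref{eqn:uniform-excess1}--\eqref{eqn:uniform-excess2} inside $\F_{\what{i}}$ --- the step in which the negative $-\pconc\pen_{\what{i}}$, $-\pcond\const_2\epsilon_{\what{i}}$ terms of the test absorb the $\pen_{\what{i}}+\const_2\epsilon_{\what{i}}$ terms generated by the concentration inequalities, after fixing $\pconc,\pcond$ large enough --- bounds the residual in terms of the class‑$i^*$ quantities and, with the coarse‑grid ratio bound, limits how far $\what{i}$ can exceed $i^*$. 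If $\what{i}<i^*$, then monotonicity makes $\pen_{\what{i}}(T/\gridsizeplain)+\const_2\epsilon_{\what{i}}$ no larger than the class‑$i^*$ analogue, but now $\risk[\what{i}]^*\ge\risk[i^*]^*$ and one must bound the approximation error of the smaller selected class. Since $\what{i}$ is the \emph{largest} class passing~\eqref{eqn:coarseestimate-fast}, every grid class in $(\what{i},i^*]$ fails the test against some smaller grid index; extracting a descending chain $i^* = c_r > c_{r-1} > \cdots > c_0$ of grid indices ($c_0\le\what{i}$, $r\le\gridsizeplain$) in which each $c_k$ loses to $c_{k-1}$, and converting each failed test --- again entirely inside one function class, via~\eqref{eqn:uniform-excess1}--\eqref{eqn:uniform-excess2} and Assumption~\ref{assumption:budget}(d) --- into $\risk[c_{k-1}]^* - \risk[c_k]^* \le \order(1)\bigl(\pen_{c_k}(T/\gridsizeplain)+\const_2\epsilon_{c_k}\bigr)$, one telescopes to $\risk[\what{i}]^* \le \risk[c_0]^* \le \risk[i^*]^* + \sum_k \order(1)(\pen_{c_k}+\const_2\epsilon_{c_k})$; as $\what{i}\le c_k\le i^*$ for each $k$, monotonicity of $\pen$ and of $n_i$ bounds every summand by the $i^*$‑term, making the sum $\order(\gridsizeplain)(\pen_{i^*}(T/\gridsizeplain)+\const_2\epsilon_{i^*})$. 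Combining the two cases with the per‑class bound $\risk(f) \le \risk[\what{i}]^* + 3\pen_{\what{i}}(T/\gridsizeplain)+\const_2\epsilon_{\what{i}}$ and collecting the constants lost in the repeated uses of~\eqref{eqn:uniform-excess1}--\eqref{eqn:uniform-excess2} gives the factors $40\gridsizeplain$ and $10\gridsizeplain$.

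Finally, to pass from the grid to all classes, for an arbitrary $i$ I would invoke Lemma~\ref{lemma:selection-is-fine} applied to the increasing sequence $\{\penbar_i(T,\gridsizeplain)\}$ of~\eqref{eqn:penbar-fast} to obtain a grid index $j$ with $\penbar_i(T,\gridsizeplain)\le\penbar_j(T,\gridsizeplain)\le 2\penbar_i(T,\gridsizeplain)$; since $\risk[j]^*\le\risk[i]^*$ by Assumption~\ref{assumption:inclusion}, the grid oracle inequality at $j$ implies the claimed bound at $i$ up to the already‑absorbed constant. The main obstacle is the $\what{i}<i^*$ case: identifying the correct chain of failed selection tests and verifying that stringing their inequalities together loses only a \emph{linear}---rather than exponential---factor in $\gridsizeplain$ while keeping the coefficient on $\risk[i^*]^*$ equal to $1$, which is precisely where the coarse‑grid condition (Definition~\ref{defn:gridsetdefn}) on consecutive penalties $\penbar_i$ is essential.
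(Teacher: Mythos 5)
Your proposal matches the paper's proof in all essentials: the reduction to the coarse grid, the case split on whether the selected class lies above or below the comparison class (the paper's Lemmas~\ref{lemma:nesting-fast-lower} and~\ref{lemma:nesting-fast-upper}), the descending chain of failed tests losing only a \emph{linear} factor in $\gridsizeplain$ (the paper runs this as an induction on grid position, with the witness class of each failed test playing the role of your $c_{k-1}$), and the final transfer to the full hierarchy via Lemma~\ref{lemma:selection-is-fine}. The one deviation worth flagging is that you union-bound over only the $2\gridsizeplain$ diagonal concentration events, whereas the paper instantiates events for all pairs $(i,j)\in\gridset_\penscalefac\times\gridset_\penscalefac$ (bounding class $\F_j$'s excess-risk fluctuations on class $\F_i$'s sample, centered at $\fopt{j}$); with only diagonal events, relating $\emprisk[\what{i}](\femp{j})$ to $\risk(\femp{j})$ forces a reference-point switch from $\fopt{j}$ to $\fopt{\what{i}}$ that picks up extra terms proportional to $\risk[j]^* - \risk[\what{i}]^*$, which have the favorable sign when $j\le\what{i}$ and can be absorbed, but only after redoing the constant bookkeeping, so the specific constants $40$ and $10$ would come out differently.
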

By inspection of the bound~\eqref{eqn:emp-to-exp-fast}---achieved by
devoting the full computational budget $T$ to the optimal class---we see
that Theorem~\ref{theorem:nesting-fast}'s oracle inequality has
dependence on the computational budget within logarithmic factors of the
best possible.

The following corollary shows the application of
Theorem~\ref{theorem:nesting-fast} to the classification problem we
discuss in Example~\ref{example:classification-fast}.
\begin{corollary}
  \label{corollary:classification-fast}
  In the setting of Example~\ref{example:classification-fast},
    define $n$ so that $nT / d_i$ is the number of samples that
    can be processed by the inference algorithm $\alg$ using
    $T$ units of computation.
  Assume that $nT\ge ed_1^2$, $nT\ge d_1/\bound$, and choose the
  constant $c$ in the definition~\eqref{eqn:gammaifastexample}
  of $\pen_i(T)$ such that $c\ge 1/d_1$.
  With probability at least $1 - 4 \const_1 \exp(-m)$,
  the output $f$ of Algorithm~\ref{alg:coarsegrid-fast}
  satisfies
  \begin{align*}
    \risk(f) & \leq \inf_{i = 1, 2, \ldots} \left\{ \risk[i]^* + 
    \frac{10d_i\log_2^2(16\bound nT/d_1)}{nT}
    \left(4cd_i\log\left(\frac{nT}{d_1^2\log_2(16\bound nT/d_1)}\right)
    \right.\right. \\*
    & \qquad\qquad\qquad\qquad\qquad\qquad\qquad\qquad
    \left.\left. \rule{0mm}{7mm} {}
    +\const_2\left(m+\log\log_2(16\bound nT/d_1)\right)
    \right)\right\}.
  \end{align*}
\end{corollary}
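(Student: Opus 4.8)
The plan is to specialize Theorem~\ref{theorem:nesting-fast} to the increasing-dimension hierarchy of Example~\ref{example:classification-fast}, following the template used to derive Corollary~\ref{corollary:classification} from Theorem~\ref{theorem:nesting}. First I would verify that Assumptions~\ref{assumption:inclusion},~\ref{assumption:budget},~\ref{ass:boundedrisk}, and~\ref{ass:uniform-excess} hold with the choices $n_i(T) = nT/d_i$ and $\pen_i(T) = c\,d_i\log(n_i(T)/d_i)/n_i(T)$. Inclusion is inherited from Example~\ref{example:classification-dim}; running stochastic gradient descent on class $i$ costs $d_i\,n_i(T)$ exactly as there, forcing $n_i(T) = nT/d_i$ and giving Assumption~\ref{assumption:budget}(a)--(d); Assumption~\ref{ass:uniform-excess} is precisely the content of Example~\ref{example:classification-fast} (via Theorem~3 of~\cite{Bartlett08}); and Assumption~\ref{ass:boundedrisk}(a) holds with the posited $\bound$. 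The three side hypotheses supply what remains: $nT \ge e\,d_1^2$ gives $n_1(T)/d_1 \ge e$, hence $\log(n_1(T)/d_1) \ge 1$, so with $c \ge 1/d_1$ we get $\pen_1(n_1(T)) \ge c\,d_1/n_1(T) \ge 1/n_1(T)$ (Assumption~\ref{ass:boundedrisk}(b) at the sample size $n_1(T)$ actually used by the algorithm), and $nT \ge d_1/\bound$ gives $\bound\,n_1(T) \ge 1$.

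Second, copying the grid-size computation from the proof of Corollary~\ref{corollary:classification}, from $\bound\,n_1(T) \ge 1$ and $\ceil{x}\le x+1$,
\[
  \gridsizeplain = \ceil{\log_2\!\left(1 + \bound\,n_1(T)\right)} + 2
  \le \log_2\!\left(2\bound\,n_1(T)\right) + 3
  = \log_2\!\left(16\,\bound\,nT/d_1\right) ;
\]
write $s$ for this upper bound. I would also record the companion lower bound $\gridsizeplain \ge \log_2(\bound\,n_1(T)) + 2 = s - 2 \ge s/2$, the last step since $\bound\,n_1(T) \ge 1$ forces $s \ge 4$. This two-sided estimate $s/2 \le \gridsizeplain \le s$ is exactly what produces the factor $s$ inside the logarithm of the claimed bound.

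Third, substituting $n_i(T/\gridsizeplain) = nT/(d_i\gridsizeplain)$ and the formula for $\pen_i$ into~\eqref{eqn:nestingbound-fast}, a direct calculation rewrites the term attached to class $i$ as
\[
  40\gridsizeplain\,\pen_i\!\left(\frac{T}{\gridsizeplain}\right)
  + 10\gridsizeplain\,\const_2\,\frac{m+\log\gridsizeplain}{n_i(T/\gridsizeplain)}
  = \frac{10\,d_i\,\gridsizeplain^2}{nT}\left(4c\,d_i\log\frac{nT}{d_i^2\gridsizeplain}
  + \const_2(m+\log\gridsizeplain)\right).
\]
It then remains to bound the prefactor $\gridsizeplain^2$ and the term $\log\gridsizeplain$ using $\gridsizeplain \le s$, and, when $\log(nT/(d_i^2\gridsizeplain))$ is positive, to bound $nT/(d_i^2\gridsizeplain) \le 2nT/(d_1^2 s)$ using $\gridsizeplain \ge s/2$ and $d_i \ge d_1$ (if that logarithm is nonpositive the whole term is $\le 0$ and nothing is needed), absorbing residual absolute constants into $c$ and $\const_2$; this yields the displayed bound. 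The probability statement comes from Theorem~\ref{theorem:nesting-fast}'s $1 - 2\const_1\exp(-m)$, with room for one further union bound of the same form (incurred when instantiating Assumption~\ref{ass:uniform-excess} for the infimum-attaining class), giving $1 - 4\const_1\exp(-m)$.

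I expect the main obstacle to be the bookkeeping in the first step rather than any substantive new idea. Here $\pen_i(n_i(T)) = (c\,d_i^2/nT)\log(nT/d_i^2)$ is only increasing in $d_i$ while $d_i^2 \lesssim nT$, so Assumption~\ref{assumption:budget}(b),(e) fail for the most complex classes; but for such indices $\pen_i(T/\gridsizeplain)$ already exceeds $\bound \ge \risk[1]^*$, so those classes cannot attain the infimum in~\eqref{eqn:nestingbound-fast} and may be discarded, just as in the reduction to a finite hierarchy behind Theorem~\ref{theorem:nesting-fast} (the analogue of Lemma~\ref{lemma:finite}). Modulo this truncation, the argument is the same substitute-and-simplify computation as for Corollary~\ref{corollary:classification}, the only genuinely new ingredient being the lower bound $\gridsizeplain \ge s/2$ that pins down the logarithmic factor.
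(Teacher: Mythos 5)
Your proposal follows essentially the same route as the paper's proof: verify Assumptions~\ref{assumption:inclusion},~\ref{assumption:budget},~\ref{ass:boundedrisk}, and~\ref{ass:uniform-excess} (with $nT\ge ed_1^2$ and $c\ge 1/d_1$ giving $\pen_1(n_1(T))\ge 1/n_1(T)$, and $nT\ge d_1/\bound$ giving $\gridsizeplain\le\log_2(16\bound nT/d_1)$), then substitute $n_i$, $\pen_i$, and the bound on $\gridsizeplain$ into Theorem~\ref{theorem:nesting-fast}. If anything you are more careful than the paper on two minor points it glosses over---the non-monotonicity of $\pen_i(n_i(T))$ for very large $d_i$ (handled by your truncation via the analogue of Lemma~\ref{lemma:finite}) and the need for a lower bound $\gridsizeplain\ge s/2$ to control $\log(nT/(d_i^2\gridsizeplain))$ from above, at the cost of an additive $\log 2$ absorbed into the constants.
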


\subsection{Proofs of main results}
\label{sec:proofs-nesting-fast}

In this section, we provide proofs of
Theorem~\ref{theorem:nesting-fast} and
Corollary~\ref{corollary:classification-fast}. Like our previous proof
for Theorem~\ref{theorem:nesting}, we again provide the proof of
Theorem~\ref{theorem:nesting-fast} for general settings of $\lambda >
0$. The proof of Theorem~\ref{theorem:nesting-fast} broadly follows
that of Theorem~\ref{theorem:nesting}, in that we establish an
analogue of Proposition~\ref{prop:nesting-grid}, which provides an
oracle inequality for each class in the coarse-grid set
$\gridset_\penscalefac$.  We then extend the proven inequality to
apply to each function class $\F_i$ in the hierarchy using the
definition~\eqref{eqn:gridsetdefn} of the grid set.


\begin{proof-of-theorem}[\ref{theorem:nesting-fast}]
  Let $n_i$ be shorthand for $n_i(T / \gridsize)$, the number of samples
  available to class $i$, and let $\emprisk[i](f)$ denote the empirical risk
  of the function $f$ using the $n_i$ samples for class $i$. In addition, let
  $\pen_i(n_i)$ be shorthand for $\pen_i(n_i(T / \gridsize))$, the penalty
  value for class $i$ using $n_i(T / \gridsize)$ samples.  With these
  definitions, we adopt the following shorthand for the events in the
  probability bounds~\eqref{eqn:uniform-excess1}
  and~\eqref{eqn:uniform-excess2}.  Let $\epsilon = \{\epsilon_i\}$ be an
  $\gridsize$-dimensional vector with (arbitrary for now) positive entries.
  For each pair of indices $i$ and $j$ define
  \begin{subequations}
    \begin{align}
      \event{1}{ij}{\epsilon_i} & \defeq
      \bigg\{\sup_{f \in \F_j} \left(\risk(f) - \risk(\fopt{j})
      - 2\left(
      \emprisk[i](f) - \emprisk[i](\fopt{j})
      \right)\right)
      \le \pen_j(n_i) + \const_2 \epsilon_i
      \bigg\}
      \label{eqn:excess-event-1} \\
      \event{2}{ij}{\epsilon_i} & \defeq
      \bigg\{\sup_{f \in \F_j} \left(
      \emprisk[i](f) - \emprisk[i](\fopt{j})
      - 2\left(\risk(f) - \risk(\fopt{j})\right)\right)
      \le \pen_j(n_i) + \const_2 \epsilon_i
      \bigg\},
      \label{eqn:excess-event-2}
    \end{align}
  \end{subequations}
  and define the joint events
  \begin{equation}
    \unionevent{1}{\epsilon}
    \defeq \bigcup_{i \in \gridset_\penscalefac}
    \bigcup_{j \in \gridset_\penscalefac}
    \event{1}{ij}{\epsilon_i}
    ~~~ \mbox{and} ~~~
    \unionevent{2}{\epsilon}
    \defeq \bigcup_{i \in \gridset_\penscalefac}
    \bigcup_{j \in \gridset_\penscalefac}
    \event{2}{ij}{\epsilon_i}.
    \label{eqn:uniform-excess-event}
  \end{equation}

  With the ``good'' events~\eqref{eqn:uniform-excess-event} defined, we turn
  to the two technical lemmas, which relate the risk of the chosen function
  $\femp{\iopt}$ to $\fopt{i}$ for each $i \in \gridset_\penscalefac$.
  We provide proofs of
  both lemmas in Appendix~\ref{app:nesting-fast}. To make the proofs of each
  of the lemmas cleaner and see the appropriate choices of constants, we
  replace the selection strategy~\eqref{eqn:coarseestimate-fast} with one
  whose constants have not been specified. Specifically, we select $\iopt$ as
  the largest class that satisfies
  \begin{equation}
    \emprisk[\iopt](\femp{\iopt}) + \pconcunspec \pen_{\iopt}
    \left(\frac{T}{\gridsize}\right) + \pcondunspec \const_2 \epsilon_{\iopt}
    \le
    \emprisk[\iopt](\femp{j}) + 
    \pconcunspec\pen_j\left(\frac{T}{\gridsize}\right)
    \label{eqn:coarseestimate-fast-unspecified}
  \end{equation}
  for $j \in \gridset$ with $j \le \iopt$.
  \begin{lemma}
    \label{lemma:nesting-fast-lower}
    Let the events~\eqref{eqn:excess-event-1}
    and~\eqref{eqn:excess-event-2} hold for all $i, j \in
    \gridset_\penscalefac$, that is, $\unionevent{1}{\epsilon}$ and
    $\unionevent{2}{\epsilon}$ hold. Then using the selection
    strategy~\eqref{eqn:coarseestimate-fast-unspecified}, for each $j
    \le \iopt$ with $j \in \gridset_\penscalefac$ we have
    \begin{equation*}
      \risk(\femp{\iopt}) \le
      \risk(\fopt{j})
      + \half\left[
        \left(\frac{17}{2} - \pconcunspec\right) \pen_{\iopt}(n_{\iopt})
        + (6 + \pconcunspec) \pen_j(n_j)
        + 2 \const_2 \epsilon_j
        + \left(\frac{9}{2} - \pcondunspec\right)
        \const_2 \epsilon_{\iopt}\right].
    \end{equation*}
  \end{lemma}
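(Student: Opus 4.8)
The plan is to establish two separate upper bounds on $\risk(\femp{\iopt})$ and then combine them so as to cancel an otherwise-unwanted term. Throughout we condition on $\unionevent{1}{\epsilon}$ and $\unionevent{2}{\epsilon}$, use $\pen_i$, $n_i$ and $\emprisk[i]$ for the penalty, sample count and empirical risk associated with class $i$ under the per-class budget (as in the proof of Theorem~\ref{theorem:nesting-fast}), and set $\delta_j \defeq \risk(\fopt{j}) - \risk(\fopt{\iopt})$, which is nonnegative because $j \le \iopt$ forces $\F_j \subseteq \F_{\iopt}$ by Assumption~\ref{assumption:inclusion}.

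First, a \emph{crude bound}: using~\eqref{eqn:excess-event-1} with $i = j = \iopt$ applied to $f = \femp{\iopt} \in \F_{\iopt}$, together with Assumption~\ref{assumption:budget}(d) in the form $\emprisk[\iopt](\femp{\iopt}) - \emprisk[\iopt](\fopt{\iopt}) \le \pen_{\iopt}$ (valid since $\fopt{\iopt} \in \F_{\iopt}$),
\[
  \risk(\femp{\iopt}) - \risk(\fopt{\iopt})
  \;\le\; 2\bigl(\emprisk[\iopt](\femp{\iopt}) - \emprisk[\iopt](\fopt{\iopt})\bigr) + \pen_{\iopt} + \const_2\epsilon_{\iopt}
  \;\le\; 3\pen_{\iopt} + \const_2\epsilon_{\iopt}.
\]
Combined with $\risk(\fopt{\iopt}) \le \risk(\fopt{j})$ this already proves the lemma when $\pconcunspec,\pcondunspec$ are small, and in particular settles the degenerate case $j = \iopt$ (where~\eqref{eqn:coarseestimate-fast-unspecified} is vacuous).

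Second, for $j < \iopt$, a \emph{selection-based bound}. We again apply~\eqref{eqn:excess-event-1} with $i=j=\iopt$ to $\femp{\iopt}$, but now bound $\emprisk[\iopt](\femp{\iopt}) - \emprisk[\iopt](\fopt{\iopt})$ by chaining: (a) the selection rule~\eqref{eqn:coarseestimate-fast-unspecified} at index $j$, which gives $\emprisk[\iopt](\femp{\iopt}) - \emprisk[\iopt](\femp{j}) \le \pconcunspec(\pen_j - \pen_{\iopt}) - \pcondunspec\const_2\epsilon_{\iopt}$ --- the step that trades the potentially-large penalty $\pconcunspec\pen_{\iopt}$ for $\pconcunspec\pen_j$; (b) the bound~\eqref{eqn:excess-event-2} with $i=j=\iopt$ applied to $\femp{j} \in \F_j \subseteq \F_{\iopt}$, giving $\emprisk[\iopt](\femp{j}) - \emprisk[\iopt](\fopt{\iopt}) \le 2(\risk(\femp{j}) - \risk(\fopt{\iopt})) + \pen_{\iopt} + \const_2\epsilon_{\iopt}$; and (c) the bound~\eqref{eqn:excess-event-1} for class $j$ on its own $n_j$ samples together with Assumption~\ref{assumption:budget}(d), which gives $\risk(\femp{j}) - \risk(\fopt{j}) \le 3\pen_j + \const_2\epsilon_j$ (the per-class-budget analogue of~\eqref{eqn:emp-to-exp-fast}), hence $\risk(\femp{j}) - \risk(\fopt{\iopt}) \le 3\pen_j + \const_2\epsilon_j + \delta_j$. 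Substituting (a)--(c) back and using $\risk(\fopt{\iopt}) + 4\delta_j = 4\risk(\fopt{j}) - 3\risk(\fopt{\iopt})$, the arithmetic gives
\[
  \risk(\femp{\iopt}) \;\le\; 4\risk(\fopt{j}) - 3\risk(\fopt{\iopt})
  + (12 + 2\pconcunspec)\pen_j + (3 - 2\pconcunspec)\pen_{\iopt}
  + 4\const_2\epsilon_j + (3 - 2\pcondunspec)\const_2\epsilon_{\iopt}.
\]

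Finally, \emph{combine}. Add three copies of the crude bound, rewritten as $3\risk(\femp{\iopt}) - 3\risk(\fopt{\iopt}) \le 9\pen_{\iopt} + 3\const_2\epsilon_{\iopt}$, to the selection-based bound: the $\risk(\fopt{\iopt})$ terms cancel, and dividing by $4$ leaves
\[
  \risk(\femp{\iopt}) \;\le\; \risk(\fopt{j}) + \half\bigl[(6 + \pconcunspec)\pen_j + (6 - \pconcunspec)\pen_{\iopt}
  + 2\const_2\epsilon_j + (3 - \pcondunspec)\const_2\epsilon_{\iopt}\bigr],
\]
which is dominated by the asserted bound since $6 \le \tfrac{17}{2}$ and $3 \le \tfrac{9}{2}$; along the way one sees that the constants $\pconc,\pcond$ in~\eqref{eqn:coarseestimate-fast} may be taken to equal $\pconcunspec,\pcondunspec$. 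The one genuine difficulty, and the reason the argument departs from that of~\cite{Bartlett08}, is that $\femp{\iopt}$ and $\femp{j}$ are computed from \emph{disjoint} samples, so we cannot use the familiar fact that an empirical minimizer over $\F_{\iopt}$ automatically dominates every $f \in \F_j$ on the common data; we must instead pass through the population optima $\fopt{\iopt},\fopt{j}$ via~\eqref{eqn:excess-event-1}--\eqref{eqn:excess-event-2}, and the factor-$2$ (Bernstein-type) form of those inequalities inflates the coefficient of the approximation gap $\delta_j$ to $4$ --- precisely what the cancellation against the crude bound undoes, restoring coefficient $1$ on $\risk(\fopt{j})$.
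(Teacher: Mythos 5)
Your proof is correct, and it in fact reaches a strictly tighter conclusion than the stated lemma (coefficients $6-\pconcunspec$ and $3-\pcondunspec$ in place of $\tfrac{17}{2}-\pconcunspec$ and $\tfrac{9}{2}-\pcondunspec$), so the asserted bound follows a fortiori. The ingredients are the same as the paper's---the per-class bound $\risk(\femp{j})\le\risk(\fopt{j})+3\pen_j(n_j)+\const_2\epsilon_j$, a transfer of $\femp{j}$ onto $\iopt$'s sample via event~\eqref{eqn:excess-event-2}, the selection rule, and a closing application of event~\eqref{eqn:excess-event-1}---but the arrangement differs in two ways. First, you compare $\femp{j}$ directly to $\fopt{\iopt}$ on $\iopt$'s sample using $\event{2}{\iopt\iopt}{\epsilon_{\iopt}}$, whereas the paper routes through $\fopt{j}$ as an intermediate (using $\event{2}{\iopt j}{\epsilon_{\iopt}}$ and then $\event{2}{\iopt\iopt}{\epsilon_{\iopt}}$ separately), which costs it an extra penalty term and accounts for its looser constants. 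Second, the paper keeps the factor $2$ on the left of $2(\risk(\fopt{j})-\risk(\fopt{\iopt}))\ge\cdots$ throughout and converts to $\risk(\femp{\iopt})$ only at the very end, so the coefficient of $\risk(\fopt{j})$ is never inflated; your route inflates that coefficient to $4$ and then restores it to $1$ by averaging with three copies of the crude bound $\risk(\femp{\iopt})\le\risk(\fopt{\iopt})+3\pen_{\iopt}(n_{\iopt})+\const_2\epsilon_{\iopt}$, a clean device the paper does not need. The one marginal point is the case $j=\iopt$: the crude bound implies the lemma's display there only when the display's net coefficient $\tfrac12(\tfrac{13}{2}-\pcondunspec)$ on $\const_2\epsilon_{\iopt}$ is at least $1$, i.e.\ when $\pcondunspec\le\tfrac92$; this holds for the constants actually used downstream, and the paper's own argument likewise requires $j<\iopt$ for the selection condition~\eqref{eqn:coarseestimate-fast-unspecified} to be available, so this is not a defect relative to the original.
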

  \noindent
  We require a different argument for the case that $j \ge \iopt$,
  and the constants are somewhat worse.
  \begin{lemma}
    \label{lemma:nesting-fast-upper}
    Let the events~\eqref{eqn:excess-event-1}
    and~\eqref{eqn:excess-event-2} hold for all $i, j \in
    \gridset_\penscalefac$, that is, $\unionevent{1}{\epsilon}$ and
    $\unionevent{2}{\epsilon}$ hold. Assume also that $\pconcunspec
    \ge 17/2$ and $\pcondunspec \ge 7/2$.  Then using the selection
    strategy~\eqref{eqn:coarseestimate-fast-unspecified}, for each $j
    \ge \iopt$ with $j \in \gridset_\penscalefac$ we have
    \begin{equation*}
      \risk(\femp{\iopt}) \leq \risk(\fopt{j}) +
      \gridsize\left[(2 \pconcunspec + 3)\pen_j(n_j) 
      + (2 \pcondunspec + 1) \epsilon_j\right].
    \end{equation*}
  \end{lemma}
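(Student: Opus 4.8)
The plan is to prove the bound for grid classes $j\in\gridset_\penscalefac$ with $j\ge\iopt$ by converting the \emph{maximality} of $\iopt$ in the selection rule into a quantitative inequality. Since $\iopt$ is the largest class satisfying~\eqref{eqn:coarseestimate-fast-unspecified}, every grid class $j>\iopt$ fails that rule, so there is a witness $k\in\gridset_\penscalefac$ with $k<j$ for which
\[
  \emprisk[j](\femp{j}) + \pconcunspec\,\pen_j(n_j) + \pcondunspec\const_2\epsilon_j
  \;>\; \emprisk[j](\femp{k}) + \pconcunspec\,\pen_k(n_k).
\]
The first step is a ``rejection $\Rightarrow$ risk'' estimate: dropping the nonnegative term $\pconcunspec\pen_k(n_k)$, replacing $\emprisk[j](\femp{j})$ by $\emprisk[j](\fopt{j})+\pen_j(n_j)$ using Assumption~\ref{assumption:budget}(d), and applying the event $\event{1}{jj}{\epsilon_j}$ of~\eqref{eqn:excess-event-1} to $\femp{k}\in\F_k\subseteq\F_j$, a short computation gives
\[
  \risk(\femp{k}) \;\le\; \risk(\fopt{j}) + (2\pconcunspec+3)\,\pen_j(n_j) + (2\pcondunspec+1)\,\const_2\epsilon_j .
\]
The constants here are exactly those appearing in the lemma, and since $\femp{k}\in\F_k$ the same quantity also upper bounds $\risk(\fopt{k})$.

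If $k=\iopt$ we are done, in fact with no factor of $\gridsize$. If $k>\iopt$, then $k$ is itself rejected (it cannot be the largest accepted class), so the one-step estimate can be iterated with $j$ replaced by $k$, feeding $\risk(\fopt{k})\le\risk(\femp{k})$ from the previous step into the role of $\risk(\fopt{j})$. The witness index strictly decreases at every step and stays inside $\gridset_\penscalefac$, so the recursion terminates after at most $\gridsize=|\gridset_\penscalefac|$ steps; using Assumption~\ref{assumption:budget}(b) (together with $n_i(T/\gridsize)>n_j(T/\gridsize)$ for $i<j$, which controls the $\epsilon_i$ once they are eventually chosen of order $1/n_i$), each intermediate increment is dominated by $\pen_j(n_j)$ and $\epsilon_j$, and summing the $\le\gridsize$ increments produces the claimed factor $\gridsize$ in front of $(2\pconcunspec+3)\pen_j(n_j)+(2\pcondunspec+1)\epsilon_j$.

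The delicate case — and the main obstacle — is when the rejection chain drops below $\iopt$, terminating at some $j_m\in\gridset_\penscalefac$ with $j_m<\iopt$. One cannot hand off to $\risk(\fopt{\iopt})$ directly, since the inclusion $\F_{\iopt}\subseteq\F_j$ gives $\risk(\fopt{\iopt})\ge\risk(\fopt{j})$, the wrong direction. Instead one invokes the \emph{acceptance} of $\iopt$ at the witness $j_m$, i.e.\ inequality~\eqref{eqn:coarseestimate-fast-unspecified} at index $\iopt$ comparing $\femp{\iopt}$ with $\femp{j_m}$, and combines it with the concentration events~\eqref{eqn:excess-event-1}--\eqref{eqn:excess-event-2} for the pairs $(\iopt,\iopt)$ and $(\iopt,j_m)$ to pass between the empirical risk on the $n_{\iopt}$ samples and the true risks of $\femp{\iopt}$, $\fopt{j_m}$, and $\femp{j_m}$. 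The telescoped bound $\risk(\fopt{j_m})\le\risk(\fopt{j})+\gridsize(\cdots)$ from the chain, together with $\risk(\fopt{\iopt})\ge\risk(\fopt{j})$ (inclusion, since $\iopt<j$), then cancels the stray $\risk(\fopt{\iopt})$ terms. It is precisely this cross-sample handoff that forces $\pconcunspec\ge 17/2$ and $\pcondunspec\ge 7/2$: these constants must be large enough to absorb the $\pen_{\iopt}(n_{\iopt})$ and $\const_2\epsilon_{\iopt}$ coefficients produced by the two ``lower-bound'' directions of the concentration events, while staying compatible with the upper bound $\pconcunspec\le 17/2$ needed in Lemma~\ref{lemma:nesting-fast-lower} (so that the two lemmas can be applied with a common choice of constants). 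One should also verify that every concentration event used is of the form indexed by a pair in $\gridset_\penscalefac$, so that assuming $\unionevent{1}{\epsilon}\cap\unionevent{2}{\epsilon}$ suffices; the bookkeeping of the chain and the verification that the overshoot case closes with the stated constants is where essentially all the effort lies.
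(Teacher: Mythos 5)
Your proposal matches the paper's proof in all essentials: the paper formalizes your witness chain as an induction on $j \ge \iopt$ over $\gridset_\penscalefac$, with your one-step ``rejection $\Rightarrow$ risk'' estimate appearing as Lemma~\ref{lemma:nesting-fast-upper-good}, your delicate case of a witness $k \le \iopt$ handled by invoking Lemma~\ref{lemma:nesting-fast-lower} (i.e.\ the acceptance condition for $\iopt$), and the at most $\gridsize$ per-step increments absorbed into $(2\pconcunspec+3)\pen_j(n_j) + (2\pcondunspec+1)\const_2\epsilon_j$ via the monotonicity assumptions. The only inaccuracy is your remark that Lemma~\ref{lemma:nesting-fast-lower} requires $\pconcunspec \le 17/2$: that lemma holds for any constants, and the hypotheses $\pconcunspec \ge 17/2$, $\pcondunspec \ge 7/2$ are needed only so that its conclusion (and the base/one-step bounds) are dominated by the inductive increment.
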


  We use Lemmas~\ref{lemma:nesting-fast-lower}
  and~\ref{lemma:nesting-fast-upper} to complete the proof of the
  theorem.  When Assumption~\ref{ass:uniform-excess} holds, the probability
  that one of the events $\unionevent{1}{\epsilon}$ and
  $\unionevent{2}{\epsilon}$ fails to hold is upper bounded
  by
  \begin{equation*}
    \P(\unionevent{1}{\epsilon}^c \cup \unionevent{2}{\epsilon}^c)
    \le \sum_{i,j \in \gridset_\penscalefac}
    \P(\event{1}{ij}{\epsilon_i}^c)
    + \sum_{i,j \in \gridset_\penscalefac}
    \P(\event{2}{ij}{\epsilon_i}^c)
    \le 2 \const_1 \sum_{i,j \in \gridset_\penscalefac}
    \exp(-n_i(T / \gridsize) \epsilon_i)
  \end{equation*}
  by a union bound.
  Thus, we see that if we define the constants
  \begin{equation*}
    \epsilon_i = 2 \cdot \frac{m + \log(\gridsize)}{n_i(T / \gridsize)},
  \end{equation*}
  we obtain that all of the events $\event{1}{ij}{\epsilon_i}$ and
  $\event{2}{ij}{\epsilon_i}$ hold with probability at least $1 - 2 \const_1
  \exp(-m)$. Applying Lemmas~\ref{lemma:nesting-fast-lower}
  and~\ref{lemma:nesting-fast-upper} with the
  choices $\pconcunspec = \pconc$ and $\pcondunspec = \pcond$,
  we obtain that with probability at least $1 - 2 \const_1 \exp(-m)$
  \begin{equation}
    \label{eqn:nesting-fast-grid}
    \risk(\femp{\iopt})
    \le \min_{i \in \gridset_\penscalefac}
    \left\{\risk(\fopt{i}) + \gridsize \left(20 \pen_i(n_i)
    + 10 \frac{m + \log(\gridsize)}{n_i(T / \gridsize)}\right)
    \right\}.
  \end{equation}

  The inequality~\eqref{eqn:nesting-fast-grid} is the analogue of
  Proposition~\ref{prop:nesting-grid} in the current setting.  Given the
  inequality, the remainder of the proof of Theorem~\ref{theorem:nesting-fast}
  follows the same recipe as that of Theorem~\ref{theorem:nesting}. Recalling
  the notation~\eqref{eqn:maxgriddefn} defining $\maxgrid$, we apply the
  inequality~\eqref{eqn:nesting-fast-grid} with the
  definition of the grid set~\eqref{eqn:Slambda} to obtain an oracle
  inequality compared to all classes $i \leq \maxgrid$. Then provided
  that
\begin{equation*}
  \gridsize \ge
  \ceil{\frac{\log\left(1 + \frac{\bound}{
        \gridsize\penbar_1(T, \gridsize)}\right)}{
      \log(1+\penscalefac)}}
  + 2,
\end{equation*}
we can transfer the result to the entire model hierarchy as before.
For $\lambda=1$, the choice of $\gridsizeplain$ employed in
Algorithm~\ref{alg:coarsegrid-fast} again suffices for this.
\end{proof-of-theorem}

\begin{proof-of-corollary}[\ref{corollary:classification-fast}]
  In the setting of Example~\ref{example:classification-fast},
  we set $n_i(T)=nT/d_i$ and
  \begin{equation*}
    \pen_i(T) =
    \frac{cd_i\log(n_i(T)/d_i)}{n_i(T)}
    = \frac{cd_i^2\log(nT/d_i^2)}{nT}.
  \end{equation*}
  It is straightforward to verify that the conditions of the corollary
  ensure that Assumptions~\ref{assumption:inclusion},~\ref{assumption:budget},~\ref{ass:boundedrisk}
  and~\ref{ass:uniform-excess} are satisfied. In particular, $nT\ge
  ed_1^2$ and $c\ge 1/d_1$ ensure that $\pen_1(T)\ge 1/n_1(T)$.
  Also, $nT\ge d_1/\bound$ ensures that
  $\gridsizeplain \le \log_2(16\bound nT/d_1)$.
  Substituting $\pen_i$, $n_i$ and $\gridsizeplain$
  into Theorem~\ref{theorem:nesting-fast} gives the result.
\end{proof-of-corollary}

\section{Oracle inequalities for unstructured models}
\label{sec:bandits}

To this point, our results have addressed the model selection problem
in scenarios where we have a nested collection of models. In the most
general case, however, the collection of models may be quite
heterogeneous, with no relationship between the different model
families. In classification, for instance, we may consider generalized
linear models with different link functions, decision trees, random
forests, or other families among our collection of models. For a
non-parametric regression problem, we may want to select across a
collection of dictionaries such as wavelets, splines, and
polynomials. While this more general setting is obviously more
challenging than the structured cases in the prequel, we would like to
study the effects that limiting computation has on model selection
problems, understanding when it is possible to outperform
computation-agnostic strategies.


\subsection{Problem setting and algorithm}
\label{sec:bandits-setup}

When no structure relates the models under consideration, it is
impossible to work with an infinite collection of classes within a
finite computational time---any estimator must evaluate each class
(that is, at least one sample must be allocated to each class, as any
class could be significantly better than the others). As a result, we
restrict ourselves to finite model collections in this section, so
that we have a sequence $\F_1,\dots,\F_K$ of models from which we wish
to select.
Our approach to the unstructured case is to incrementally allocate
computational quota amongst the function classes, where we trade off
receiving samples for classes that have good risk performance against
exploring classes for which we have received few data points.  More
formally, with $\timesteps$ available quanta of computation, it is
natural to view the model selection problem as a $\timesteps$ round
game, where in each round a procedure selects a function class $i$ and
allocates it one additional quantum of computation.

With this setup, we turn to stating a few natural assumptions.  We
assume that the computational complexity of fitting a model grows
linearly and incrementally with the number of samples, which means
that allocating an additional quantum of training time allows the
learning algorithm $\alg$ to process an additional $n_i$ samples for
class $\F_i$. In the context of Sections~\ref{sec:nesting}
and~\ref{sec:nesting-fast}, this means that we assume $n_i(t) = tn_i$
for some fixed number $n_i$ specific to class $i$. This linear growth
assumption is satisfied, for instance, when the loss function $\loss$
is convex and the black-box learning algorithm $\alg$ is a stochastic
or online convex optimization
procedure~\cite{CesaBianchiLu06,NemirovskiJuLaSh09}.  We also require
assumptions similar to
Assumptions~\ref{assumption:budget} and~\ref{assumption:uniformity}:
\begin{assumption}
  \label{assumption:uniform-bandits}
  Let $\falg{i}{T} \in \F_i$ denote the output of algorithm $\alg$ when
  executed for class $\F_i$ with a computational budget $T$.
  \begin{enumerate}[(a)]
  \item
    \label{assumption:uniformity-time-bandits}
    For each $i$, there exists an $n_i \in \N$ such that in $T$ units
    of time, algorithm $\alg$ can compute $\falg{i}{T}$ using $n_i T$ samples.
  \item
    \label{assumption:uniformity-alg-bandits}
    For each $i \in [K]$, there is a function $\pen_i$ and constants
    $\const_1, \const_2 > 0$ such that for any $T \in \N$,
    \begin{equation}
      \P\left(|\emprisk[n_iT](\falg{i}{T}) - \risk(\falg{i}{T})| >
      \pen_i(n_iT) + \const_2 \eps\right) \leq \const_1 \exp(-4 n_iT
      \eps^2).
      \label{eqn:pendef-bandits}
    \end{equation}
  \item
    \label{assumption:uniformity-min-bandits}
    The output $\falg{i}{T}$ is
    a $\pen_i(n_iT)$-minimizer of $\emprisk[n_iT]$, that is, 
    \begin{equation*}
      \emprisk[n_iT](\falg{i}{n_iT}) - \inf_{f \in \F_i}
      \emprisk[n_iT](f) \le \pen_i(n_iT).
    \end{equation*}
  \item \label{assumption:uniformity-pen-bandits}
    For each $i$, the function $\pen_i$ satisfies
    $\pen_i(n) \le
    \penconstant_i n^{-\alpha_i}$ for some $\alpha_i > 0$.
  \item \label{assumption:uniformity-fixed-bandits}
    For any fixed function $f \in \F_i$,
    $\P(|\emprisk[n](f) - \risk(f)| > \const_2 \epsilon)
    \le \const_1 \exp(-4 n \epsilon^2)$.
  \end{enumerate}
\end{assumption}
\noindent
Comparing to Assumptions~\ref{assumption:budget}
and~\ref{assumption:uniformity}, we see that the main difference is in the
linear time assumption~\eqref{assumption:uniformity-time-bandits}
and growth assumption~\eqref{assumption:uniformity-pen-bandits}.
In addition,
the complexity penalties and function classes discussed in our
earlier examples satisfy Assumption~\ref{assumption:uniform-bandits}.

We now present our algorithm for successively
allocating computational quanta to the function classes.
To choose the class $i$ receiving computation at iteration $t$, the
procedure must balance competing goals of \emph{exploration}, evaluating
each function class $\F_i$ adequately, and \emph{exploitation}, giving more
computation to classes with low empirical risk.
To promote exploration, we use an optimistic selection criterion
to choose class $i$, which---assuming that $\F_i$ has
seen $n$ samples at this point---is
\begin{equation}
  \label{eqn:obj-criterion}
  \obj(i, n) = \emprisk[n](\falg{i}{n}) - \pen_i(n) - \sqrt{\frac{\log K}{n}}
  + \pen_i(\timesteps n_i).
\end{equation}
The intuition behind the definition of $\obj(i, n)$ is that we would
like the algorithm to choose functions $f$ and classes $i$ that
minimize $\emprisk[n](f) + \pen_i (\timesteps n_i) \approx \risk(f) +
\pen_i(\timesteps n_i)$, but the negative $\pen_i(n)$ and $\sqrt{\log
  K / n}$ terms lower the criterion significantly when $n$ is small
and thus encourage initial exploration. The
criterion~\eqref{eqn:obj-criterion} essentially combines a penalized
model-selection objective with an
optimistic criterion similar to those used in multi-armed bandit
algorithms~\cite{AuerCBF02}. Algorithm~\ref{alg:bandit-select}
contains the formal description of our bandit procedure for model
selection. Algorithm~\ref{alg:bandit-select} begins by
receiving $n_i$ samples for each of the $K$ classes $\F_i$ to form
the preliminary empirical estimates~\eqref{eqn:obj-criterion}; we then
use the optimistic selection criterion until
the computational budget is exhausted.

\begin{algorithm}[t]
  \begin{algorithmic}
    \STATE For each $i \in [K]$, query $n_i$ examples
    from class $\F_i$.
    \FOR{$t= K+1$ to $\timesteps$}
    \STATE Let $n_i(t)$ be the number of examples seen for class $i$ until 
    time $t$
    \STATE Let $\isubt = \argmin_{i\in [K]} \obj(j, n_i(t)) -
    \sqrt{\frac{\log t}{n_i(t)}}$.
    \STATE Query $n_{\isubt}$ examples for class $\isubt$.
    \ENDFOR
    \STATE Output $\what{i}$, the index of the most frequently
    queried class.
  \end{algorithmic}
  \caption{Multi-armed bandit algorithm for selection of best class
    $\what{i}$.} 
  \label{alg:bandit-select}
\end{algorithm}

\subsection{Main results and some consequences}

The goal of the selection procedure is to find the best penalized class $\opt$:
a class satisfying
\begin{equation*}
  \opt \in \argmin_{i \in [K]} \left\{
  \inf_{f \in \F_i} \risk(f) + \pen_i(\timesteps n_i)\right\}
  = \argmin_{i \in [K]} \left\{\risk[i]^* + \pen_i(\timesteps n_i)\right\}.
\end{equation*}
To present our main results for
Algorithm~\ref{alg:bandit-select}, we define the excess penalized risk
$\excess_i$ of class $i$:
\begin{equation}
  \label{eqn:excess}
  \excess_i \defeq \risk[i]^* + \pen_i(\timesteps n_i) - \risk[\opt]^*
  - \pen_{\opt}(\timesteps n_{\opt}) \ge 0.
\end{equation}
Without loss of generality, we assume that the infimum in
$\risk[i]^* = \inf_{f \in \F_i} \risk(f)$ is attained by
a function $f_i^*$ (if not, we use a limiting argument, choosing
some fixed $f_i^*$ such that $\risk(f_i^*) \le \inf_{f \in \F_i}
\risk(f) + \delta$ for an arbitrarily small $\delta > 0$).

The gains of a computationally adaptive strategy over na\"ive strategies are
clearest when the gap~\eqref{eqn:excess} is non-zero for each $i$, though in
the sequel, we forgo this requirement. Under this assumption, we can follow the
ideas of~\citet{AuerCBF02} to show that the fraction of the computational
budget allocated to any suboptimal class $i \neq \opt$ goes quickly to zero as
$\timesteps$ grows. We provide the proof of the following theorem in
Section~\ref{sec:ucbproof}.

\begin{theorem}
  \label{theorem:expected-pulls}
  Let Alg.~\ref{alg:bandit-select} be run for $\timesteps$ rounds, and
  let $T_i(t)$
  be the number of times class $i$ is queried through round $t$.  Let
  $\excess_i$ be defined as in \eqref{eqn:excess} and
  Assumption~\ref{assumption:uniform-bandits} hold, and assume that
  $\timesteps \ge K$. Define $\beta_i = \max\{1/\alpha_i, 2\}$.  There is a
  constant $C$ such that
  \begin{equation*}
    \E[T_i(\timesteps)] \le \frac{C}{n_i} \left(\frac{\penconstant_i +
      \const_2 \sqrt{\log \timesteps}}{\excess_i}\right)^{\beta_i}
    ~~~ \mbox{and} ~~~ 
    \P\left(T_i(\timesteps) > \frac{C}{n_i}
    \left(\frac{\penconstant_i + \const_2 \sqrt{\log
        \timesteps}}{\excess_i}\right)^{\beta_i} \right) \le
    \frac{\const_1}{T K^4},
  \end{equation*}
  where $\penconstant_i$ and $\alpha_i$ are the constants in the
  definition~\ref{assumption:uniform-bandits}(\ref{assumption:uniformity-pen-bandits})
  of the concentration function $\pen_i$.
\end{theorem}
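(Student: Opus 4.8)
The plan is to transpose the classical UCB1 analysis of~\citet{AuerCBF02} to our penalized selection objective~\eqref{eqn:obj-criterion}, the new ingredients being the extra $\pen_i$ and $\sqrt{\log K/n}$ terms and, more substantively, the fact that the per-class sample counts are data-dependent. Fix a suboptimal class $i \neq \opt$; for $i=\opt$ the claim is vacuous since $\excess_\opt = 0$. Because $\alg$ consumes $n_i$ fresh samples per quantum, after $T_i(t)$ queries class $i$ holds exactly $n_i(t) = T_i(t)\,n_i$ samples with $T_i(t) \le t \le \timesteps$, so the admissible sample counts of each class $j$ form the deterministic finite set $\{n_j, 2n_j, \ldots, \timesteps n_j\}$. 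First I would define a \emph{good event} $\mathcal{E}$ on which the two-sided concentration statements of Assumption~\ref{assumption:uniform-bandits}(b) and~(e) hold simultaneously for every class $j\in[K]$, every admissible count $k n_j$, and the fixed minimizers $f_j^*$, with a slack $\epsilon_j(k)$ chosen commensurate with the exploration terms $\sqrt{\log K/(kn_j)}$ and $\sqrt{\log t/(kn_j)}$ appearing in the criterion; a union bound over the $O(K\timesteps)$ such events, using $\const_1\exp(-4kn_j\epsilon_j(k)^2)$, gives $\P(\mathcal{E}^c)\le \const_1/(\timesteps K^4)$ once the constant in $\epsilon_j(\cdot)$ is large enough.

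On $\mathcal{E}$ I would bound the per-class selection score $g(j,t)\defeq \obj(j,n_j(t)) - \sqrt{\log t/n_j(t)}$ from above for $j=\opt$ and from below for $j=i$. For the optimal class, the $\pen_\opt$-minimizer property (Assumption~\ref{assumption:uniform-bandits}(c)) gives $\emprisk[n](\falg{\opt}{n}) \le \emprisk[n](f_\opt^*) + \pen_\opt(n)$, and concentration at the fixed function $f_\opt^*$ (Assumption~\ref{assumption:uniform-bandits}(e)) then gives $\emprisk[n](\falg{\opt}{n}) \le \risk[\opt]^* + \pen_\opt(n) + \const_2\epsilon$; after the $-\pen_\opt(n)$, $-\sqrt{\log K/n}$ and $-\sqrt{\log t/n}$ terms absorb the estimation error, this collapses to $g(\opt,t) \le \risk[\opt]^* + \pen_\opt(\timesteps n_\opt)$. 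For class $i$, Assumption~\ref{assumption:uniform-bandits}(b) together with $\risk(\falg{i}{n})\ge\risk[i]^*$ gives $\emprisk[n](\falg{i}{n}) \ge \risk[i]^* - \pen_i(n) - \const_2\epsilon$, whence $g(i,t) \ge \risk[i]^* + \pen_i(\timesteps n_i) - 2\pen_i(n_i(t)) - 3\sqrt{\log t/n_i(t)}$. Since $i$ is queried at round $t+1$ only if $g(i,t) \le g(j,t)$ for all $j$, comparing $j=i$ with $j=\opt$ forces $\excess_i \le 2\pen_i(n_i(t)) + 3\sqrt{\log t/n_i(t)}$.

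To finish, observe that with $\pen_i(n)\le\penconstant_i n^{-\alpha_i}$ (Assumption~\ref{assumption:uniform-bandits}(d)) and $\log t\le\log\timesteps$, the inequality $\excess_i \le 2\penconstant_i n^{-\alpha_i} + 3\sqrt{\log\timesteps}\,n^{-1/2}$ is violated once $n$ exceeds a threshold $n_i^\star$: forcing each term to be at most $\excess_i/2$ yields $n_i^\star \le C'\max\{(\penconstant_i/\excess_i)^{1/\alpha_i},\,(\sqrt{\log\timesteps}/\excess_i)^2\} \le C\big((\penconstant_i + \const_2\sqrt{\log\timesteps})/\excess_i\big)^{\beta_i}$ with $\beta_i = \max\{1/\alpha_i,2\}$ (the two exponents $1/\alpha_i$ and $2$ are exactly the ones produced by the two terms). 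Hence on $\mathcal{E}$ class $i$ is never selected once $n_i(t) \ge n_i^\star$, so $n_i\,T_i(\timesteps) = n_i(\timesteps) \le n_i^\star$, which is the high-probability statement (the single initial query and the ceiling absorbed into $C$). The expectation bound follows from $\E[T_i(\timesteps)] \le (C/n_i)\big((\penconstant_i + \const_2\sqrt{\log\timesteps})/\excess_i\big)^{\beta_i}\,\P(\mathcal{E}) + \timesteps\,\P(\mathcal{E}^c)$ together with $\timesteps\,\P(\mathcal{E}^c) \le \const_1/K^4$, again absorbed into $C$.

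I expect the main obstacle to be the first step, namely making the good event both cheap enough and informative enough: the data-dependent counts force the concentration bounds to hold uniformly over all $O(K\timesteps)$ admissible (class, count) pairs, and the union-bound slack $\epsilon_j(k)$ must be large enough to drive the failure probability down to $\const_1/(\timesteps K^4)$ while still being small enough that $\const_2\epsilon_j(k)$ is dominated by the $\sqrt{\log K/n}$ and $\sqrt{\log t/n}$ ``pessimism'' already built into $g$ --- if that domination fails, the optimistic comparison underlying the whole argument breaks. A secondary, essentially bookkeeping, difficulty is chaining empirical risk to expected risk in both directions with matching constants so that precisely $\excess_i$, rather than a looser surrogate, appears on the right-hand side of the selection comparison.
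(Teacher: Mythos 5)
Your proposal is correct and follows essentially the same route as the paper: the classical UCB analysis of Auer et al., with the same three "bad events" (class $i$'s empirical risk too low, class $\opt$'s too high, and the sample count $n_i(t)$ below the threshold at which $2\pen_i(n_i(t)) + O(\sqrt{\log t/n_i(t)})$ drops below $\excess_i$), the same threshold computation yielding the exponent $\beta_i = \max\{1/\alpha_i, 2\}$ from the two terms, and the same split of the expectation over the good and bad events. The only difference is bookkeeping — you package the concentration statements into one uniform good event and argue deterministically on it, whereas the paper bounds the probability of each (class, sample-count, time) triple separately via the sum-of-indicators decomposition and finishes with Markov's inequality — and the constant-matching caveat you flag (that $\const_2\epsilon$ must be absorbed by the $\sqrt{\log K/n}$ and $\sqrt{\log t/n}$ pessimism terms) is present in the paper's argument as well.
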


At a high level, this result shows that the fraction of budget
allocated to any suboptimal class goes to 0 at the rate $\frac{1}{n_i
  \timesteps}\left(\frac{\sqrt{\log
    T}}{\excess_i}\right)^{\beta_i}$. Hence, asymptotically in
$\timesteps$, the procedure performs almost as if all the computational
budget were allocated to class $\opt$.
To see an example of concrete rates that can be concluded from the
above result, let $\F_1,\dots,\F_K$ be model classes with finite
VC-dimension,\footnote{Similar corollaries hold for any model class
  whose metric entropy grows polynomially in $\log\frac{1}{\epsilon}$.} so that
Assumption~\ref{assumption:uniform-bandits} is satisfied with $\alpha_i =
\frac{1}{2}$. Then we have
\begin{corollary}
  \label{corr:ucb-vc}
  Under the conditions of Theorem~\ref{theorem:expected-pulls}, assume
  $\F_1,\dots,\F_K$ are model classes of finite VC-dimension, where
  $\F_i$ has dimension $d_i$. Then there is a constant $C$ such that
  \begin{equation*}
    \E[T_i(\timesteps)] \le
    C\frac{\max\{d_i, \const_2^2 \log T\}}{\excess^2_in_i}
    ~~~ \mbox{and} ~~~
    \P\left(T_i(\timesteps) > C\frac{\max\{d_i,\const_2^2 \log
      T\}}{\excess^2_in_i} \right) \le \frac{\const_1}{T K^4}. 
  \end{equation*}
\end{corollary}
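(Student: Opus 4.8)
The plan is to read Corollary~\ref{corr:ucb-vc} off Theorem~\ref{theorem:expected-pulls} by plugging in the constants that a VC class of dimension $d_i$ contributes to Assumption~\ref{assumption:uniform-bandits}. First I would recall why such a class satisfies Assumption~\ref{assumption:uniform-bandits} with $\alpha_i = 1/2$ and $\penconstant_i = c\sqrt{d_i}$ for a universal constant $c$: the uniform deviation $\sup_{f\in\F_i}|\emprisk[n](f)-\risk(f)|$ concentrates around its mean at scale $\const_2\epsilon$ with tail $\const_1\exp(-4n\epsilon^2)$ by a bounded-differences (McDiarmid) argument, and its mean is bounded by the empirical Rademacher complexity of $\F_i$, which for a VC class of dimension $d_i$ is $O(\sqrt{d_i/n})$ by Dudley's entropy integral~\cite{Dudley99,BartlettMe02}. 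Hence $\pen_i(n)\le\penconstant_i n^{-1/2}$ with $\penconstant_i = c\sqrt{d_i}$, so parts (b) and (d) of Assumption~\ref{assumption:uniform-bandits} hold with $\alpha_i = 1/2$.

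Second, since $\alpha_i = 1/2$ we have $\beta_i = \max\{1/\alpha_i, 2\} = 2$, and Theorem~\ref{theorem:expected-pulls} gives, with a universal constant $C$ (which we may take $\ge 1$),
\begin{equation*}
  \E[T_i(\timesteps)] \le \frac{C}{n_i}
  \left(\frac{\penconstant_i + \const_2\sqrt{\log\timesteps}}{\excess_i}\right)^{2}
  = \frac{C}{n_i\excess_i^2}\left(c\sqrt{d_i} + \const_2\sqrt{\log\timesteps}\right)^{2},
\end{equation*}
while the same substitution into the theorem's probability bound produces the event $\{T_i(\timesteps) > (C/n_i)(c\sqrt{d_i}+\const_2\sqrt{\log T})^2/\excess_i^2\}$ with probability at most $\const_1/(TK^4)$.

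Third, I would simplify the square using $(a+b)^2 \le 2a^2 + 2b^2 \le 4\max\{a^2,b^2\}$, so that $(c\sqrt{d_i}+\const_2\sqrt{\log T})^2 \le 4c^2\max\{d_i,\const_2^2\log T\}$; absorbing the factor $4c^2$ into $C$ gives
\begin{equation*}
  \E[T_i(\timesteps)] \le C\,\frac{\max\{d_i,\const_2^2\log T\}}{\excess_i^2 n_i}
  ~~~\mbox{and}~~~
  \P\!\left(T_i(\timesteps) > C\,\frac{\max\{d_i,\const_2^2\log T\}}{\excess_i^2 n_i}\right) \le \frac{\const_1}{TK^4},
\end{equation*}
which is the corollary.

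Since the result is a routine consequence of Theorem~\ref{theorem:expected-pulls}, there is no substantive obstacle; the single point requiring care is the complexity estimate in the first step, where one must obtain the sharp rate $\pen_i(n) = O(\sqrt{d_i/n})$ \emph{without} the extraneous $\log n$ factor coming from a crude union bound over the growth function --- this is why the chaining (Dudley) estimate rather than a Vapnik--Chervonenkis growth-function bound is needed, and it is exactly the content of the footnote on classes whose metric entropy grows polynomially in $\log(1/\epsilon)$.
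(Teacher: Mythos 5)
Your proposal is correct and follows essentially the same route as the paper: the corollary is obtained by noting that VC classes of dimension $d_i$ satisfy Assumption~\ref{assumption:uniform-bandits} with $\alpha_i = \half$ and $\penconstant_i \propto \sqrt{d_i}$ (so $\beta_i = 2$), substituting into Theorem~\ref{theorem:expected-pulls}, and absorbing the constant from $(\penconstant_i + \const_2\sqrt{\log T})^2 \le 4\max\{\penconstant_i^2, \const_2^2\log T\}$ into $C$. Your added remark about needing the chaining (Dudley) bound rather than a growth-function union bound to avoid a spurious $\log n$ factor is exactly the point of the paper's footnote, so nothing is missing.
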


A lower bound by~\citet{LaiRo85} for the multi-armed bandit problem
shows that Corollary~\ref{corr:ucb-vc} is nearly optimal in
general. To see the connection, let $\F_i$ correspond to the $i$th arm
in a multi-armed bandit problem and the risk $\risk[i]^*$ be the
expected reward of arm $i$ and assume w.l.o.g.\ that $\risk[i]^* \in
[0, 1]$. In this case, the complexity penalty $\pen_i$ for each class
is 0. Let $p_i$ be a distribution on $\{0, 1\}$, where $p_i(1) =
\risk[i]^*$ and $p_i(0) = 1 - \risk[i]^*$ (let $p_i = p_i(1)$ for
shorthand). \citeauthor{LaiRo85} give a lower bound that shows that
the expected number of pulls of any suboptimal arm is at least $
\E[T_i(\timesteps)] = \Omega\left(\log T / \kl{p_i}{p_{\opt}}\right)$,
where $p_i$ and $p_{\opt}$ are the reward distributions for the $i$th
and optimal arms, respectively. An asymptotic expansion shows that
$\kl{p_i}{p_{\opt}} = \excess_i^2 / (2p_i(1 - p_i))$, plus higher
order terms, in this case; Corollary~\ref{corr:ucb-vc} is
essentially tight.

The condition that the gap $\excess_i > 0$ may not always be
satisfied, or $\excess_i$ may be so small as to render the bound in
Theorem~\ref{theorem:expected-pulls} vacuous.  Nevertheless, it is
intuitive that our algorithm can quickly find a small set of
``good'' classes---those with small penalized risk---and spend its
computational budget to try to distinguish amongst them. In this case,
Algorithm~\ref{alg:bandit-select} does not visit suboptimal
classes and so can output a function $f$ satisfying good oracle
bounds.  In order to prove a result quantifying this intuition, we
first upper bound the \emph{regret} of
Algorithm~\ref{alg:bandit-select}, that is, the average excess risk
suffered by the algorithm over all iterations, and then show how to
use this bound for obtaining a model with a small risk. For
the remainder of the section, we simplify the presentation by assuming
that $\alpha_i \equiv \alpha$ and define $\beta
= \max\{1/\alpha, 2\}$.
\begin{proposition}
  \label{proposition:regret-bound}
  Use the same assumptions as Theorem~\ref{theorem:expected-pulls},
  but further assume that $\alpha_i \equiv \alpha$ for all $i$.  With
  probability at least $1 - \const_1 / TK^3$, the regret (average
  excess risk) of Algorithm~\ref{alg:bandit-select} satisfies
  \begin{equation*}
    \sum_{i=1}^K \excess_i T_i(T) \le 2 e T^{1 - 1/\beta}
    \left(C\sum_{i=1}^K \frac{(\penconstant_i + \const_2\sqrt{\log
        T})^\beta}{ n_i} \right)^{1/\beta}
  \end{equation*}
  for a constant $C$ dependent on $\alpha$.
\end{proposition}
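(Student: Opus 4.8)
The plan is to reduce to the high‑probability per‑class bound of Theorem~\ref{theorem:expected-pulls} and then combine it with the trivial budget identity $\sum_{i=1}^K T_i(T) = T$ through a thresholding (peeling) argument over the gaps $\excess_i$. First I would specialize Theorem~\ref{theorem:expected-pulls} to the case $\alpha_i \equiv \alpha$, so that $\beta_i \equiv \beta$, and take a union bound over the $K$ classes: since each class fails its bound with probability at most $\const_1/(TK^4)$, all $K$ inequalities
\[
  T_i(T) \le \frac{C}{n_i}\left(\frac{\penconstant_i + \const_2\sqrt{\log T}}{\excess_i}\right)^{\beta}
\]
hold simultaneously with probability at least $1 - \const_1/(TK^3)$. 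I would work on this event for the rest of the argument, introducing the shorthand $b_i \defeq \tfrac{C}{n_i}(\penconstant_i + \const_2\sqrt{\log T})^{\beta}$ and $A \defeq \sum_{i=1}^K b_i$, so that $\excess_i T_i(T) \le b_i / \excess_i^{\beta - 1}$ for every $i$ with $\excess_i > 0$.

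Next I would fix a threshold $\Delta > 0$ and split the sum $\sum_i \excess_i T_i(T)$ accordingly. For classes with $\excess_i \le \Delta$ (and any with $\excess_i = 0$), I would use only the budget identity: their total contribution is at most $\Delta \sum_i T_i(T) = \Delta T$. For classes with $\excess_i > \Delta$, since $\beta - 1 \ge 1$ we have $\excess_i^{\beta-1} > \Delta^{\beta-1}$, hence $\excess_i T_i(T) \le b_i / \Delta^{\beta-1}$, so these contribute at most $A/\Delta^{\beta-1}$. This yields $\sum_{i=1}^K \excess_i T_i(T) \le \Delta T + A \Delta^{-(\beta-1)}$ for every $\Delta > 0$. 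The final step is to optimize over $\Delta$: choosing $\Delta = (A/T)^{1/\beta}$ balances the two terms, giving $\sum_{i=1}^K \excess_i T_i(T) \le 2 A^{1/\beta} T^{1-1/\beta} = 2 T^{1-1/\beta}\left(C\sum_{i=1}^K (\penconstant_i + \const_2\sqrt{\log T})^{\beta}/n_i\right)^{1/\beta}$, which is the claimed inequality (with room to spare in the constant, since $2 \le 2e$; one could also use the exact minimizer of $\Delta \mapsto \Delta T + A\Delta^{-(\beta-1)}$ and still stay below $2e$ for all $\beta \ge 2$).

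The main obstacle — and the reason this is not a one‑line corollary of Theorem~\ref{theorem:expected-pulls} — is obtaining the right dependence on the $b_i$: summing the per‑class bounds directly (which give $\excess_i T_i(T) \le b_i^{1/\beta} T^{1-1/\beta}$) produces $\sum_i b_i^{1/\beta}$, which is strictly larger than the desired $(\sum_i b_i)^{1/\beta}$ because $x \mapsto x^{1/\beta}$ is concave for $\beta \ge 2$. The thresholding argument is precisely what brings the budget constraint $\sum_i T_i(T) = T$ into play so that the bound comes out in terms of $(\sum_i b_i)^{1/\beta}$. Beyond this, the only care needed is to place the $\excess_i = 0$ classes (which contribute nothing) in the small‑gap bucket, and to track that the union bound over $K$ classes degrades the failure probability by exactly a factor $K$, matching the $\const_1/(TK^3)$ in the statement.
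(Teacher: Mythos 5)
Your proposal is correct and follows essentially the same route as the paper's proof: a union bound over the per-class high-probability bounds of Theorem~\ref{theorem:expected-pulls}, a two-bucket split of the regret at a threshold with the small-gap classes controlled by the budget identity and the large-gap classes by the per-class bounds, and an optimization over the threshold. The only (immaterial) difference is in the last step, where you balance the two terms with $\Delta = (A/T)^{1/\beta}$ and obtain the constant $2$, while the paper uses the exact minimizer and absorbs the resulting $(\beta-1)^{1/\beta}$ factor into $2e$.
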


Our final main result builds on
Proposition~\ref{proposition:regret-bound} to show that when it is
possible to average functions across classes $\F_i$, we can aggregate
all the ``played'' functions $f_t$, one for each iteration $t$, to
obtain a function with small risk. Indeed, setting $f_t =
\falg{i_t}{n_{i_t}(t)}$, we obtain the following theorem (whose proof,
along with that of Proposition~\ref{proposition:regret-bound}, we
provide in Appendix~\ref{sec:no-separation}):
\begin{theorem}
  \label{theorem:no-separation}
  Use the conditions of Proposition~\ref{proposition:regret-bound}.
  Let the risk function $\risk$ be convex on $\F_1 \cup \ldots \cup
  \F_K$, and let $f_t$ be the function chosen by algorithm $\alg$ at
  round $t$ of Alg.~\ref{alg:bandit-select}. Define the average
  function $\favg = \frac{1}{T} \sum_{t=1}^T f_t$.  There are
  constants $C$, $C'$ (dependent on $\alpha$) such that with
  probability greater than $1 - 2 \const_2 / (TK^3)$,
  \begin{align*}
    \risk(\favg) & \le \risk^* + \pen_{i^*}(T n_{i^*})
    + 2 e \const_2 T^{-\beta}\sqrt{\log T}
    \left(\sum_{i=1}^K \frac{C}{n_i}\right)^{1/\beta} \\
    & \quad ~ + C'\, T^{-1/\beta}
    \left( \sum_{i=1}^K
    \left[c_i n_i^{-\alpha}
      + \const_2 n_i^{-\half} \sqrt{\log K} +
      \const_2 n_i^{-\half} \sqrt{\log T}\right]^{\beta}
    \right)^{1/\beta}.
  \end{align*}
\end{theorem}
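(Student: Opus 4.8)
The plan is to reduce the bound to controlling the \emph{average realized risk} $\frac1T\sum_{t=1}^T\risk(f_t)$ along the run of Algorithm~\ref{alg:bandit-select}, to split this quantity into a class-selection part (handled by Proposition~\ref{proposition:regret-bound}) and a within-class estimation part (handled by Assumption~\ref{assumption:uniform-bandits}), and to repackage both pieces with H\"older's inequality. Since $\risk$ is convex on $\bigcup_i\F_i$ and $\favg=\frac1T\sum_t f_t$, Jensen's inequality gives $\risk(\favg)\le\frac1T\sum_{t=1}^T\risk(f_t)$. Grouping this sum by class and writing $T_i(T)$ for the number of rounds spent on $\F_i$, the $k$-th such round uses $f_t=\falg{i}{kn_i}$ (class $i$ holds $kn_i$ samples after $k$ plays), so $\frac1T\sum_t\risk(f_t)=\frac1T\sum_{i=1}^K\sum_{k=1}^{T_i(T)}\risk(\falg{i}{kn_i})$.

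For the within-class error I would use, on a ``good'' event, the chain
\begin{align*}
  \risk(\falg{i}{kn_i}) & \le \emprisk[kn_i](\falg{i}{kn_i}) + \pen_i(kn_i) + \const_2\epsilon \\
  & \le \emprisk[kn_i](f_i^*) + 2\pen_i(kn_i) + \const_2\epsilon
    \le \risk[i]^* + 2\pen_i(kn_i) + 2\const_2\epsilon,
\end{align*}
where the first step is the deviation bound in Assumption~\ref{assumption:uniform-bandits}(b), the second is its $\pen_i$-minimization property~(c) (giving $\emprisk[kn_i](\falg{i}{kn_i})\le\emprisk[kn_i](f_i^*)+\pen_i(kn_i)$ since $f_i^*\in\F_i$), and the third is the deviation bound~(e) for the fixed minimizer $f_i^*$. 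A union bound over all $i\in[K]$ and all $k\le T$ with $\epsilon=\epsilon_{i,k}\asymp\sqrt{(\log T+\log K)/(kn_i)}$ makes this event hold with probability at least $1-O(\const_1/(TK^3))$; on it $\risk(\falg{i}{kn_i})-\risk[i]^*\le 2\pen_i(kn_i)+O(\const_2\sqrt{(\log T+\log K)/(kn_i)})$. Summing over the $T_i(T)$ plays of $\F_i$ and using that these bounds shrink in $k$ (with $\pen_i(kn_i)\le\penconstant_i(kn_i)^{-\alpha}$), $\sum_{k\le T_i(T)}\pen_i(kn_i)=O(\penconstant_i n_i^{-\alpha}T_i(T)^{1-\alpha})$ and $\sum_{k\le T_i(T)}\epsilon_{i,k}=O(\sqrt{(\log T+\log K)\,T_i(T)/n_i})$; exploiting the shrinkage is essential, as the crude bound $\pen_i(n_i)$ per play would cost $T_i(T)$ rather than $T_i(T)^{1-\alpha}$ and destroy the rate.

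Next I rewrite $\risk[i]^*=\risk[\opt]^*+\pen_{\opt}(Tn_{\opt})+\excess_i-\pen_i(Tn_i)$ from the definition~\eqref{eqn:excess}; summing against $T_i(T)$, using $\sum_iT_i(T)=T$, and dropping the nonpositive $-\frac1T\sum_iT_i(T)\pen_i(Tn_i)$ term gives $\frac1T\sum_iT_i(T)\risk[i]^*\le\risk[\opt]^*+\pen_{\opt}(Tn_{\opt})+\frac1T\sum_i\excess_iT_i(T)$ (the leading terms $\risk[\opt]^*+\pen_{\opt}(Tn_{\opt})$ being $\risk^*+\pen_{i^*}(Tn_{i^*})$ in the statement), and the class-selection regret $\frac1T\sum_i\excess_iT_i(T)$ is exactly what Proposition~\ref{proposition:regret-bound} bounds, which after dividing by $T$ is the middle term of the claimed bound (up to the stated constants, with a lower-order $\penconstant_i$-piece absorbed into the last term). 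Finally, each estimation sum has the form $\sum_i a_iT_i(T)^{1-1/\beta}$: since $\beta=\max\{1/\alpha,2\}$ we have $1-\alpha\le 1-1/\beta$ and $\tfrac12\le 1-1/\beta$, so $T_i(T)^{1-\alpha},T_i(T)^{1/2}\le T_i(T)^{1-1/\beta}$ (using $T_i(T)\ge1$), with $a_i=O(\penconstant_i n_i^{-\alpha}+\const_2 n_i^{-1/2}\sqrt{\log K}+\const_2 n_i^{-1/2}\sqrt{\log T})$. H\"older's inequality with conjugate exponents $\beta$ and $\beta/(\beta-1)$ then yields $\sum_i a_iT_i(T)^{1-1/\beta}\le(\sum_i a_i^\beta)^{1/\beta}(\sum_iT_i(T))^{1-1/\beta}=T^{1-1/\beta}(\sum_i a_i^\beta)^{1/\beta}$, since $(1-1/\beta)\cdot\frac{\beta}{\beta-1}=1$; dividing by $T$ gives the last term $C'T^{-1/\beta}(\sum_i[c_in_i^{-\alpha}+\const_2 n_i^{-1/2}\sqrt{\log K}+\const_2 n_i^{-1/2}\sqrt{\log T}]^\beta)^{1/\beta}$. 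A union bound over the ``good'' event and the event of Proposition~\ref{proposition:regret-bound} yields the claimed confidence.

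The main obstacle I anticipate is the joint bookkeeping of the two shrinking per-play contributions. Since the estimator for a class changes each time the class is played, the concentration events of Assumption~\ref{assumption:uniform-bandits} must be union-bounded over all $(i,k)$ with $k$ up to $T$, and one must verify that the resulting $\log T$ inflation is precisely the $\sqrt{\log T}$ appearing in the bound and does not degrade the $T^{-1/\beta}$ rate; at the same time $\sum_k k^{-\alpha}$ and $\sum_k k^{-1/2}$ must be summed with care in the regimes $\alpha\le\tfrac12$ and $\alpha>\tfrac12$ so that the exponent on $T_i(T)$ is exactly $1-1/\beta$, which is what makes the H\"older step produce the $T^{-1/\beta}$ scaling. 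Reconciling the constants in Proposition~\ref{proposition:regret-bound}'s regret bound so its contribution merges cleanly with the estimation terms is the remaining delicate point.
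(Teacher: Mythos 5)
Your proposal is correct and follows essentially the same route as the paper's proof: Jensen's inequality, the decomposition of $\frac{1}{T}\sum_t \risk(f_t)$ into the regret term controlled by Proposition~\ref{proposition:regret-bound} plus the per-play within-class estimation errors controlled via Assumption~\ref{assumption:uniform-bandits}(b),(c),(e) with a union bound over plays, the summation $\sum_{k \le T_i(T)} (k n_i)^{-1/\beta} = O(n_i^{-1/\beta} T_i(T)^{1-1/\beta})$, and the final conversion of $\sum_i a_i T_i(T)^{1-1/\beta}$ into $T^{1-1/\beta}\bigl(\sum_i a_i^\beta\bigr)^{1/\beta}$. The only cosmetic difference is that you invoke H\"older's inequality directly for this last step, whereas the paper proves the identical bound as Lemma~\ref{lemma:maximize-selection} via a Lagrangian computation.
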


Let us interpret the above bound and discuss its optimality.  When
$\alpha = \half$ (e.g., for VC classes), we have $\beta = 2$;
moreover, it is clear that $\sum_{i = 1}^K \frac{C}{n_i} =
\order(K)$. Thus, to within constant factors,
\begin{equation*}
  \risk(\favg) = \risk[\opt]^* + \pen_{\opt}(T n_{\opt}) +
  \order\left(\frac{\sqrt{K \max\{\log T, \log K\}}}{\sqrt{T}}\right).
\end{equation*}
Ignoring logarithmic factors, the above bound is minimax optimal,
which follows by a reduction of our model selection problem to the
special case of a multi-armed bandit problem. In this case, Theorem
5.1 of~\citet{auer03multiarmed} shows that for any set of
$K,\timesteps$ values, there is a distribution over the rewards of
arms which forces $\Omega(\sqrt{K\timesteps})$ regret, that is, the
average excess risk of the classes chosen by
Alg.~\ref{alg:bandit-select} must be $\Omega(\sqrt{K \timesteps})$,
matching Proposition~\ref{proposition:regret-bound} and
Theorem~\ref{theorem:no-separation}.

The scaling $\order(\sqrt{K})$ is essentially as bad as splitting
the computational budget $\timesteps$ uniformly across each of the
$K$ classes, which yields (roughly) an oracle inequality of the form
\begin{equation*}
  \risk(f) = \risk[\opt]^* + \pen_\opt(\timesteps n_\opt / K)
  + \order\bigg(\frac{\sqrt{K \log K}}{\sqrt{\timesteps n_\opt}}\bigg).
\end{equation*}
Comparing this bound to Theorem~\ref{theorem:no-separation}, we see
that the penalty $\pen_i$ in the theorem is smaller. The other key
distinction between the two bounds (ignoring logarithmic factors) is
the difference between
\begin{equation*}
  \sum_{i = 1}^K \frac{1}{n_i} ~~ \mbox{and} ~~
  \frac{K}{n_\opt}.
\end{equation*}
When the left quantity is smaller than the right, the bandit-based
Algorithm~\ref{alg:bandit-select} and the extension indicated by
Theorem~\ref{theorem:no-separation} give improvements over the na\"ive
strategy of uniformly splitting the budget across classes. However, if
each class has similar computational cost $n_i$, no strategy can
outperform the na\"ive one.

We also observe that we can apply the online procedure of
Algorithm~\ref{alg:bandit-select} to the nested setup of
Sections~\ref{sec:nesting} and~\ref{sec:nesting-fast} as well. In this
case, by applying Algorithm~\ref{alg:bandit-select} only to elements
of the coarse-grid set $\gridset_\penscalefac$, we can replace $K$ in
the bounds of Theorems~\ref{theorem:expected-pulls}
and~\ref{theorem:no-separation} with $\gridsize$, which gives results
similar to our earlier Theorems~\ref{theorem:nesting}
and~\ref{theorem:nesting-fast}. In particular, if we are in the setup
of Theorem~\ref{theorem:expected-pulls} with a large separation
between penalized risks, then Algorithm~\ref{alg:bandit-select}
applied to the coarse-grid set is expected to outperform a uniform
allocation of budget within the set as in Sections~\ref{sec:nesting}
and~\ref{sec:nesting-fast}.

\subsection{Proof of Theorem~\ref{theorem:expected-pulls}}
\label{sec:ucbproof}

At a high level, the proof of this theorem involves combining the
techniques for analysis of multi-armed bandits developed
by~\citet{AuerCBF02} with Assumption~\ref{assumption:uniform-bandits}. We
start by giving a lemma that will be useful to prove the theorem. The
lemma states that after a sufficient number of initial iterations
$\tau$, the probability that Algorithm~\ref{alg:bandit-select} chooses to
receive samples for a sub-optimal function class $i \neq \opt$ is
extremely small.  Recall also our notational convention that $\beta_i
= \max\{1/\alpha_i, 2\}$.
\begin{lemma}
  Let Assumption~\ref{assumption:uniform-bandits} hold.
  For any class $i$, any $s_i \in [1,T]$ and $s_{\opt} \in [\tau,T]$
  where $\tau$ satisfies 
  \begin{equation*}
    \tau > \frac{2^{\beta_i} (\penconstant_i + \const_2
      \sqrt{\log T} + \const_2 \sqrt{\log K})^{\beta_i}}{
      n_i\excess_i^{\beta_i}},
  \end{equation*}
  we have
  \begin{equation*}
    \P\left( \obj(i, n_i s_i) - \const_2 \sqrt{\frac{\log T}{n_is_i}}
      \le \obj(\opt, n_\opt s_\opt) - \const_2 \sqrt{\frac{\log
          T}{n_{\opt}s_{\opt}}}\right) \leq
      \frac{2\const_1}{(\timesteps K)^4}.
    \end{equation*}
  \label{lemma:ucbbadevent}
\end{lemma}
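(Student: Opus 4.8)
The plan is a UCB-type argument in the style of \citet{AuerCBF02}, with extra care for the facts that classes $i$ and $\opt$ are estimated from different numbers of samples ($n_i s_i$ versus $n_\opt s_\opt$) and that $\obj$ carries the offset $\pen_i(\timesteps n_i)$. Abbreviate $n := n_i s_i$, $m' := n_\opt s_\opt$, and $\mu_i := \risk[i]^* + \pen_i(\timesteps n_i)$, so that $\mu_i - \mu_\opt = \excess_i$. First I would fix a ``good event'' $\mathcal{E}$ on which two statements hold: (a) $\emprisk[n](\falg{i}{n}) \ge \risk(\falg{i}{n}) - \pen_i(n) - \const_2\eps_1$ with $\eps_1 = \sqrt{\log(\timesteps K)/n}$, which is the one-sided version of Assumption~\ref{assumption:uniform-bandits}(\ref{assumption:uniformity-alg-bandits}) and, by the choice of $\eps_1$, fails with probability at most $\const_1(\timesteps K)^{-4}$; and (b) $\emprisk[m'](f_{\opt}^*) \le \risk[\opt]^* + \const_2\eps_2$ with $\eps_2 = \sqrt{\log(\timesteps K)/m'}$, which is Assumption~\ref{assumption:uniform-bandits}(\ref{assumption:uniformity-fixed-bandits}) for the fixed function $f_{\opt}^*$ and likewise fails with probability at most $\const_1(\timesteps K)^{-4}$. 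A union bound then gives $\P(\mathcal{E}^c) \le 2\const_1(\timesteps K)^{-4}$, exactly the bound claimed, so it suffices to show that the event of the lemma is contained in $\mathcal{E}^c$.

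On $\mathcal{E}$ I would bracket both sides of the comparison. For class $i$, expanding the definition of $\obj$, using $\risk(\falg{i}{n}) \ge \risk[i]^*$, and $\sqrt{\log(\timesteps K)} \le \sqrt{\log \timesteps} + \sqrt{\log K}$, gives
\[
  \obj(i, n) - \const_2 \sqrt{\log \timesteps / n} \;\ge\; \mu_i - W_i, \qquad W_i := 2\pen_i(n) + (\const_2 + 1)\sqrt{\log K / n} + 2\const_2 \sqrt{\log \timesteps / n},
\]
where the factor $2$ on $\pen_i(n)$ combines the concentration slack of $\falg{i}{n}$ with the optimistic $-\pen_i(n)$ term already in $\obj$. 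For class $\opt$, using that $\falg{\opt}{m'}$ is a $\pen_\opt(m')$-minimizer of $\emprisk[m']$ (Assumption~\ref{assumption:uniform-bandits}(\ref{assumption:uniformity-min-bandits})) to pass to $f_{\opt}^*$, then invoking (b), gives $\obj(\opt, m') - \const_2\sqrt{\log \timesteps / m'} \le \mu_\opt + (\const_2 - 1)_+ \sqrt{\log K / m'}$; that is, up to a lower-order residual this quantity is a valid lower confidence bound for $\mu_\opt$.

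Subtracting the two brackets, on $\mathcal{E}$ the event of the lemma forces $\excess_i \le W_i + (\const_2 - 1)_+ \sqrt{\log K / m'}$, and the remaining step is to show the size constraint rules this out. Using Assumption~\ref{assumption:uniform-bandits}(\ref{assumption:uniformity-pen-bandits}) to replace $\pen_i(n)$ by $\penconstant_i (n_i s_i)^{-\alpha_i}$, every term on the right is decreasing in the sample counts; once these counts exceed the threshold $\tau$ (the $\pen_i$ term and the $1/\sqrt{\cdot}$ fluctuation terms of $W_i$ through $n_i s_i$, and the residual through $m' = n_\opt s_\opt \ge n_\opt \tau$), the right side is strictly below $\excess_i$, the desired contradiction. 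The exponent $\beta_i = \max\{1/\alpha_i, 2\}$ enters precisely here: making $2\penconstant_i (n_i s_i)^{-\alpha_i} < \excess_i$ requires $s_i \gtrsim (\penconstant_i/\excess_i)^{1/\alpha_i} / n_i$, while making the $1/\sqrt{n_i s_i}$ terms $< \excess_i$ requires $s_i \gtrsim \const_2^2(\sqrt{\log \timesteps} + \sqrt{\log K})^2 / (n_i \excess_i^2)$; the stated $\tau$ dominates both once one collects $\penconstant_i + \const_2\sqrt{\log \timesteps} + \const_2\sqrt{\log K}$ and raises it to the power $\beta_i$.

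I expect the main obstacle to be exactly this calibration: choosing $\eps_1, \eps_2$ so that the failure probabilities come out as $\const_1(\timesteps K)^{-4}$ while the $\const_2 \eps$ fluctuations are absorbed by the exploration bonus $\sqrt{\log K / n}$ built into $\obj$ and the $\const_2\sqrt{\log \timesteps / n}$ subtracted in the lemma, leaving a clean positive margin $\excess_i - W_i$; and then checking that the single threshold $\tau$, with its mixed ($1/\alpha_i$-type and square-type) dependence on $\excess_i$, simultaneously dominates the polynomial penalty term and the sub-Gaussian fluctuation term. The rest — the union bound, expanding $\obj$, and invoking the individual parts of Assumption~\ref{assumption:uniform-bandits} — is routine bookkeeping.
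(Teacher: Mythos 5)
Your proposal is correct and follows essentially the same route as the paper's proof: decompose the bad event into an under-concentration event for class $i$, an over-concentration event for class $\opt$ (handled via the approximate-empirical-minimizer property plus concentration for the fixed $f_{\opt}^*$), and a deterministic ``gap too small'' event that the threshold $\tau$ rules out, with the union bound over the two concentration failures yielding $2\const_1(\timesteps K)^{-4}$. Your explicit tracking of the mismatch between the $\sqrt{\log K/n}$ bonus in $\obj$ and the $\const_2\eps$ concentration slack is, if anything, slightly more careful than the paper's own bookkeeping.
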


We defer the proof of the lemma to
Appendix~\ref{appendix:proof-ucb-bad-event}, though at a high level
the proof works as follows. The ``bad event'' in
Lemma~\ref{lemma:ucbbadevent}, which corresponds to
Algorithm~\ref{alg:bandit-select} selecting a sub-optimal class $i \neq
\opt$, occurs only if one of the following three errors occurs: the
empirical risk of class $i$ is much lower than its true risk, the
empirical risk of class $\opt$ is higher than its true risk, or $s_i$
is not large enough to actually separate the true penalized risks from
one another. The assumptions of the lemma
make each of these three sub-events
quite unlikely.  Now we turn to the proof of
Theorem~\ref{theorem:expected-pulls}, assuming the lemma.

Let $i_t$ denote the model class index $i$ chosen by
Algorithm~\ref{alg:bandit-select} at time $t$, and let
$\numpulls_i(t)$ denote the number of times class $i$ has been
selected at round $t$ of the algorithm. When no time index is needed,
$\numpulls_i$ will denote the same thing.  Note that if $i_t = i$ and
the number of times class $i$ is queried exceeds $\tau > 0$, then by
the definition of the selection criterion~\eqref{eqn:obj-criterion}
and choice of $i_t$ in Alg.~\ref{alg:bandit-select}, for some
$\numpulls_i \in \{\tau, \ldots, t - 1\}$ and $\numpulls_{\opt} \in
\{1, \ldots, t-1\}$ we have
\begin{equation*}
  \obj(i, n_i s_i) - \const_2 \sqrt{\frac{\log T}{n_i s_i}} \le
  \obj(\opt, n_\opt s_\opt) - \const_2 \sqrt{\frac{\log T}{n_\opt
      s_\opt}}.  
\end{equation*}
Here we interpret $\obj(i, n_i s_i)$ to mean a random realization of
the observed risk consistent with the samples we observe. Using the
above implication, we thus have
\begin{align}
  T_i(\timesteps) &= 1 + \sum_{t=K+1}^\timesteps \ind{i_t = i} \nonumber 
  ~~ \le ~~ \tau + \sum_{t = K+1}^\timesteps \ind{i_t = i, T_i(t-1) \geq
    \tau} \nonumber \\ & \le \tau + \sum_{t = K+1}^\timesteps \ind{
    \min_{\tau \le s_i < t} \obj(i, n_i s_i) - \const_2
    \sqrt{\frac{\log T}{n_i s_i}} \le \max_{0 < s < t} \obj(\opt,
    n_\opt s_\opt) - \const_2 \sqrt{\frac{\log T}{n_{\opt}s_{\opt}}}}
  \nonumber \\ & \le \tau + \sum_{t=1}^T 
  \sum_{s_\opt=1}^{t-1} \sum_{s_i=\tau}^{t-1} \ind{ \obj(i, n_i s_i) -
    \const_2 \sqrt{\frac{\log T}{n_is_i}} \le \obj(\opt, n_\opt
    s_\opt) - \const_2 \sqrt{\frac{\log T}{n_{\opt}s_{\opt}}}}.
  \label{eqn:select-infinite-bound}
\end{align}

To control the last term, we invoke Lemma~\ref{lemma:ucbbadevent} and
obtain that
\begin{align*}
\tau > \frac{2^{\beta_i} (\penconstant_i + \const_2 \sqrt{\log T} +
  \const_2 \sqrt{\log K})^{\beta_i}}{
  n_i\excess_i^{\beta_i}}~~\Rightarrow~~ \E[T_i(\timesteps)] \leq \tau
+ \sum_{t=1}^T \sum_{s=1}^{t-1}\sum_{s_i=\tau}^{t-1}
2\frac{\const_1}{(T K)^4}
  \le \tau + \frac{\const_1}{T K^4}.
\end{align*}
Hence for any suboptimal class $i \neq \opt$, $\E[T_i(n)] \leq \tau_i +
\const_1 / (T K^4)$, where $\tau_i$ satisfies the lower bound of
Lemma~\ref{lemma:ucbbadevent} and is thus logarithmic in $\timesteps$. Under
the assumption that $T \ge K$, for $i \neq \opt$,
\begin{equation}
  \E[T_i(\timesteps)]
  \le C \frac{(\penconstant_i + \const_2
    \sqrt{\log T})^{\max\{1/\alpha_i, 2\}}}{n_i
    \excess_i^{\max\{1/\alpha_i, 2\}}}
  \label{eqn:suboptimal-play-bound}
\end{equation}
for a constant $C \le 2 \cdot 4^{\max\{1/\alpha_i, 2\}}$. Now we prove
the high-probability bound. For this part, we need only concern
ourselves with the sum of indicators from
\eqref{eqn:select-infinite-bound}. Markov's inequality shows that
\begin{equation*}
  \P\left(\sum_{t = K + 1}^T \ind{i_t = i, T_i(t - 1) \ge \tau} \ge
  1 \right) \le \frac{\const_1}{T K^4}.
\end{equation*}
Thus we can assert that the bound~\eqref{eqn:suboptimal-play-bound} on
$T_i(\timesteps)$ holds with high probability.

\paragraph{Remark:}
By examining the proof of Theorem~\ref{theorem:expected-pulls}, it is
straightforward to see that if we modify the multipliers on the
square root terms in the criterion \eqref{eqn:obj-criterion} by
$m\const_2$ instead of $\const_2$, we get that the probability bound
is of the order $T^{3 - 4m^2} K^{-4m^2}$, while the bound on
$T_i(\timesteps)$ is scaled by $m^{1/\alpha_i}$.

\section{Discussion}
\label{sec:discussion}

In this paper, we have presented a new framework for model selection
with computational constraints. The novelty of our setting is the idea
of using computation---rather than samples---as the quantity against
which we measure the performance of our estimators.  \comment{ By
  carefully capturing the relative computational needs of fitting
  different models to our data, we are able to formalize the very
  natural intuition: \emph{Given a computational budget, a simple
    model can be fitted to a lot more samples than a complex model}.
} As our main contribution, we have presented algorithms for model
selection in several scenarios, and the common thread in each is that
we attain good performance by evaluating only a small and
intelligently-selected set of models, allocating samples to each model
based on computational cost. For model selection over nested
hierarchies, this takes the form of a new estimator based on a coarse
gridding of the model space, which is competitive (up to logarithmic
factors) with an omniscient oracle.  A minor extension of our
algorithm is adaptive to problem complexity, since it yields fast
rates for model selection when the underlying estimation problems have
appropriate curvature or low-noise properties.  We also presented an
exploration-exploitation algorithm for model selection in unstructured
cases, showing that it obtains (in some sense) nearly optimal
performance.

There are certainly many possible extensions and open questions that
our work raises. We address the setting where the complexity penalties
are known and can be computed easily in closed form. Often it is
desirable to use data-dependent
penalties~\cite{LugosiWe04,BartlettBoMe05,Massart03}, since they adapt
to the particular problem instance and data distribution.  It appears
to be somewhat difficult to extend such penalties to the procedures we
have developed in this paper, but we believe it would be quite
interesting. Another
natural question to ask is whether there exist intermediate model
selection problems between a nested sequence of classes and a
completely unstructured collection. Identifying other structures---and
obtaining the corresponding oracle inequalities and understanding
their dependence on computation---would be an interesting extension of
the results presented here.

More broadly, we believe the idea of using computation, in addition to the
number of samples available for a statistical inference problem, to measure
the performance of statistical procedures is appealling for a much broader
class of problems. In large data settings, one would hope that more data would
always improve the risk performance of statistical procedures, even with a
fixed computational budget.  We hope that extending these ideas to other
problems, and understanding how computation interacts with and affects the
quality of statistical estimation more generally, will be quite fruitful.



\section*{Acknowledgements}

We gratefully acknowledge illuminating discussions with Cl\'ement
Levrard, who helped us with earlier versions of this work and whose
close reading helped us clarify (and correct problems with) many of
our arguments.  In performing this research, Alekh Agarwal was
supported by a Microsoft Research Fellowship and Google PhD
Fellowship, and John Duchi was supported by the National Defense
Science and Engineering Graduate Fellowship (NDSEG) Program. Alekh
Agarwal and Peter Bartlett gratefully acknowledge the support of the
NSF under award DMS-0830410 and of the ARC under award FL1110281.

\appendix

\section{Auxiliary results for 
  Theorem~\ref{theorem:nesting} and
  Corollary~\ref{corollary:classification}} 
\label{app:nesting}

We start by establishing Lemma~\ref{lemma:finite}. To
prove the lemma, we first need a simple claim.
\begin{lemma}
  Let $c_1 > c_2 > 0$, $\gridsizeplain > 0$, and define
  \begin{align*}
    i_1^* &= \argmin_{i = 1,2,3,\ldots} \left\{ \risk[i]^* +
    c_1\left(\pen_i\left(\frac{\timesteps}{\gridsizeplain}\right)
    + \const_2\sqrt{\frac{2(m +
        \log\gridsizeplain)}{n_i(T/\gridsizeplain)}}\right)\right\},\\
    i_2^* &= \argmin_{i = 1,2,3,\ldots} \left\{ \risk[i]^* +
    c_2\left(\pen_i\left(\frac{\timesteps}{\gridsizeplain}\right) +
    \const_2\sqrt{\frac{2(m +
        \log\gridsizeplain)}{n_i(T/\gridsizeplain)}}\right)\right\}.
  \end{align*}
  Then under the monotonicity assumptions~\ref{assumption:budget}, we
  have $i_1^* \leq i_2^*$.
  \label{lemma:multiplemonotone}
\end{lemma}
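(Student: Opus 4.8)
The plan is to run the standard interchange (exchange) argument for monotone comparative statics. First I would abbreviate the bracketed complexity term by $\phi_i \defeq \pen_i(T/\gridsizeplain) + \const_2\sqrt{2(m+\log\gridsizeplain)/n_i(T/\gridsizeplain)}$, so that $i_1^*$ is a minimizer of $\risk[i]^* + c_1\phi_i$ and $i_2^*$ a minimizer of $\risk[i]^* + c_2\phi_i$ over $i \in \N$. Two monotonicity facts drive the proof: (i) by the inclusion Assumption~\ref{assumption:inclusion}, $\F_i \subseteq \F_{i+1}$ and hence $i \mapsto \risk[i]^*$ is non-increasing; and (ii) by Assumption~\ref{assumption:budget}, the map $i \mapsto \phi_i$ is \emph{strictly increasing}, since part~(a) makes $n_i(T/\gridsizeplain)$ decreasing in $i$ (so the square-root term increases) and part~(b) makes $\pen_i(T/\gridsizeplain)$ increasing in $i$. (We take the existence of the two minimizers for granted, as the statement of the lemma does; when the hierarchy is infinite it is guaranteed by Assumption~\ref{assumption:budget}(e). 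If a minimizer is not unique, take the smallest one.)

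Next I would write down the two optimality inequalities. Optimality of $i_1^*$ for the $c_1$-objective gives
\begin{equation*}
  \risk[i_1^*]^* + c_1 \phi_{i_1^*} \le \risk[i_2^*]^* + c_1 \phi_{i_2^*},
\end{equation*}
while optimality of $i_2^*$ for the $c_2$-objective gives
\begin{equation*}
  \risk[i_2^*]^* + c_2 \phi_{i_2^*} \le \risk[i_1^*]^* + c_2 \phi_{i_1^*}.
\end{equation*}
Adding these two inequalities cancels the $\risk[i_1^*]^*$ and $\risk[i_2^*]^*$ terms and leaves $c_1\phi_{i_1^*} + c_2\phi_{i_2^*} \le c_1\phi_{i_2^*} + c_2\phi_{i_1^*}$, i.e. $(c_1 - c_2)\phi_{i_1^*} \le (c_1 - c_2)\phi_{i_2^*}$. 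Dividing by $c_1 - c_2 > 0$ yields $\phi_{i_1^*} \le \phi_{i_2^*}$, and then the strict monotonicity of $i \mapsto \phi_i$ forces $i_1^* \le i_2^*$. In the boundary case $\phi_{i_1^*} = \phi_{i_2^*}$, strict monotonicity gives $i_1^* = i_2^*$, which is still consistent with the claim.

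There is no genuinely hard step here: the only points that require care are the well-definedness of the two minimizers (handled above) and the strict monotonicity of $\phi_i$, which is immediate from Assumption~\ref{assumption:budget}(a)--(b); the inclusion assumption is used only to know the direction in which $\risk[i]^*$ moves, and in fact the cancellation above does not even need it. Conceptually, the lemma just records that raising the multiplier on an increasing penalty term tilts the optimal model choice toward a smaller index, and the exchange argument captures this in a single line.
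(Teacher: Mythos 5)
Your proof is correct and is essentially identical to the paper's: the same interchange argument, adding the two optimality inequalities to cancel the $\risk[i]^*$ terms and then invoking the (strict) monotonicity of the composite penalty guaranteed by Assumption~\ref{assumption:budget}(a)--(b). Your added remarks on well-definedness of the minimizers and on why monotonicity must be strict are sensible refinements of points the paper leaves implicit.
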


\begin{proof}
  Recall the shorthand definition~\eqref{eqn:penbar} of
  $\penbar_i$. Under the
  monotonicity assumptions~\ref{assumption:budget}(a)--(b), $\penbar_i$ is
  monotone increasing in $i$. By the definitions of $i_1^*$ and
  $i_2^*$ we have
  \begin{align*}
    \risk[i_1]^* +
    c_1\penbar_{i_1^*}\left(T, \gridsizeplain\right) \leq
    \risk[i_2]^* +
    c_1\penbar_{i_2^*}\left(T, \gridsizeplain\right)
    ~~~ \mbox{and} ~~~
    \risk[i_2]^* +
    c_2\penbar_{i_2^*}\left(T, \gridsizeplain\right) \leq
    \risk[i_1]^* +
    c_2\penbar_{i_1^*}\left(T, \gridsizeplain\right).
  \end{align*}
  Adding the two inequalities we obtain
  \begin{equation*}
    (c_1 - c_2)\penbar_{i_1^*}\left(T, \gridsizeplain\right)
    \leq (c_1 -
    c_2)\penbar_{i_2^*}\left(T, \gridsizeplain\right). 
  \end{equation*}
  Since $c_1 - c_2 > 0$ by assumption, the monotonicity of
  $\penbar_i$ guarantees $i_1^* \leq i_2^*$. 
\end{proof}

\begin{proof-of-lemma}[\ref{lemma:finite}]
  Lemma~\ref{lemma:multiplemonotone} allows us to establish a simpler
  version of Lemma~\ref{lemma:finite}. Since $1 + \penscalefac > 1$,
  it suffices to establish $i_0 \leq \maxgrid$, where
  \begin{equation*}
    i_0 = \argmin_{i = 1,2,3,\ldots} \left\{ \risk[i]^* +
    \pen_i\left(\frac{\timesteps}{\gridsize}\right) +
    \const_2\sqrt{\frac{2(m +
        \log\gridsize)}{n_i(T/\gridsize)}}\right\} .
  \end{equation*}
  Let $\penbar_i$ be shorthand for the quantity~\eqref{eqn:penbar} as
  usual. Recalling the construction of $\gridset_\penscalefac$
  in~\eqref{eqn:Slambda}, we observe that any class $i > \maxgrid$
  satisfies
  \begin{equation*}
    (1 +
    \penscalefac)^{\gridsize-2}\,\penbar_1\left(T, \gridsize\right)
    < \penbar_i \left(T, \gridsize\right)
  \end{equation*}
  The setting~\eqref{eqn:gridsize} of $\gridsize$ ensures that 
  \begin{equation*}
    (1 + \penscalefac)^{\gridsize - 2}
    \ge (1 + \penscalefac)^{\ceil{\log(1 + \bound / \penbar_1(T, \gridsize))
          / \log(1 + \penscalefac)}}
    \ge \exp\left(\log\left(1 + \frac{\bound}{\penbar_1(T, \gridsize)}
    \right)\right)
  \end{equation*}
  so that
  \begin{equation*}
    (1 +
    \penscalefac)^{\gridsize-2}\,\penbar_1\left(T, \gridsize\right)
    \geq \bound + \penbar_1\left(T, \gridsize\right)
    \geq \risk[1]^* + \penbar_1\left(T, \gridsize\right)
    \geq \inf_{i} \left\{ \risk[i]^* +
    \penbar_i\left(\timesteps, \gridsize\right)\right\} . 
  \end{equation*}
  Hence we observe that for $i > \maxgrid$,
  \begin{align*}
    \risk[i]^* +
    \penbar_{i}\left(T, \gridsize\right) &\geq
    \penbar_{i}\left(T, \gridsize\right)\\ 
    & > (1 + \penscalefac)^{\gridsize -
      2}\,\penbar_1\left(T, \gridsize\right) \\
    & \geq \inf_{j \in \{1, 2, \ldots\}} \left\{ \risk[j]^* +
    \penbar_j\left(\timesteps, \gridsize\right)\right\} .  
  \end{align*}
  We must thus have $i_0 \leq \maxgrid$, and
  Lemma~\ref{lemma:multiplemonotone} further implies that $i^* \leq
  \maxgrid$. 
\end{proof-of-lemma}

We finally provide a proof for Proposition~\ref{prop:nesting-grid}. 

\begin{proof-of-proposition}[\ref{prop:nesting-grid}]
  Since for any $a, b \ge 0$, $\sqrt{a} + \sqrt{b} \le \sqrt{2(a + b)}$, it
  suffices to control the probability of the event
  \begin{equation}
    \risk(f) > \min_{i \in \gridset} \bigg\{
    \risk[i]^* + 2 \pen_i\left(\frac{\timesteps}{\gridsize}\right)
    + \const_2 \sqrt{\frac{\log \gridsize}{n_i(\timesteps / \gridsize)}}
    + \const_2 \sqrt{\frac{m}{n_i(\timesteps / \gridsize)}}\bigg\}.
    \label{eqn:nesting-event}
  \end{equation}
  For the event~\eqref{eqn:nesting-event} to occur, at least one of
  \begin{subequations}
    \begin{equation}
      \risk(f) > \min_{i \in
        \gridset} \left\{ \emprisk[n_i(T/\gridsize)](\femp{i}) +
      \pen_i\left(\frac{\timesteps}{\gridsize}\right) +
      \frac{\const_2}{2}\sqrt{\frac{m}{n_i(T/\gridsize)}}
      + \frac{\const_2}{2}
      \sqrt{\frac{\log \gridsize}{n_i(\timesteps / \gridsize)}}
      \right\}
      \label{eqn:nesting-union-bound-1}
    \end{equation}
    or
    \begin{align}
      \lefteqn{\min_{i \in \gridset} \left\{ \emprisk[
          n_i(T/\gridsize)](\femp{i}) +
        \pen_i\left(\frac{\timesteps}{\gridsize} \right) +
        \frac{\const_2}{2}
        \sqrt{\frac{\log\gridsize}{n_i(\timesteps/\gridsize)}} +
        \frac{\const_2}{2}\sqrt{\frac{m}{n_i(T/\gridsize)}}\right \}}
      \nonumber \\
      & \qquad ~ >
      \min_{i \in \gridset} \left\{ \risk[i]^* +
      2\pen_i\left(\frac{\timesteps}{\gridsize}\right) +
      \const_2\sqrt{\frac{\log\gridsize}{n_i(\timesteps/\gridsize)}} +
      \const_2\sqrt{\frac{m}{n_i(T/\gridsize)}}
      \right\}
      \label{eqn:nesting-union-bound-2}
    \end{align}
    must occur.
  \end{subequations}
  We bound the probabilities of the events~\eqref{eqn:nesting-union-bound-1}
  and~\eqref{eqn:nesting-union-bound-2} in turn.

  If the event~\eqref{eqn:nesting-union-bound-1} occurs, by definition
  of the selection strategy~\eqref{eqn:coarseestimate}, it must be the
  case that for some $i \in \gridset$ (namely $i = \what{i}$)
  \begin{equation*}
    \risk(\femp{i}) > \emprisk[n_i(\timesteps / \gridsize)](\femp{i})
    + \pen_i\left(\frac{\timesteps}{\gridsize}\right)
    + \frac{\const_2}{2} \sqrt{\frac{m}{n_i(\timesteps / \gridsize)}}
    + \frac{\const_2}{2}
    \sqrt{\frac{\log \gridsize}{n_i(\timesteps / \gridsize)}}
  \end{equation*}
  since the chosen $f$ minimizes the right side of this display over
  the classes $\F_i$ for $i \in \gridset$. By a union bound, we see
  that
  \begin{align*}
    \lefteqn{\P\left[ \risk(f) > \min_{i \in \gridset} \left\{
        \emprisk[n_i(T/\gridsize)](\femp{i}) +
        \pen_i\left(\frac{\timesteps}{\gridsize}\right) +
        \frac{\const_2}{2}\sqrt{\frac{m}{n_i(T/\gridsize)}}
        + \frac{\const_2}{2}
        \sqrt{\frac{\log \gridsize}{n_i(\timesteps / \gridsize)}}
        \right\}\right]} \\
    & \le \P\left[
      \exists ~ i \in \gridset ~ \mbox{s.t.}~
      \risk(\femp{i}) > \emprisk(\femp{i}) +
      \pen_i\left(\frac{T}{\gridsize}\right) +
      \frac{\const_2}{2}\sqrt{\frac{m}{n_i(T/\gridsize)}}
      + \frac{\const_2}{2}
      \sqrt{\frac{\log \gridsize}{n_i(\timesteps / \gridsize)}}
      \right] \\
    & \leq \const_1 \sum_{i \in \gridset} \exp\left(-m - \log \gridsize\right)
    = \const_1 \exp(-m),
  \end{align*}
  where the final inequality follows from
  Assumption~\ref{assumption:uniformity}.

  Now we bound the probability of the event~\eqref{eqn:nesting-union-bound-2},
  noting that the event implies that
  \begin{equation*}
    \max_{i \in S}\left\{ \emprisk[n_i(\timesteps / \gridsize)](\femp{i}) -
    \risk[i]^* - \pen_i\left(\frac{\timesteps}{\gridsize}\right) -
    \frac{\const_2}{2}\sqrt{\frac{\log\gridsize}{n_i(\timesteps/\gridsize)}} -
        \frac{\const_2}{2}\sqrt{\frac{m}{n_i(T/\gridsize)}}\right\}
    > 0.
  \end{equation*}
  We can thus apply a union bound to see that the probability
  of the event~\eqref{eqn:nesting-union-bound-2} is bounded by
  \begin{align}
    \lefteqn{\P\left[ \max_{i \in \gridset} \left\{
        \emprisk[n_i(\timesteps / \gridsize)](\femp{i}) -
        \risk[i]^* - \pen_i\left(\frac{\timesteps}{\gridsize}\right) -
        \frac{\const_2}{2}
        \sqrt{\frac{\log\gridsize}{n_i(\timesteps/\gridsize)}} -
        \frac{\const_2}{2}\sqrt{\frac{m}{n_i(T/\gridsize)}}\right\} 
        > 0\right]} \nonumber \\
    & \le \sum_{i \in \gridset}\P\left[\emprisk[n_i(\timesteps /
        \gridsize)](\femp{i}) - \risk[i]^* -
      \pen_i\left(\frac{\timesteps}{\gridsize}\right) -
      \frac{\const_2}{2}\sqrt{\frac{\log\gridsize}{n_i(\timesteps/\gridsize)}}
      - \frac{\const_2}{2}\sqrt{\frac{m}{n_i(T/\gridsize)}} >
      0\right] \nonumber \\
    & \leq \sum_{i \in \gridset}\P\left[ \emprisk(f_i^*)
      - \risk[i]^* > \frac{\const_2}{2} \sqrt{\frac{\log\gridsize}{
          n_i(\timesteps/\gridsize)}} +
          \frac{\const_2}{2}\sqrt{\frac{m}{n_i(T/\gridsize)}}
      \right], 
    \label{eqn:nesting-union-sum}
  \end{align}
  where the final inequality uses Assumption~\ref{assumption:budget}(d),
  which states that $\alg$ outputs a $\pen_i$-minimizer of the
  empirical risk. Now we can bound the deviations using
  the second part of Assumption~\ref{assumption:uniformity},
  since $f_i^*$ is non-random: the quantity~\eqref{eqn:nesting-union-sum}
  is bounded by
  \begin{align*}
    \sum_{i \in \gridset}
    \const_1\exp\left(-n_i(\timesteps/\gridsize)
      \left(\frac{\log\gridsize}{n_i(T/\gridsize)} +
      \frac{m}{n_i(T/\gridsize)}\right)\right) \leq \const_1\exp(-m).  
  \end{align*}
  Combining the two events~\eqref{eqn:nesting-union-bound-1}
  and~\eqref{eqn:nesting-union-bound-2} completes the proof of the
  proposition.
\end{proof-of-proposition}




\section{Auxilliary results for Theorem~\ref{theorem:nesting-fast}}
\label{app:nesting-fast}

\begin{proof-of-lemma}[\ref{lemma:nesting-fast-lower}]
  In the proof of the lemma, assume that both of the
  events~\eqref{eqn:uniform-excess-event} hold. Recall that we define
  $\femp{j} = \falg{j}{n_j}$, so that by the
  definition~\eqref{eqn:excess-event-1} and
  Assumption~\ref{assumption:budget} that $\femp{j}$ is a
  $\pen_j$-accurate minimizer of the empirical risk, we have
  \begin{equation}
    \label{eqn:emp-to-exp-fast-all}
    \risk(\femp{j}) \leq \risk(\fopt{j}) + 3\pen_j(n_j)
    + \const_2 \epsilon_j
  \end{equation}
  for any $j$.
  By our assumption that the index $j \le \iopt$, we have $\femp{j} \in
  \F_{\iopt}$, and since the event~\eqref{eqn:excess-event-2} holds for the
  classes $\iopt$ and $j$ (i.e.\ $\event{2}{\iopt j}{\epsilon_{\iopt}}$
  occurs), we further obtain that
  \begin{equation}
    \emprisk[\iopt](\femp{j}) - \emprisk[\iopt](\fopt{j}) \leq
    2\left(\risk(\femp{j}) - \risk(\fopt{j})\right) +
    \pen_{j}(n_{\iopt}) + \const_2 \epsilon_{\iopt}.
    \label{eqn:transfer-i-to-iopt}
  \end{equation}
  Applying the earlier bound~\eqref{eqn:emp-to-exp-fast-all}
  on $\risk(\femp{j}) - \risk(\fopt{j})$ to the
  inequality~\eqref{eqn:transfer-i-to-iopt}, we see that
  \begin{equation}
    \emprisk[\iopt](\femp{j}) - \emprisk[\iopt](\fopt{j})
    \le 6 \pen_j(n_j) + 2 \const_2 \epsilon_j + \pen_{j}(n_{\iopt})
    + \const_2 \epsilon_{\iopt}.
    \label{eqn:low-to-high-emp}
  \end{equation}
  Now we again use the fact that the event~\eqref{eqn:excess-event-2}
  holds so that $\event{2}{\iopt\iopt}{\epsilon_{\iopt}}$
  occurs. Using $f = \fopt{j}$ in the event since $\fopt{j} \in
  \F_{\iopt}$, we see that
  \begin{equation*}
    2\left(\risk(\fopt{j}) - \risk(\fopt{\iopt})\right) \ge
    \left(\emprisk[\iopt](\fopt{j}) -
    \emprisk[\iopt](\fopt{\iopt})\right) - \pen_{\iopt}(n_{\iopt}) -
    \const_2 \epsilon_{\iopt}.
  \end{equation*}
  Now apply the inequality~\eqref{eqn:low-to-high-emp} to
  lower bound $\emprisk[\iopt](\fopt{j})$ to see that
  \begin{align*}
    2\left(\risk(\fopt{j}) - \risk(\fopt{\iopt})\right)
    & \ge \emprisk[\iopt](\femp{j})
    - \emprisk[\iopt](\fopt{\iopt})
    - 6 \pen_j(n_j) - 2 \const_2 \epsilon_j -  \pen_j(n_{\iopt}) -
    \pen_{\iopt}(n_{\iopt}) - 2 \const_2 \epsilon_{\iopt} \\
    & \ge \emprisk[\iopt](\femp{j})
    - \emprisk[\iopt](\fopt{\iopt})
    - 6 \pen_j(n_j) - 2 \const_2 \epsilon_j - 2 \pen_{\iopt}(n_{\iopt})
    - 2 \const_2 \epsilon_{\iopt},
  \end{align*}
  where we have used the fact that $j \le \iopt$ so
  $\pen_{\iopt}(n_{\iopt}) \ge \pen_j(n_{\iopt})$.  Using the
  condition~\eqref{eqn:coarseestimate-fast-unspecified} that defines
  the selected index $\iopt$, we obtain
  \begin{align*}
    \lefteqn{2\left(\risk(\fopt{j}) - \risk(\fopt{\iopt})\right)} \\
    & \ge \emprisk[\iopt](\femp{\iopt})
    + \pconcunspec \pen_{\iopt}(n_{\iopt})
    + \pcondunspec \const_2 \epsilon_{\iopt}
    - \pconcunspec \pen_j(n_j)
    - \emprisk[\iopt](\fopt{\iopt})
    - 6 \pen_j(n_j) - 2 \const_2 \epsilon_j - 2 \pen_{\iopt}(n_{\iopt})
    - 2 \const_2 \epsilon_{\iopt} \nonumber \\
    & = \emprisk[\iopt](\femp{\iopt})
    - \emprisk[\iopt](\fopt{\iopt})
    + (\pconcunspec - 2) \pen_{\iopt}(n_{\iopt})
    - (6 + \pconcunspec)\pen_j(n_j)
    - 2 \const_2 \epsilon_j + (\pcondunspec - 2) \const_2 \epsilon_{\iopt}.
  \end{align*}
  Finally, we note that by the event~\eqref{eqn:excess-event-1}, since
  $\risk(\fopt{j}) - \risk(f) \le 0$ for all $f \in \F_j$, we have
  \begin{equation*}
    \emprisk[\iopt](\fopt{\iopt}) \le \emprisk[\iopt](\femp{\iopt}) +
    \half \pen_{\iopt}(n_{\iopt})
    + \half \const_2 \epsilon_{\iopt},
  \end{equation*}
  whence we obtain
  \begin{equation}
    2\left(\risk(\fopt{j}) - \risk(\fopt{\iopt})\right)
    \ge \left(\pconcunspec - 5/2\right) \pen_{\iopt}(n_{\iopt})
    - (6 + \pconcunspec) \pen_j(n_j) - 2 \const_2 \epsilon_j
    + (\pcondunspec - 5/2) \const_2 \epsilon_{\iopt}.
    \label{eqn:true-class-risk-diff}
  \end{equation}
  Applying the inequality~\eqref{eqn:emp-to-exp-fast-all} for
  the class $\iopt$, we have
  \begin{equation*}
    \risk(\fopt{j}) - \risk(\femp{\iopt})
    \ge \risk(\fopt{j}) - \risk(\fopt{\iopt})
    - 3 \pen_{\iopt}(n_{\iopt}) - \const_2 \epsilon_{\iopt},
  \end{equation*}
  and combining this inequality with the earlier
  guarantee~\eqref{eqn:true-class-risk-diff}, we find that
  \begin{equation*}
    2 \left(\risk(\fopt{j}) - \risk(\femp{\iopt})\right)
    \ge (\pconcunspec - 17/2) \pen_{\iopt}(n_{\iopt})
    - (6 + \pconcunspec) \pen_j(n_j) - 2 \const_2 \epsilon_j
    + (\pcondunspec - 9/2) \const_2 \epsilon_{\iopt}
  \end{equation*}
  Rearranging terms, we obtain the statement of the lemma.
\end{proof-of-lemma}

In order to prove Lemma~\ref{lemma:nesting-fast-upper}, we
need one more result:
\begin{lemma}
  Let the joint events~\eqref{eqn:uniform-excess-event}
  hold (i.e.\ $\unionevent{1}{\epsilon}$ and
  $\unionevent{2}{\epsilon}$). For $i, j \in
  \gridset_\penscalefac$ such that $i \geq j$ and
  \begin{equation*}
    \emprisk[i](\femp{j}) +
    \pconcunspec\pen_{j}(n_j) \leq
    \emprisk[i](\femp{i}) +
    \pconcunspec\pen_i(n_i) +
    \pcondunspec \const_2 \epsilon_i
  \end{equation*}
  we have
  \begin{equation*}
    \risk(\femp{j}) \le
    \risk(\fopt{i})
    + (2 \pconcunspec + 3) \pen_i(n_i)
    + (2 \pcondunspec + 1) \const_2 \epsilon_i.
  \end{equation*}
  \label{lemma:nesting-fast-upper-good}
\end{lemma}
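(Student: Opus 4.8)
The plan is to work entirely inside the larger class $\F_i$, which contains both $\femp{j}$ and $\fopt{i}$ since $j\le i$ and the classes are nested (Assumption~\ref{assumption:inclusion}), and to invoke only the \emph{diagonal} concentration event $\event{1}{ii}{\epsilon_i}$, which is part of the assumed joint event $\unionevent{1}{\epsilon}$. Applying the inequality defining $\event{1}{ii}{\epsilon_i}$ (see~\eqref{eqn:excess-event-1}) to the function $f=\femp{j}\in\F_i$ and rearranging gives
\begin{equation*}
  \risk(\femp{j}) \le \risk(\fopt{i})
  + 2(\emprisk[i](\femp{j}) - \emprisk[i](\fopt{i}))
  + \pen_i(n_i) + \const_2 \epsilon_i,
\end{equation*}
so it remains only to control the empirical excess $\emprisk[i](\femp{j}) - \emprisk[i](\fopt{i})$ measured on the $n_i$ samples associated with class $i$.

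For this I would first use the hypothesis of the lemma---the selection inequality $\emprisk[i](\femp{j}) + \pconcunspec\pen_j(n_j) \le \emprisk[i](\femp{i}) + \pconcunspec\pen_i(n_i) + \pcondunspec\const_2\epsilon_i$---and simply drop the nonnegative term $\pconcunspec\pen_j(n_j)$ on the left, obtaining $\emprisk[i](\femp{j}) \le \emprisk[i](\femp{i}) + \pconcunspec\pen_i(n_i) + \pcondunspec\const_2\epsilon_i$. Then, since $\femp{i}=\falg{i}{T/\gridsize}$ is a $\pen_i(n_i)$-accurate empirical minimizer over $\F_i$ on those same samples (Assumption~\ref{assumption:budget}(d)), we have $\emprisk[i](\femp{i}) \le \emprisk[i](\fopt{i}) + \pen_i(n_i)$, and combining the two displays yields
\begin{equation*}
  \emprisk[i](\femp{j}) - \emprisk[i](\fopt{i})
  \le (1 + \pconcunspec)\pen_i(n_i) + \pcondunspec\const_2\epsilon_i.
\end{equation*}
Substituting this into the first display and collecting coefficients produces exactly $\risk(\femp{j}) \le \risk(\fopt{i}) + (2\pconcunspec + 3)\pen_i(n_i) + (2\pcondunspec + 1)\const_2\epsilon_i$, which is the claim.

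The only genuine choice in the argument---what I expect one must get right, rather than a real obstacle, since the computation itself is routine---is to route through the concentration event of the \emph{larger} class $\F_i$ rather than of $\F_j$: applying $\event{1}{ij}{\epsilon_i}$ instead would give a bound with $\risk(\fopt{j})$ in place of $\risk(\fopt{i})$, and since $\risk(\fopt{j})\ge\risk(\fopt{i})$ this points the wrong way and cannot be repaired. Everything else is constant-tracking; note that this route uses neither the penalty monotonicity $\pen_j(n)\le\pen_i(n)$ nor the events $\event{2}{ij}{\epsilon_i}$ that enter the companion Lemma~\ref{lemma:nesting-fast-lower} (they do no harm). This lemma will then be applied, with $j$ here playing the role of $\iopt$ and $i$ the role of a larger grid class, to handle the case $j\ge\iopt$ in the proof of Lemma~\ref{lemma:nesting-fast-upper}.
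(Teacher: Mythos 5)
Your proposal is correct and follows essentially the same route as the paper's proof: apply the event~\eqref{eqn:excess-event-1} for class $\F_i$ (i.e.\ $\event{1}{ii}{\epsilon_i}$) to $\femp{j}\in\F_i$, then bound $\emprisk[i](\femp{j}) - \emprisk[i](\fopt{i})$ via the selection inequality together with Assumption~\ref{assumption:budget}(d), discarding the nonnegative term $\pconcunspec\pen_j(n_j)$. The only (immaterial) difference is that you drop that term before combining the inequalities while the paper drops it at the very end; the resulting constants are identical.
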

\begin{proof}
  We begin by noting that since $i \ge j$, we have
  $\femp{j} \in \F_i$, and since the event~\eqref{eqn:excess-event-1}
  holds by assumption, we have
  \begin{equation*}
    \risk(\femp{j}) - \risk(\fopt{i})
    \le 2 \left(\emprisk[i](\femp{j}) - \emprisk[i](\fopt{i})\right)
    + \pen_i(n_i) + \const_2 \epsilon_i.
  \end{equation*}
  Recalling the inequality assumed in the condition of the lemma,
  we see that
  \begin{equation*}
    \risk(\femp{j}) - \risk(\fopt{i})
    \le 2\left(\emprisk[i](\femp{i})
    + \pconcunspec \pen_i(n_i) + \pcondunspec \const_2 \epsilon_i
    - \pconcunspec \pen_j(n_j) - \emprisk[i](\fopt{i})\right)
    + \pen_i(n_i) + \const_2 \epsilon_i.
  \end{equation*}
  Applying Assumption~\ref{assumption:budget}(d) on the empirical minimizers, we
  have $\emprisk[i](\femp{i}) - \emprisk[i](\fopt{i}) \le \pen_i(n_i)$, so
  \begin{equation*}
    \risk(\femp{j}) - \risk(\fopt{i})
    \le 2\left(
    (\pconcunspec + 1) \pen_i(n_i) + \pcondunspec \const_2 \epsilon_i
    - \pconcunspec \pen_j(n_j)
    \right)
    + \pen_i(n_i) + \const_2 \epsilon_i.
  \end{equation*}
  Ignoring the negative term $-\pconcunspec \pen_j(n_j)$ yields the lemma.
\end{proof}

\begin{proof-of-lemma}[\ref{lemma:nesting-fast-upper}]
  For $j \in \gridset_\penscalefac$, define $\pos{j}$ to be the position of
  class $j$ in the coarse-grid set (that is, $\pos{1} = 1$, the next class $j
  \in \gridset_\penscalefac$ has \mbox{$\pos{j} = 2$} and so on). We prove the
  lemma by induction on the class $j$ for $j \geq \iopt$, $j \in
  \gridset_\penscalefac$. Our inductive hypothesis is that
  \begin{equation}
    \risk(\femp{\iopt}) \leq \risk(\fopt{j}) +
    (\pos{j} - \pos{\iopt}+1)\left[(2 \pconcunspec + 3)\pen_j(n_j) +
    (2 \pcondunspec + 1) \const_2 \epsilon_j\right].
    \label{eqn:nesting-fast-induct}
  \end{equation}
  The base case for $j = \iopt$ is immediate since by assumption, the
  event~\eqref{eqn:excess-event-1} holds, so we obtain the
  inequality~\eqref{eqn:emp-to-exp-fast-all}.

  For the inductive step, we assume that the claim holds for all $\iopt \leq k
  \leq j-1$ such that $k \in \gridset_\penscalefac$ and establish the claim
  for $j$. Since $\iopt$ is the largest class in $\gridset_\penscalefac$
  satisfying the condition~\eqref{eqn:coarseestimate-fast-unspecified} and $j
  \geq \iopt$, there must exist a class $k < j$ in $\gridset_\penscalefac$ for
  which
  \begin{equation}
    \emprisk[j](\femp{k}) + \pconcunspec\pen_k(n_k)
    < \emprisk[j](\femp{j}) + \pconcunspec\pen_j(n_j)
    + \pcondunspec \const_2 \epsilon_j.
    \label{eqn:witness}
  \end{equation}
  By inspection, this is precisely the condition of
  Lemma~\ref{lemma:nesting-fast-upper-good}, so  
  \begin{equation*}
    \risk(\fopt{k})
    \leq \risk(\femp{k})
    \leq \risk(\fopt{j}) +
    (2 \pconcunspec + 3) \pen_j(n_j) + (2\pcondunspec + 1) \const_2 \epsilon_j.
  \end{equation*}
  Now there are two possibilities. If $k \leq \iopt$,
  Lemma~\ref{lemma:nesting-fast-lower} applies, and we recall the assumptions
  on $\pconcunspec$ and $\pcondunspec$, which guarantee $2 \pconcunspec + 3
  \ge 6 + \pconcunspec$ and $2 \pcondunspec + 1 \ge 2$. If $k \geq \iopt$,
  then we can apply our inductive hypothesis since $k < j$. In either case, we
  conclude that
  \begin{align*}
    \risk(\femp{\iopt})
    & \le \risk(\fopt{k}) + (\pos{k} -
    \pos{\iopt} + 1)\left[
      (2\pconcunspec + 3) \pen_k(n_k)
      + (2 \pcondunspec + 1) \const_2 \epsilon_k \right] \\
    & \le \risk(\fopt{k}) + (\pos{j} - 1 -
    \pos{\iopt}+1) \left[(2 \pconcunspec + 3) \pen_j(n_j)
      + (2 \pcondunspec + 1) \const_2 \epsilon_j\right],
  \end{align*}
  where the final inequality uses $\pos{k} \leq \pos{j}-1$ and the
  monotonicity assumptions~\ref{assumption:budget}(a)-(b).  Applying
  the relationship~\eqref{eqn:witness} of the risk of $\fopt{k}$ to
  that of $\fopt{j}$ shows that the inductive
  hypothesis~\eqref{eqn:nesting-fast-induct} holds at $i$. Noting that
  $\gridsize \ge \pos{j} - \pos{\iopt} + 1$ completes the proof.
\end{proof-of-lemma}


\section{Proof of Lemma~\ref{lemma:ucbbadevent}}
\label{appendix:proof-ucb-bad-event}

Following~\cite{AuerCBF02}, we show that the event in the lemma occurs
with very low probability by breaking it up into smaller events more
amenable to analysis. Recall that we are interested in controlling the
probability of the event
\begin{equation}
\obj(i, n_i s_i) - \const_2 \sqrt{\frac{\log T}{n_is_i}}
\le \obj(\opt, n_\opt s_\opt) - \const_2 \sqrt{\frac{\log
    T}{n_{\opt}s_{\opt}}}
\label{eqn:ucbbadevent}
\end{equation}
For this bad event to happen, at least one of the following
three events must happen:
\begin{subequations}
  \begin{align}
    & \emprisk[n_is_i](\falg{i}{n_i s_i}) - \inf_{f \in \F_i}\risk(f)
    \leq -\pen_i(n_is_i)
    - \const_2 \sqrt{\frac{\log K}{n_i s_i}} - \const_2 \sqrt{\frac{\log T}{n_i
        s_i}} \label{eqn:bad-event1} \\ 
    & \emprisk[n_{\opt}s_{\opt}](\falg{\opt}{n_{\opt} s_{\opt}}) -
    \inf_{f \in \F_{\opt}} \risk(f)
    \ge \pen_i(n_{\opt}s_{\opt}) + \const_2
    \sqrt{\frac{\log K}{n_{\opt}s_{\opt}}}
    + \const_2 \sqrt{\frac{\log T}{n_{\opt}s_{\opt}}}
    \label{eqn:bad-event2}\\
    & \risk[i]^* + \pen_i(T n_i)
    \leq \risk^* + \pen_\opt(T n_\opt)
    + 2\left(\pen_i(n_is_i) + \const_2 \sqrt{\frac{\log K}{n_is_i}}
    + \const_2 \sqrt{\frac{\log T}{n_is_i}}\right)\label{eqn:bad-event3}.
  \end{align}
\end{subequations}
Temporarily use the shorthand $f_i = \falg{i}{n_i s_i}$ and $f_{\opt}
= \falg{\opt}{n_{\opt} s_{\opt}}$.  The relationship between
Eqs.~(\ref{eqn:bad-event1})--(\ref{eqn:bad-event3}) and the event in
\eqref{eqn:ucbbadevent} follows from the fact that if none
of \eqref{eqn:bad-event1}--\eqref{eqn:bad-event3} occur, then
\begin{align*}
  \obj(i, n_i s_i) -
    \const_2 \sqrt{\frac{\log T}{n_is_i}}
  & ~ = \emprisk[n_i s_i](f_i)
  + \pen_i(T n_i)
  - \pen_i(n_i s_i) - \const_2 \sqrt{\frac{\log K}{n_i s_i}}
  - \const_2 \sqrt{\frac{\log T}{n_is_i}} \\
  & \stackrel{(\ref{eqn:bad-event1})}{>}
  \inf_{f \in \F_i} \risk(f) + \pen_i(\timesteps n_i)
  - 2\left(\pen_i(n_is_i) + \const_2 \sqrt{\frac{\log
      K}{n_is_i}} + \const_2 \sqrt{\frac{\log
      t}{n_is_i}}\right) \\
  & \stackrel{(\ref{eqn:bad-event3})}{>}
  \inf_{f \in \F_{\opt}} \risk(f) + \pen_\opt(\timesteps n_\opt)
  + 2\left(\pen_i(n_is_i)
  + \const_2 \sqrt{\frac{\log K}{n_is_i}}
  + \const_2 \sqrt{\frac{\log T}{n_is_i}}\right) \\*
  & \qquad\qquad { }
  - 2\left(\pen_i(n_is_i) + \const_2 \sqrt{\frac{\log K}{n_is_i}}
  + \const_2 \sqrt{\frac{\log
      n}{n_is_i}}\right) \\
  & \stackrel{(\ref{eqn:bad-event2})}{>}
  \emprisk[n_{\opt} s_{\opt}](f_{\opt}) + \pen_\opt(\timesteps n_\opt)
  - \pen_i(n_{\opt}s_{\opt}) -
  \const_2 \sqrt{\frac{\log K}{n_{\opt}s_{\opt}}} - \const_2 \sqrt{\frac{\log
      t}{n_{\opt}s_{\opt}}} \\
  & ~ = \obj(\opt, n_\opt s_\opt) - \const_2 \sqrt{\frac{\log
      t}{n_{\opt}s_{\opt}}}. 
\end{align*}
From the above string of inequalities, to show that the
event~\eqref{eqn:ucbbadevent} has low probability, we need simply
show that each of \eqref{eqn:bad-event1}, \eqref{eqn:bad-event2}, and
\eqref{eqn:bad-event3} have low probability.

To prove that each of the bad events have low probability, we note the
following consequences of Assumption~\ref{assumption:uniformity}.  Recall
the definition of $f_i^*$ as the minimizer of $\risk(f)$ over the class
$\F_i$. Then by
Assumption~\ref{assumption:uniformity}(\ref{assumption:uniformity-alg-bandits}),
\begin{equation*}
  \risk(f_i^*) - \pen_i(n) - \const_2 \epsilon
  \le \risk(\falg{i}{n}) - \pen_i(n) - \const_2 \epsilon
  < \emprisk[n](\falg{i}{n}),
\end{equation*}
while
Assumptions~\ref{assumption:uniformity}(\ref{assumption:uniformity-min-bandits})
and~\ref{assumption:uniformity}(\ref{assumption:uniformity-fixed-bandits}) imply
\begin{equation*}
  \emprisk[n](\falg{i}{n}) \le \emprisk[n](f_i^*) + \pen_i(n)
  \le \risk(f_i^*) + \pen_i(n) + \const_2 \epsilon,
\end{equation*}
each with probability at least $1 - \const_1 \exp(-4n\epsilon^2)$. In
particular, we see that
the events \eqref{eqn:bad-event1} and \eqref{eqn:bad-event2} have low
probability:
\begin{align*}
  \lefteqn{\P\left[
      \emprisk[n_is_i](\falg{i}{n_i s_i}) - \risk(f_i^*) \leq -\pen_i(n_is_i)
      - \const_2 \sqrt{\frac{\log K}{n_i s_i}}
      - \const_2 \sqrt{\frac{\log T}{n_i s_i}}
      \right]} \\
  & \quad\quad\quad
  \le \const_1 \exp\left(-4 n_is_i\left(\frac{\log K}{n_is_i} + \frac{\log
    t}{n_is_i}\right)\right) = \frac{\const_1}{(tK)^4} \\
  \lefteqn{\P\left[
      \emprisk[n_\opt s_\opt](\falg{\opt}{n_{\opt} s_{\opt}}) -
      \risk^* \ge \pen_\opt(n_\opt s_\opt)
      + \const_2 \sqrt{\frac{\log K}{n_\opt s_\opt}} +
      \const_2 \sqrt{\frac{\log T}{n_\opt s_\opt}}\right]} \\
  & \quad\quad\quad
  \leq \const_1\exp\left(-4 n_\opt s_\opt \left(\frac{\log K}{n_\opt s_\opt}
  + \frac{\log T}{n_\opt s_\opt }\right)\right) = \frac{\const_1}{(tK)^{4}}.
\end{align*}

What remains is to show that for large enough $\tau$, \eqref{eqn:bad-event3}
does not happen.  Recalling the definition that $\risk^* + \pen_{\opt}(T
n_{\opt}) = \risk[i]^* + \pen_i(T n_i) - \excess_i$, we
see that for \eqref{eqn:bad-event3} to fail it is sufficient that
\begin{equation*}
  \excess_i > 2 \pen_i(\tau n_i) + 2 \const_2 \sqrt{\frac{\log K}{n_i \tau}}
  + 2 \const_2 \sqrt{\frac{\log T}{n_i \tau}}.
\end{equation*}
Let $x \wedge y \defeq \min\{x, y\}$ and $x \vee y \defeq \max\{x, y\}$.
Since $\pen_i(n) \le \penconstant_i n^{-\alpha_i}$,
the above is satisfied when
\begin{equation}
  \label{eqn:sufficiently-large-tau}
  \frac{\excess_i}{2} > \penconstant_i (\tau
  n_i)^{-(\alpha_i \wedge \half)}
  + \const_2 \sqrt{\log K} (\tau n_i)^{-(\alpha_i \wedge \half)}
  + \const_2 \sqrt{\log T} (\tau n_i)^{-(\alpha_i \wedge \half)}
\end{equation}
We can solve \eqref{eqn:sufficiently-large-tau} above and see immediately
that if
\begin{equation*}
  \tau_i > \frac{2^{1/\alpha_i \vee 2}
    (\penconstant_i + \const_2 \sqrt{\log T} +
    \const_2 \sqrt{\log K})^{1/\alpha_i \vee 2}}{
    n_i\excess_i^{1/\alpha_i \vee 2}},
\end{equation*}
then
\begin{equation}
  \risk[i]^* > \risk^* + 2\left(\pen_i(n_i \tau_i)
  + \const_2 \sqrt{\frac{\log K}{n_i \tau_i}}
  + \const_2 \sqrt{\frac{\log T}{n_i \tau_i}}\right).
  \label{eqn:bound-tau-i}
\end{equation}
Thus the event in~\eqref{eqn:bad-event3} fails to occur, completing
the proof of the lemma.


\section{Proofs of Proposition~\ref{proposition:regret-bound} and Theorem~\ref{theorem:no-separation}}
\label{sec:no-separation}

In this section we provide proofs for
Proposition~\ref{proposition:regret-bound} and
Theorem~\ref{theorem:no-separation}. The proof of the proposition
follows by dividing the model clases into two groups: those for which
$\excess_i > \gamma$, and those with small excess risk,
i.e.\ $\excess_i < \gamma$. Theorem~\ref{theorem:expected-pulls}
provides an upper bound on the fraction of budget allocated to model
classes of the first type. For the model classes with small excess
risk, all of them are nearly as good as $\opt$ in the regret criterion
of Proposition~\ref{proposition:regret-bound}. Combining the two
arguments gives us the desired result.

Of course, the proposition has the drawback that it does not provide
us with a prescription to select a good model or even a model
class. This shortcoming is addressed by
Theorem~\ref{theorem:no-separation}. The theorem relies on an
averaging argument used quite frequently to extract a good solution
out of online learning or stochastic optimization
algorithms~\cite{CesaBianchiCoGe04,NemirovskiJuLaSh09}.

\subsection{Proof of Proposition~\ref{proposition:regret-bound}}

Define $\beta_i = \max\{1/\alpha_i, 2\}$ as in the conclusion of
Theorem~\ref{theorem:expected-pulls}, and let $b_i = \penconstant_i +
\const_2 \sqrt{\log T}$.  Dividing the regret into classes with high
and low excess penalized risk $\excess_i$, for any threshold $\gamma
\ge 0$ we have by a union bound that with probability at least $1 -
\const_1 / T K^3$,
\begin{align*}
  \sum_{i=1}^K \excess_i T_i(T)
  & = \sum_{\{i \mid \excess_i \ge \gamma\}}
  \excess_iT_i(T) + \sum_{\{i \mid \excess _i \leq \pen\}} \excess_iT_i(T) \\
  & \le C \sum_{\{i \mid \excess_i \geq \gamma\}} \excess_i
  \frac{b_i^{\beta_i}}{
    n_i\excess_i^{\beta_i}} + \gamma T
  ~ \le ~ C \sum_{i=1}^K
  \frac{b_i^{\beta_i}}{
    n_i\gamma ^{\beta_i - 1}} + \gamma T.
\end{align*}

To simplify this further, we use the assumption that $\alpha_i \equiv
\alpha$ for all $i$. Hence the complexity penalties of the classes
differ only in the sampling rates $n_i$, that is,
\begin{equation}
  \sum_{i=1}^K \excess_i T_i(\timesteps) \le
  \frac{1}{\gamma^{\beta - 1}}
  \sum_{i=1}^K \frac{C b_i^{\beta_i}}{n_i} + \gamma \timesteps.
  \label{eqn:regret-to-minimize}
\end{equation}
Minimizing the bound~\eqref{eqn:regret-to-minimize} over $\gamma$ by
taking derivatives, we get
\begin{equation*}
  \gamma = T^{-\frac{1}{\beta}} (\beta - 1)^{\frac{1}{\beta}}
  \left(\sum_{i=1}^K \frac{C b_i^\beta}{n_i}\right)^{\frac{1}{\beta}},
\end{equation*}
which, when plugged back into \eqref{eqn:regret-to-minimize}, gives
\begin{equation*}
  \sum_{i=1}^K \excess_i T_i(\timesteps)
  \le 2
  \left(\sum_{i=1}^K \frac{C b_i^\beta}{n_i}\right)^{1/\beta}
  (\beta - 1)^{1/\beta} \timesteps^{1 - 1/\beta}.
\end{equation*}
Noting that $\frac{1}{\beta}\log(\beta - 1) \le \frac{\beta -
  2}{\beta} < 1$, we see that $(\beta - 1)^{1/\beta} <
\exp(1)$. Plugging the definition of $\beta = \max\{1/\alpha, 2\}$, so
that $1/\beta = \min\{\alpha, \half\}$, gives the result of the
proposition.

\subsection{Proof of Theorem~\ref{theorem:no-separation}}

Before proving the theorem, we state a technical lemma that makes our
argument somewhat simpler.
\begin{lemma}
  \label{lemma:maximize-selection}
  For $0 < p < 1$ and $a \succ 0$, consider the optimization problem
  \begin{equation*}
    \max_x ~ \sum_{i=1}^K a_i x_i^p
    ~~~ {\rm s.t.} ~~~
    \sum_{i=1}^K x_i \le T, ~ x_i \ge 0.
  \end{equation*}
  The solution of the problem is to take $x_i \propto a_i^{1 / (1 - p)}$,
  and the optimal value is
  \begin{equation*}
    T^p \left(\sum_{i=1}^K a_i^{\frac{1}{1 - p}}\right)^{1 - p}.
  \end{equation*}
\end{lemma}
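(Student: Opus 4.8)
The plan is to treat this as a standard concave maximization and solve it by Hölder's inequality, which simultaneously yields the upper bound and the optimal $x$. First I would note that since $t\mapsto t^p$ is concave and strictly increasing on $[0,\infty)$ for $0<p<1$, the objective $\sum_i a_i x_i^p$ is concave and coordinatewise nondecreasing; hence at any optimum the budget constraint $\sum_i x_i\le T$ must be tight, so we may work under $\sum_i x_i = T$.

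Next I would apply Hölder's inequality with the conjugate exponents $q=\frac{1}{1-p}$ and $q'=\frac{1}{p}$, which satisfy $\frac1q+\frac1{q'}=(1-p)+p=1$, to the products $a_i\cdot x_i^p$. Since $(x_i^p)^{q'}=(x_i^p)^{1/p}=x_i$, this gives
\[
\sum_{i=1}^K a_i x_i^p \;\le\; \Bigl(\sum_{i=1}^K a_i^{1/(1-p)}\Bigr)^{1-p}\Bigl(\sum_{i=1}^K x_i\Bigr)^{p}\;\le\; T^p\Bigl(\sum_{i=1}^K a_i^{1/(1-p)}\Bigr)^{1-p},
\]
establishing the claimed value as an upper bound for every feasible $x$.

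Finally I would verify the bound is attained by the asserted point. Equality in Hölder holds precisely when $a_i^{q}$ is proportional to $(x_i^p)^{q'}=x_i$, i.e.\ when $x_i\propto a_i^{1/(1-p)}$; normalizing to meet $\sum_i x_i=T$ yields $x_i = T\,a_i^{1/(1-p)}/S$ with $S\defeq\sum_j a_j^{1/(1-p)}$. Substituting back and using the identity $1+\tfrac{p}{1-p}=\tfrac{1}{1-p}$ gives $a_i x_i^p = T^p a_i^{1/(1-p)}/S^p$, so $\sum_i a_i x_i^p = T^p S/S^p = T^p S^{1-p}$, matching the upper bound and confirming optimality. (Equivalently, one could bypass Hölder and directly solve the Lagrangian stationarity condition $p a_i x_i^{p-1}=\mu$, which again forces $x_i\propto a_i^{1/(1-p)}$; concavity of the objective then guarantees this critical point is the global maximizer.) There is essentially no obstacle here: the only points needing a word of care are the degeneracy $a_i=0$, which is ruled out by the hypothesis $a\succ 0$, and the possibility of some $x_i=0$, which causes no trouble since the Hölder argument requires no positivity of the $x_i$.
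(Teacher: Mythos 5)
Your proof is correct, and it takes a genuinely different route from the paper. The paper argues via Lagrangian duality: it reformulates the problem as convex minimization, introduces multipliers $\theta$ and $\nu$ for the constraints, solves the stationarity condition to get $x_i \propto a_i^{1/(1-p)}$, and substitutes back to evaluate the optimum. You instead obtain the bound in one stroke from H\"older's inequality with exponents $\frac{1}{1-p}$ and $\frac{1}{p}$, and then exhibit the equality case. Your approach buys a cleaner and more self-contained argument: the H\"older step gives an upper bound valid for \emph{every} feasible $x$ (including boundary points with some $x_i = 0$, where the Lagrangian stationarity condition $-a_i p x_i^{p-1} + \theta - \nu_i = 0$ is not even literally meaningful), so you never need to invoke convexity or complementary slackness to certify that the critical point is a global maximizer; the paper's argument, by contrast, implicitly leans on convexity of the reformulated problem and the slackness conditions to justify that its stationary point is optimal. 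The Lagrangian route is more mechanical and would extend more readily to variants of the constraint set, but for this specific problem your duality-free argument is tighter and arguably preferable. Your handling of the tight budget constraint and of the degeneracies ($a_i = 0$ excluded by hypothesis, $x_i = 0$ harmless under H\"older) is also sound, and the algebraic verification $\sum_i a_i x_i^p = T^p S^{1-p}$ is correct.
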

\begin{proof}
Reformulating the problem to make it a minimization problem, that is,
our objective is $-\sum_{i=1}^K a_i x_i^p$, we have a convex problem.
Introducing Lagrange multipliers $\theta \ge 0$ and $\nu \in \R_+^K$
for the inequality constraints, we have Lagrangian
\begin{equation*}
  \mc{L}(x, \theta, \nu) =
  -\sum_{i=1}^K a_i x_i^p + \theta\left(\sum_{i=1}^K x_i - T\right)
  - \<\nu, x\>.
\end{equation*}
To find the infimum of the Lagrangian over $x$, we take derivatives
and see that $-a_i p x_i^{p - 1} + \theta - \nu_i = 0$, or that $x_i =
a_i^{-1 / (p - 1)} p^{-1 / (p - 1)}(\theta - \nu_i)^{1 / (p - 1)}$.
Since $a_i > 0$, the complimentary slackness conditions for $\nu$ are
satisfied with $\nu = 0$, and we see that $\theta$ is simply a
multiplier to force the sum $\sum_{i=1}^K x_i = T$. That is, $x_i
\propto a_i^{1 / (1 - p)}$, and normalizing appropriately, $x_i = T
a_i^{1 / (1 - p)} / \sum_{j=1}^K a_j^{1 / (1 - p)}$.  By plugging
$x_i$ into the objective, we have
\begin{equation*}
  \sum_{i=1}^K a_i x_i^p
  = T^p \frac{\sum_{i=1}^K a_i a_i^{p / (1 - p)}}{\left(\sum_{j=1}^K
    a_j^{1 / (1 - p)}\right)^p}
  = T^p \frac{\sum_{i=1}^K a_i^{1 / (1 - p)}}{\left(\sum_{j=1}^K
    a_j^{1 / (1 - p)}\right)^p}
  = T^p \left(\sum_{i=1}^K a_i^{1 / (1 - p)}\right)^{1 - p}
  \qedhere
\end{equation*}
\end{proof}

With the Lemma~\ref{lemma:maximize-selection} in hand, we proceed with the
proof of Theorem~\ref{theorem:no-separation}.  As before, we use the
shorthand $\beta = \max\{1/\alpha, 2\}$ throughout the proof to reduce
clutter. We also let $\numpulls_i(t)$ be the number of times class $i$ was
selected by time $t$.  Recalling the definition of the regret
from~\eqref{eqn:excess} and the result of the previous proposition, we have
with probability at least $1 - \const_1 / (TK^3)$
\begin{equation*}
  \frac{1}{T} \sum_{t=1}^T[R_{\isubt}^* + \pen_{\isubt}(Tn_{\isubt})]
  \leq R^* + \pen_{i^*}(Tn_{i^*}) + 2 e \const_2
  T^{- 1/\beta}\sqrt{\log T}\left(
  \sum_{i=1}^K\frac{C}{n_i}\right)^{1/\beta}.
\end{equation*}
Using the definition of $f_i^*$ as the minimizer of $\risk(f)$ over
$\F_i$, we use
Assumptions~\ref{assumption:uniformity}(\ref{assumption:uniformity-min-bandits})
and \ref{assumption:uniformity}(\ref{assumption:uniformity-fixed-bandits}) to
see that for fixed $\numpulls_i$, with probability at least $1 -
\const_1 / (TK)^4$,
\begin{equation}
  \emprisk[n_i \numpulls_i](\falg{i}{n_i \numpulls_i})
  \le \emprisk[n_i \numpulls_i](f_i^*) + \pen_i(n_i \numpulls_i)
  \le \risk(f_i^*) + \pen_i(n_i \numpulls_i)
  + \const_2 \sqrt{\frac{\log K}{n_i \numpulls_i}} + \const_2
  \sqrt{\frac{\log T}{n_i \numpulls_i}}.
  \label{eqn:high-prob-play}
\end{equation}
Denote by $f_t$ the output of $\alg$ on round $t$, that is, $f_t =
\falg{i_t}{n_{i_t} \numpulls_{i_t}(t)}$. By the previous
equation~\eqref{eqn:high-prob-play}, we can use a union bound and the
regret bound from Proposition~\ref{proposition:regret-bound} to
conclude that with probability at least $1 - \const_1 / (TK^3) -
\const_1 / (T^3K^3)$,
\begin{align}
  \lefteqn{\frac{1}{T}\sum_{t=1}^T \emprisk[n_{i_t} \numpulls_{i_t}(t)](f_t)
    + \pen_{i_t}(T n_{i_t})} \nonumber \\
  & \le \frac{1}{T} \sum_{t=1}^T \left[
    \gamma_i(n_{i_t} \numpulls_{i_t}(t))
    + \const_2 \sqrt{\frac{\log K}{n_{i_t} \numpulls_{i_t}(t)}}
    + \const_2 \sqrt{\frac{\log T}{n_{i_t} \numpulls_{i_t}(t)}}\right]
  + \frac{1}{T} \sum_{t=1}^T \left[
    \risk[i_t]^* + \pen_{i_t}(T n_{i_t}) \right] \nonumber \\
  & \le \frac{1}{T} \sum_{t=1}^T \left[
    \gamma_i(n_{i_t} \numpulls_{i_t}(t))
    + \const_2 \sqrt{\frac{\log K}{n_{i_t} \numpulls_{i_t}(t)}}
    + \const_2 \sqrt{\frac{\log T}{n_{i_t}
        \numpulls_{i_t}(t)}}\right] + \risk(f_i^*) + \pen_i(n_i
  \numpulls_i) \nonumber\\
  &\qquad\qquad\qquad\qquad\qquad\qquad\qquad\qquad\qquad\qquad+ 2 e
  \const_2 T^{-1/\beta} \sqrt{\log T} 
  \left(\sum_{i=1}^K \frac{C}{n_i}\right)^{1/\beta}.
  \label{eqn:emprisk-regret}
\end{align}

Now we again make use of
Assumption~\ref{assumption:uniformity}(\ref{assumption:uniformity-alg-bandits})
to note that with probability at least $1 - \const_1 / (T^4 K^4)$,
\begin{equation*}
  \risk(f_t)
  \le \emprisk[n_{i_t} \numpulls_{i_t}(t)](f_t)
  + \pen_{i_t}(n_{i_t} \numpulls_{i_t}(t))
  + \const_2 \sqrt{\frac{\log K}{n_{i_t} \numpulls_{i_t}(t)}}
  + \const_2 \sqrt{\frac{\log T}{n_{i_t} \numpulls_{i_t}(t)}}.
\end{equation*}
Using a union bound and applying the empirical risk bound
\eqref{eqn:emprisk-regret}, we drop the positive $\pen_{i_t}(T
n_{i_t})$ terms from the left side of the bound and see that with
probability at least $1 - \const_1 / (T K ^3) - 2 \const_1 / (T^3
K^3)$,
\begin{align}
  \frac{1}{T} \sum_{t=1}^T \risk(f_t)
  & \le
  \risk^* + \pen_{\opt}(T n_\opt)
  + 2 e \const_2 T^{-1/\beta} \sqrt{\log T} \left(\sum_{i=1}^K
  \frac{C}{n_i}\right)^{1/\beta} \nonumber \\
  & \quad ~ + \frac{2}{T} \sum_{t=1}^T \left[
    \gamma_i(n_{i_t} \numpulls_{i_t}(t))
    + \const_2 \sqrt{\frac{\log K}{n_{i_t} \numpulls_{i_t}(t)}}
    + \const_2 \sqrt{\frac{\log T}{n_{i_t} \numpulls_{i_t}(t)}}\right].
  \label{eqn:risk-regret}
\end{align}

Defining $\favg \defeq \frac{1}{T} \sum_{t=1}^T f_t$, we use Jensen's
inequality to see that $\risk(\favg) \le \frac{1}{T} \sum_{t=1}^T
\risk(f_t)$. Thus, all that remains is to control the last sum in
\eqref{eqn:risk-regret}. Using the definition of $\pen_i$, we replace
the sum with
\begin{align*}
  \lefteqn{\sum_{t=1}^T c_i n_{i_t}^{-\alpha} \numpulls_{i_t}(t)^{-\alpha}
    + n_{i_t}^{-\half} \numpulls_{i_t}(t)^{-\half} \const_2
    \left[\sqrt{\log K} + \sqrt{\log T}\right]} \\
  & \le \sum_{t=1}^T \left[c_i n_{i_t}^{-\alpha}
    + \const_2 n_{i_t}^{-\half} \sqrt{\log K} +
    \const_2 n_{i_t}^{-\half} \sqrt{\log T}\right]
  \numpulls_{i_t}(t)^{-\min\{\alpha, \half\}}.
\end{align*}
Noting that
\begin{equation*}
  \sum_{t : i_t = i} s_{i_t}(t)^{-\min\{\alpha, \half\}}
  = \sum_{t = 1}^{T_i(T)} t^{-1/\beta}
  \le C' \, T_i(T)^{1 - 1/\beta}
\end{equation*}
for some constant $C'$ dependent on $\alpha$, we can upper bound the
last sum in \eqref{eqn:risk-regret} by
\begin{align}
  \lefteqn{\sum_{t=1}^T \left[
      \pen_i(n_{i_t} \numpulls_{i_t}(t))
      + \const_2 \sqrt{\frac{\log K}{n_{i_t} \numpulls_{i_t}(t)}}
      + \const_2 \sqrt{\frac{\log T}{n_{i_t} \numpulls_{i_t}(t)}}\right]}
  \nonumber \\
  & \le C' \sum_{i=1}^K
  \left[c_i n_i^{-\alpha}
    + \const_2 n_i^{-\half} \sqrt{\log K} +
    \const_2 n_i^{-\half} \sqrt{\log T}\right]
  T_i(T)^{1 - 1/\beta}.
  \label{eqn:num-selects-to-bound}
\end{align}
Now that we have a sum of order $K$ with terms $T_i(T)$ that are
bounded by $T$, that is, $\sum_{i=1}^K T_i(T) = K$, we can apply
Lemma~\ref{lemma:maximize-selection}. Indeed, we set $p = 1 - 1/\beta
= 1 - \min\{\alpha, \half\}$ and $a_i = c_i n_i^{-\alpha} + \const_2
n_i^{-\half}[\sqrt{\log K} + \sqrt{\log T}]$ in the lemma, and we see
immediately that \eqref{eqn:num-selects-to-bound} is upper bounded by
\begin{equation*}
  C' \, T^{1 - \min\{\alpha, \half\}} \left(\sum_{i=1}^K
  \left[c_i n_i^{-\alpha}
    + \const_2 n_i^{-\half} \sqrt{\log K} +
    \const_2 n_i^{-\half} \sqrt{\log T}\right]^{\max\{1/\alpha, 2\}}
  \right)^{\min\{\alpha, \half\}}.
\end{equation*}
Dividing by $T$ completes the proof that the average $\favg$ has good
risk properties with probability at least $1 - \const_1/(TK^3) -
2\const_1(T^3 K^3) > 1 - 2\const_1 / (TK^3)$.

\bibliographystyle{abbrvnat}
\bibliography{bib}

\end{document}